\theoremstyle{plain}
 \newcommand\independent{\protect\mathpalette{\protect\independenT}{\perp}}
    \def\independenT#1#2{\mathrel{\rlap{$#1#2$}\mkern2mu{#1#2}}}
\newcommand\notindependent{\!\perp\!\!\!\!\not\perp\!}
\newtheorem*{rep@theorem}{\rep@title}
\newcommand{\newreptheorem}[2]{%
\newenvironment{rep#1}[1]{%
 \def\rep@title{#2 \ref{##1}}%
 \begin{rep@theorem}}%
 {\end{rep@theorem}}}
\newtheorem{proposition}{Proposition}
\newtheorem{assumption}{Assumption}
\newtheorem{lemma}{Lemma}
\newtheorem{example}{Example}
\newtheorem{remark}{Remark}
\newcommand{\rebuttal}[1]{}
\newdimen\arrowsize
\tikzset{
    block/.style={
        circle, draw, fill=white
        },
    myarrow/.style={
        single arrow,  
        draw, 
        single arrow head extend=2mm, minimum width=30pt,
        },    
    myar/.style={
        rounded corners=2pt, fill=black!20, 
        },
    mytri/.style={
        isosceles triangle, anchor=apex,
        isosceles triangle apex angle=90,
        minimum width=50pt
        },
    }
\icmltitlerunning{Causal Representation Learning from Multiple Distributions: A General Setting}
\begin{document}

\twocolumn[
\icmltitle{Causal Representation Learning from Multiple Distributions: A General Setting}



\icmlsetsymbol{equal}{*}

\begin{icmlauthorlist}
\icmlauthor{Kun Zhang}{equal,1,2}
\icmlauthor{Shaoan Xie}{equal,1}
\icmlauthor{Ignavier Ng}{equal,1}
\icmlauthor{Yujia Zheng}{1}
\end{icmlauthorlist}

\icmlaffiliation{1}{Carnegie Mellon University}
\icmlaffiliation{2}{Mohamed bin Zayed University of Artificial Intelligence}

\icmlkeywords{Machine Learning, ICML}

\vskip 0.3in
]



\printAffiliationsAndNotice{\icmlEqualContribution} 

\begin{abstract}
In many problems, the measured variables (e.g., image pixels) are just mathematical functions of the latent causal variables (e.g., the underlying concepts or objects). For the purpose of making predictions in changing environments or making proper changes to the system, it is helpful to recover the latent causal variables $Z_i$ and their causal relations represented by graph $\mathcal{G}_Z$. This problem has recently been known as causal representation learning. This paper is concerned with a general, completely nonparametric setting of causal representation learning from multiple distributions (arising from heterogeneous data or nonstationary time series), without assuming hard interventions behind distribution changes. 
We aim to develop general solutions in this fundamental case; as a by product, this helps see the unique benefit offered by other assumptions such as parametric causal models or hard interventions.  
We show that under the sparsity constraint on the recovered graph over the latent variables and suitable sufficient change conditions on the causal influences, interestingly, one can recover the moralized graph of the underlying directed acyclic graph, and the recovered latent variables and their relations are related to the underlying causal model in a specific, nontrivial way. In some cases, most latent variables can even be recovered up to component-wise transformations. Experimental results verify our theoretical claims.  
\end{abstract}

\section{Introduction}
Causal representation learning holds paramount significance across numerous fields, offering insights into intricate relationships within datasets. Most traditional methodologies (e.g., causal discovery) assume the observation of causal variables. This assumption, however reasonable, falls short in complex scenarios involving indirect measurements, such as electronic signals, image pixels, and linguistic tokens. Moreover, there are usually changes on the causal mechanisms in real-world, such as the heterogeneous or nonstationary data. Identifying the latent causal variables and their structures together with the change of the causal mechanism is in pressing need to understand the complicated real-world causal process. This has been recently known as causal representation learning \cite{Scholkopfetal21}. 

\looseness=-1
It is worth noting that identifying only the latent causal variables but not the structure among them, is already a considerable challenge. In the i.i.d. case, different latent representations can explain the same observations equally well, while not all of them are consistent with the true causal process. For instance, nonlinear independent component analysis (ICA), where a set of observed variables $X$ is represented as a mixture of independent latent variables $Z$, i.e, $X = g(Z)$, is known to be unidentifiable without additional assumptions \citep{Comon94}. While being a strictly easier task since there are no relations among latent variables, the identifiability of nonlinear ICA often relies on conditions on distributional assumptions (non-i.i.d. data) \citep{hyvarinen2016unsupervised, hyvarinen2017nonlinear, hyvarinen2019nonlinear, khemakhem2020variational, sorrenson2020disentanglement, lachapelle2021disentanglement, halva2020hidden, halva2021disentangling, yao2022temporally} or specific functional constraints \citep{Comon94, hyvarinen1999nonlinear, taleb1999source, buchholz2022function, zhengidentifiability, zheng2023generalizing}.

\looseness=-1
To generalize beyond the independent latent variables and achieve causal representation learning (recovering the latent variables and their causal structure), recent advances either introduce additional experiments in the forms of interventional or counterfactual data, or place more restrictive parametric or graphical assumptions on the latent causal model. For observational data, various graphical conditions have been proposed together with parametric assumptions such as linearity \citep{Silva06, cai2019triad, xie2020generalized, xie2022identification, adams2021identification, huang2022latent} and discreteness \citep{kivva2021learning}. For interventional data, single-node interventions have been considered together with parametric assumptions (e.g., linearity) on the mixing function \citep{varici2023score, ahuja2023interventional, buchholz2022function} or also on the latent causal model \cite{squires2023linear}. The nonparametric settings for both the mixing function and causal model have been explored by \citep{brehmer2022weakly, von2023nonparametric, jiang2023learning} together with additional assumptions on counterfactual views \citep{brehmer2022weakly}, distinct paired interventions \citep{von2023nonparametric}, and graphical conditions \citep{jiang2023learning}.

Despite the exciting developments in the field, one fundamental question pertinent to causal representation learning from multiple distributions remains unanswered--in the most general situation, without assuming parametric models on the data-generating process or the existence of hard interventions in the data, what information of the latent variables and the latent structure can be recovered? This paper attempts to provide an answer to it, which, surprisingly, shows that each latent variable can be recovered up to clearly defined indeterminacies. It suggests what we can achieve in the general case and furthermore, what unique contribution the typical assumptions that are currently made in causal representation learning from multiple distributions make towards complete identifiability of the latent variables (up to component-wise transformations). This may make it possible to figure out what minimal assumptions are needed to achieve complete identifiability, given partial knowledge of the system.\looseness=-1

\textbf{Contributions.} \ \ 
Concretely, as our contributions, we show that under the sparsity constraint on the recovered graph over the latent variables and suitable sufficient change conditions on the causal influences, interestingly, one can recover the moralized graph of the underlying directed acyclic graph (\cref{theorem:identifiabiltiy_markov_network}), and the recovered latent variables and their relations are related to the underlying causal model in a specific, nontrivial way (\cref{theorem:identifiabiltiy_causal_variables})--each latent variables is recovered as a function of itself and its so-called {\it intimate neighbors} in the Markov network implied by the true causal structure over the latent variables. Depending on the properties of the true causal structure over latent variables, the set of intimate neighbors might even be empty, in which case the corresponding latent variables can be recovered up to component-wise transformations (Remark \ref{remark:component_wise_identifiability}). Lastly, we show how the recovered moralized graph relates to the underlying causal graph under new relaxations of faithfulness assumption (\cref{proposition:moral_graph}).  
Simulation studies verified our theoretical findings.\looseness=-1

\section{Problem Setting}

\begin{figure}[t]
\centering
	{\hspace{0cm}\begin{tikzpicture}[scale=.75, line width=0.5pt, inner sep=0.2mm, shorten >=.1pt, shorten <=.1pt]
		\draw (0, 0) node(2) [circle, draw]  {{\footnotesize\,${Z}_4$\,}};
		\draw (-2, 0) node(1)[circle, draw]  {{\footnotesize\,${Z}_2$\,}};
  \draw (-1.4, 0.8) node(8)[circle, draw]  {{\footnotesize\,${Z}_3$\,}};
		\draw (2, 0) node(3)[circle, draw]  {{\footnotesize\,${Z}_5$\,}};
		\draw (-4, 0) node(5)[circle, draw]  {{\footnotesize\,$Z_1$\,}};
		\draw (-4.5, 1.7) node(9)  {{\footnotesize\,$\theta_1$\,}};
		\draw (-1.8, 1.7) node(10)  {{\footnotesize\,$\theta_3$\,}};
		\draw (-2.5, 1.7) node(11)  {{\footnotesize\,$\theta_{2}$\,}};
		\draw (-0.5, 1.7) node(12)  {{\footnotesize\,$\theta_4$\,}};
		\draw (1.5, 1.7) node(13)  {{\footnotesize\,$\theta_5$\,}};
		\draw[-arcsq] (5) -- (1); 
		\draw[-arcsq] (1) -- (2); 
		\draw[-arcsq] (8) -- (2); 
		\draw[-arcsq] (2) -- (3);
		\draw[-arcsq, dashed] (9) -- (5);
		\draw[-arcsq, dashed] (11) -- (1);
		\draw[-arcsq, dashed] (10) -- (8);
		\draw[-arcsq, dashed] (12) -- (2);
  	\draw[-arcsq, dashed] (13) -- (3);
\node[rectangle, fill=gray!20, inner sep=2.3mm, draw=black!100, opacity=.4, fit = (9) (5) (3) (13), yshift = .5mm](aa) {};
\node[myarrow, anchor=tip, minimum height=32pt, rotate=-90] at ([yshift=-42pt]aa.south) (bb) {$g$};
\node at ([xshift=9pt,yshift=-34pt]bb.south) {$X$};
		\end{tikzpicture}}
	\caption{The generating process for each latent causal variable $Z_i$ changes, governed by a latent factor $\theta_i$. The observed variables $X$ are generated by $X = g(Z)$ with a nonlinear mixing function $g$.}

	\vspace{-0.3em}
	\label{fig:setting}
\end{figure}
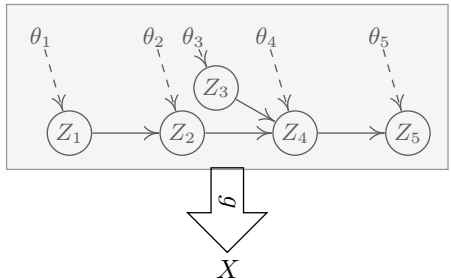

Let $X=(X_1,\dots,X_d)$ be a $d$-dimensional vector that represents the observed variables (e.g., image pixels). We assume that they are generated by $n$ latent causal variables $Z=(Z_1,\dots,Z_n)$ via a nonlinear mixing function $g:\mathbb{R}^n\rightarrow\mathbb{R}^d$ ($d \geq n$), which is a $\mathcal{C}^2$-diffeomorphism onto its image $\mathcal{X}\subseteq \mathbb{R}^d$. Furthermore, the variables $Z_i$'s are assumed to follow a structural equation model (SEM)~\citep{Pearl00}. Putting them together, the underlying data generating process can be written as
\begin{equation}\label{eq:data_generating_process}
\underbrace{X=g(Z)}_{\text{Nonlinear mixing}}, \quad \underbrace{Z_i = f_i(\textrm{PA}(Z_i), \epsilon_i; \theta_i),i=1,\dots,n}_{\text{Latent SEM}}.
\end{equation}
where $\textrm{PA}(Z_i)$ denotes the parents of variable $Z_i$, $\epsilon_i$'s are exogenous noise variables that are mutually independent, and $\theta_i$ denotes the latent (changing) factor (or effective parameters) associated with each model. Here, the data generating process of each latent variable $Z_i$ may change, e.g., across domains or over time, governed by the corresponding latent factor $\theta_i$; it is commonplace to encounter such changes in causal mechanisms in practice (arising from heterogeneous data or nonstationary time series). In addition, interventional data can be seen as a special type of change, which qualitatively restructure the causal relations. As their names suggest, we assume that the variables $X$ are observed, while the latent causal variables $Z$ and latent factors $\theta=(\theta_1,\dots,\theta_n)$ are unobserved. 

Let $P_{X;\theta}$ and $P_{Z;\theta}$ be the distributions of $X$ and $Z$, respectively, and their probability density functions be $p_X(X;\theta)$ and $p_Z(Z;\theta)$, respectively.\footnote{With a slight abuse of notation, we use the same capital letters $X$ and $Z$ to denote the variables and their values when the context is clear.} To lighten the notation, we drop the subscript in the density when the context is clear. The latent SEM in Eq. (\ref{eq:data_generating_process}) induces a causal graph $\mathcal{G}_Z$ with vertices $\{Z_i\}_{i=1}^n$ and edges $Z_j\rightarrow Z_i$ if and only if  {$Z_j\in\textrm{PA}(Z_i)$}. We assume that $\mathcal{G}_Z$ is acyclic, i.e., a directed acyclic graph (DAG). This implies that the distribution of variables $Z$ satisfy the Markov property w.r.t. DAG $\mathcal{G}_Z$ \citep{Pearl00}, i.e., $p(Z;\theta)=\prod_{i=1}^n p(Z_i\,|\,\textrm{PA}(Z_i);\theta_i)$. We provide an example of the data generating process in Eq.~(\ref{eq:data_generating_process}) and its corresponding latent DAG $\mathcal{G}_Z$ in Figure \ref{fig:setting}. Given samples of the observed variables $X$ arising from multiple distributions (or domains), say $\Theta=\{\theta^{(1)},\dots,\theta^{(m)}\}$, our goal is to recover the latent causal variables $Z$ and their causal relations up to minor indeterminacies. \looseness=-1

\section{Learning Causal Representations from Multiple Distributions}\label{sec:identifiability}
\vspace{-0.11em}

In this section, we provide theoretical results to show how one is able to recover the underlying latent causal variables and their causal relations up to certain indeterminacies from multiple distributions. Specifically, we show that under sparsity constraint on the recovered graph over the latent variables and suitable sufficient change conditions on the causal influences, the recovered latent variables are related to the true ones in a specific, nontrivial way. Such results serve as the foundation of our algorithm in Section \ref{sec:method}.

To start with, we estimate a model $(\hat{g}, \hat{f},p_{\hat{Z}},\hat{\Theta})$ that assumes the same data generating process as in Eq. (\ref{eq:data_generating_process}) and matches the true distribution of $X$ in different domains:
\begin{equation}\label{eq:matched_distribution}
\hspace{-0.9em}p_X(X';\theta^{(u)})=p_{\hat{X}}(X';\hat{\theta}^{(u)}), \:\,\forall \ \theta^{(u)}\in\Theta, \ X'\in \mathcal{X}^{(u)},
\end{equation}
where $\theta^{(u)}$ denotes the latent factor in the $u$-th domain, and $\mathcal{X}^{(u)}$ is the image of function $g$ in the $u$-th domain. Here, $X$ and $\hat{X}$ are generated by the true
model $(g, f,p_{Z},\Theta)$ and the estimated model $(\hat{g}, \hat{f},p_{\hat{Z}},\hat{\Theta})$, respectively.

A key ingredient of our results is the Markov network that represents conditional dependencies among random variables via an undirected graph. Let $\mathcal{M}_Z$ be the Markov network over variables $Z$, i.e., with vertices $\{Z_i\}_{i=1}^n$ and edges $\{Z_i,Z_j\}\in\mathcal{E}(\mathcal{M}_Z)$ if and only if $Z_i \notindependent Z_j \mid Z_{[n] \backslash \{i,j\}}$.\footnote{We use $[n]$ to denote $\{1,\dots,n\}$ and $Z_{[n] \backslash \{i,j\}}$ to denote $\{Z_i\}_{i=1}^{n}\setminus\{Z_i,Z_j\}$.} Also, we denote by $|\mathcal{M}_Z|$ the number of undirected edges in the Markov network. In Section \ref{sec:recover_markov_network}, apart from showing how to estimate the underlying latent causal variables up to certain indeterminacies, we also show that such latent Markov network $\mathcal{M}_Z$ can be recovered up to isomorphism. To achieve so, we make use of the following property (assuming that $p_Z$ is twice differentiable):
\vspace{-0.08em}
\begin{equation}
\label{eq:cross_de}
Z_i \independent Z_j \mid Z_{[n] \backslash \{i,j\}} \iff \frac{\partial^2 \log p(Z;\theta)}{\partial{Z_i} \partial{Z_j}} = 0.
\vspace{-0.08em}
\end{equation}
Such a connection between pairwise conditional independence and cross derivatives of the density function has been noted by \citet{lin1997factorizing} and utilized in Markov network learning for observed variables \citep{zheng2023generalized}. With the recovered latent Markov network structure, we provide results in Section \ref{sec:recover_dag} to show how it relates to the moralized graph of true latent causal DAG $\mathcal{G}_Z$, by exploiting a specific type of faithfulness assumption that is considerably weaker than the standard faithfulness assumption used in the literature of causal discovery \citep{Spirtes00}.

\vspace{-0.15em}
\subsection{Recovering Latent Causal Variables and Latent Markov Network}\label{sec:recover_markov_network}
\vspace{-0.05em}
We consider a general, completely nonparametric setting of causal representation learning from multiple distributions. Specifically, we show how one can recover the latent causal variables and the Markov network structure among them up to minor indeterminacies, by leveraging sparsity constraint and sufficient change conditions on the causal mechanisms. Notably, in some cases, most latent variables can even be recovered up to component-wise transformations.

We start with the following result that provides information about the derivative of true latent causal variables $Z$ with respect to the estimated ones $\hat{Z}$, according to their corresponding Markov networks $\mathcal{M}_{Z}$ and $\mathcal{M}_{\hat{Z}}$. Result of this form is often used in the proof of nonlinear ICA to obtain identifiability of component-wise nonlinear transformations~\citep{hyvarinen2016unsupervised,hyvarinen2019nonlinear}. At the same time, our result here is different from that of nonlinear ICA as it allows for causal relations among latent variables. This result serves as the backbone of our further identifiability results in this section.

\begin{restatable}{proposition}{ThmRelationMarkovNetwork}\label{proposition:relation_markov_network}
Let the observations be sampled from the data generating process in Eq. (\ref{eq:data_generating_process}), and $\mathcal{M}_Z$ be the Markov network over $Z$. Suppose  the following assumptions hold:
\begin{itemize}
\item A1 (Smooth and positive density): The probability density function of latent causal variables, i.e., $p_Z$, is twice continuously differentiable and positive in $\mathbb{R}^n$.
\item A2 (Sufficient changes): For each value of $Z$, there exist $2n+|\mathcal{M}_Z|+1$ values of $\theta$, i.e., $\theta^{(u)}$ with $u=0,\dots,2n+|\mathcal{M}_Z|$, such that the vectors $w(Z, {u})-w(z,0)$ with $u=1,\dots,2n+|\mathcal{M}_Z|$ are linearly independent, where vector $w(Z, {u})$ is defined as follows:\footnote{We denote by $\oplus$ the vector concatenation symbol. Also, the order in the mixed partial derivatives can be interchanged.}
\begin{flalign*}
&\hspace{-0.7em} w(Z, {u})=\left(\frac{\partial \log p(Z;\theta^{(u)})}{\partial  Z_i}\right)_{i\in[n]}\\
&\hspace{-0.7em} \quad\qquad\qquad \oplus\left(\frac{\partial^2 \log p(Z;\theta^{(u)})}{\partial  Z_i^2}\right)_{i\in[n]}\\
&\hspace{-0.7em} \quad\qquad\qquad \oplus \left(\frac{\partial^2 \log p(Z;\theta^{(u)})}{\partial  Z_i \partial  Z_j}\right)_{\{Z_i,Z_j\}\in\mathcal{E}(\mathcal{M}_Z),\,i<j}.
\end{flalign*}
\end{itemize}
\vspace{-0.05em}
Suppose that we learn $(\hat{g}, \hat{f},p_{\hat{Z}},\hat{\Theta})$ to achieve Eq. (\ref{eq:matched_distribution}). Then, for every pair of estimated latent variables $\hat{Z}_k$ and $\hat{Z}_l$ that are {\bf not adjacent in the Markov network} $\mathcal{M}_{\hat{Z}}$ over $\hat{Z}$, we have the following statements:
\vspace{-0.05em}
\begin{enumerate}[label=(\alph*)]
\item For each true latent causal variable $Z_i$, we have
\begin{equation}\label{eq:relation_1}
\frac{\partial  Z_i}{\partial \hat{Z}_k}\frac{\partial  Z_i}{\partial \hat{Z}_l}=0.
\end{equation}
\item For each pair of true latent causal variables $Z_i$ and $Z_j$ that are adjacent in the Markov network $\mathcal{M}_Z$, we have 
\begin{equation}\label{eq:relation_2}
\frac{\partial  Z_i}{\partial \hat{Z}_k}\frac{\partial  Z_j}{\partial \hat{Z}_l}=0.
\end{equation}
\end{enumerate}
\end{restatable}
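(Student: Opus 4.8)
The plan is to exploit the distribution-matching condition \eqref{eq:matched_distribution} together with the change of variables formula, differentiate twice in $\hat Z$, and then use the sufficient-change assumption A2 to kill all the $\theta$-dependent terms, leaving a system of algebraic constraints on the Jacobian of the map $Z \mapsto \hat Z$. Concretely, since both models realize the same observed density, composing through $g$ and $\hat g$ gives a $\mathcal{C}^2$-diffeomorphism $h=\hat g^{-1}\circ g$ with $\hat Z = h(Z)$, and the change of variables formula yields
\begin{equation*}
\log p(Z;\theta^{(u)}) = \log p_{\hat Z}(h(Z);\hat\theta^{(u)}) + \log |\det J_h(Z)|,
\end{equation*}
where the Jacobian log-determinant term does not depend on $u$. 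I would then take $\partial/\partial \hat Z_k$ followed by $\partial/\partial \hat Z_l$ of both sides — working in the $\hat Z$ coordinates on the right is cleanest — and use the Markov factorization of $p_{\hat Z}$ w.r.t. $\mathcal{G}_{\hat Z}$ (hence w.r.t. $\mathcal{M}_{\hat Z}$). Because $\hat Z_k$ and $\hat Z_l$ are non-adjacent in $\mathcal{M}_{\hat Z}$, the characterization \eqref{eq:cross_de} gives $\partial^2 \log p_{\hat Z}(\hat Z;\hat\theta^{(u)})/\partial \hat Z_k\partial \hat Z_l = 0$ for every $u$, so the second mixed derivative of the left side vanishes identically.

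Expanding the left side $\partial^2 \log p(Z;\theta^{(u)})/\partial \hat Z_k \partial \hat Z_l$ via the chain rule produces two kinds of terms: (i) first-derivative terms $\sum_i \frac{\partial \log p(Z;\theta^{(u)})}{\partial Z_i}\cdot \frac{\partial^2 Z_i}{\partial \hat Z_k \partial \hat Z_l}$, and (ii) second-derivative terms $\sum_{i,j}\frac{\partial^2 \log p(Z;\theta^{(u)})}{\partial Z_i \partial Z_j}\cdot \frac{\partial Z_i}{\partial \hat Z_k}\frac{\partial Z_j}{\partial \hat Z_l}$. By \eqref{eq:cross_de} applied to $\mathcal{M}_Z$, the cross terms $i\neq j$ in (ii) survive only when $\{Z_i,Z_j\}\in\mathcal{E}(\mathcal{M}_Z)$; the diagonal terms $i=j$ in (ii) and all terms in (i) are always present. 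So the vanishing identity becomes, for every $u$, a linear combination of the $2n+|\mathcal{M}_Z|$ quantities $\bigl(\frac{\partial \log p}{\partial Z_i}\bigr)_i$, $\bigl(\frac{\partial^2 \log p}{\partial Z_i^2}\bigr)_i$, $\bigl(\frac{\partial^2 \log p}{\partial Z_i\partial Z_j}\bigr)_{\{Z_i,Z_j\}\in\mathcal{E}(\mathcal{M}_Z)}$ — i.e. exactly the entries of $w(Z,u)$ — with coefficients built from the Jacobian and Hessian of $h^{-1}$ evaluated at $Z$, coefficients that are independent of $u$. Subtracting the $u=0$ instance from the $u\ge 1$ instances and invoking A2 (linear independence of the $w(Z,u)-w(Z,0)$) forces every coefficient to be zero.

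Setting the coefficients of $\frac{\partial^2 \log p}{\partial Z_i^2}$ to zero gives $\frac{\partial Z_i}{\partial \hat Z_k}\frac{\partial Z_i}{\partial \hat Z_l}=0$ for all $i$, which is statement (a); setting the coefficients of the cross terms $\frac{\partial^2 \log p}{\partial Z_i\partial Z_j}$ for $\{Z_i,Z_j\}\in\mathcal{E}(\mathcal{M}_Z)$ to zero gives $\frac{\partial Z_i}{\partial \hat Z_k}\frac{\partial Z_j}{\partial \hat Z_l} + \frac{\partial Z_j}{\partial \hat Z_k}\frac{\partial Z_i}{\partial \hat Z_l}=0$, and combining this symmetrized relation with (a) (which already kills the "square" obstructions) yields \eqref{eq:relation_2}, statement (b). The main obstacle I anticipate is purely bookkeeping: carefully organizing the chain-rule expansion so that the $u$-independent coefficients are correctly identified and ensuring the count of distinct surviving derivative quantities matches the $2n+|\mathcal{M}_Z|$ in A2 — in particular, justifying that the diagonal Hessian entries $\partial^2\log p/\partial Z_i^2$ are generically nonzero and do not coincide with any first-derivative term, so that A2's linear-independence hypothesis can genuinely be applied term by term. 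A minor subtlety is deducing \eqref{eq:relation_2} from the symmetrized equation together with (a); this is a short algebraic argument (if $\frac{\partial Z_i}{\partial \hat Z_k}\frac{\partial Z_i}{\partial \hat Z_l}=0$ and $\frac{\partial Z_j}{\partial \hat Z_k}\frac{\partial Z_j}{\partial \hat Z_l}=0$, then the mixed products must also vanish, considering the cases of which partials are zero).
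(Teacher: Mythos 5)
Your proposal is correct and follows essentially the same route as the paper's proof: apply the change of variables through the diffeomorphism relating $Z$ and $\hat Z$, use Eq.~\eqref{eq:cross_de} for the non-adjacent pair $\hat Z_k,\hat Z_l$ to make the mixed log-density derivative vanish, expand by the chain rule (with cross terms surviving only on edges of $\mathcal{M}_Z$), subtract the $u=0$ equation so the $u$-independent log-determinant term cancels, invoke A2 to zero out all coefficients, and then obtain (b) from the symmetrized relation together with (a) by the same short case analysis. The only difference is cosmetic (you parametrize via $h=\hat g^{-1}\circ g$ rather than $v=g^{-1}\circ\hat g$), and your worry about the Hessian entries being ``generically nonzero'' is unnecessary: A2's linear independence of the difference vectors $w(Z,u)-w(Z,0)$ alone forces the $u$-independent coefficient vector to vanish.
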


The proof is provided in Appendix \ref{sec:proof:relation_markov_network}, which leverages the property of Markov network in Eq.~\eqref{eq:cross_de}. Assumption A2 can be viewed as suitable sufficient change conditions on the causal influences across different domains. It is worth noting that the requirement of a sufficient number of domains has been commonly adopted in the literature (e.g., see \citet{hyvarinen2023nonlinear} for a recent survey), such as visual disentanglement \citep{khemakhem2020ice}, domain adaptation \citep{kong2022partial}, video analysis \citep{yao2021learning}, and image-to-image translation \citep{xie2022identification}. Also, we do not specify exactly how to learn $(\hat{g}, \hat{f},p_{\hat{Z}},\hat{\Theta})$ to achieve Eq. (\ref{eq:matched_distribution}), and leave the door open for different approaches to be used, such as normalizing flow and variational approaches. For example, we adopt a variational approach in Section \ref{sec:method}.

In \cref{theorem:partial_disentanglement}, Eqs. \eqref{eq:relation_1} and \eqref{eq:relation_2} hold for every sample of $Z$. Intuitively, one may expect that Eq.~\eqref{eq:relation_1} implies either $\frac{\partial  Z_i}{\partial \hat{Z}_k}=0$ for all samples of $Z$, or $\frac{\partial  Z_i}{\partial \hat{Z}_l}=0$ for all samples, i.e., the zero entries in the Jacobian matrix  (of the function from $\hat{Z}$ to $Z$)  remain in the same positions across different samples. If this conclusion holds true, it indicates that the true latent variable $Z_i$ cannot be a function of both estimated latent variables $\hat{Z}_k$ and $\hat{Z}_l$, which is helpful for disentanglement. The same reasoning applies to Eq. \eqref{eq:relation_2}. In fact, similar conclusion can often be obtained in the proof of identifiability for nonlinear ICA~\citep{hyvarinen2019nonlinear}, by leveraging the continuity and invertibility of the Jacobian matrix.\looseness=-1

However, this conclusion in general does not hold in our setting (that allows for causal relations among latent variables $Z$) without any constraint on the sparsity of recovered Markov network, for which counterexamples exist. The reason is that each of Eqs.~\eqref{eq:relation_1} and~\eqref{eq:relation_2} correspond to a pair of recovered latent variables $\hat{Z}$ that are not adjacent in the Markov network $\mathcal{M}_{\hat{Z}}$, and can be viewed as a specific form of restriction on the Jacobian matrix (of the function from $\hat{Z}$ to $Z$). When the recovered Markov network is relatively dense, less restrictions are imposed on the Jacobian matrix, and thus there are possibilities for the aforementioned zero entries to switch positions across different samples. Interestingly, incorporating sparsity constraint on the recovered Markov network during estimation can help eliminate these possibilities, formally described below.

\begin{restatable}[Relations among true and recovered latent causal variables]{theorem}{ThmPartialDisentanglement}\label{theorem:partial_disentanglement}
Let the observations be sampled from the data generating process in Eq. (\ref{eq:data_generating_process}), and $\mathcal{M}_Z$ be the Markov network over $Z$. Suppose that Assumptions A1 and A2 from Theorem 1 hold. Suppose also that we learn $(\hat{g}, \hat{f},p_{\hat{Z}},\hat{\Theta})$ to achieve Eq. (\ref{eq:matched_distribution})  {with
the minimal number of edges of the Markov network $\mathcal{M}_{\hat{Z}}$ over $\hat{Z}$}. Then, for every pair of estimated latent variables $\hat{Z}_k$ and $\hat{Z}_l$ that are {\bf not adjacent in the Markov network} $\mathcal{M}_{\hat{Z}}$ over $\hat{Z}$, we have the following statements:
\begin{enumerate}[label=(\alph*)]
\item Each true latent causal variable $Z_i$ is a function of at most one of $\hat{Z}_k$ and $\hat{Z}_l$.
\item For each pair of true latent causal variables $Z_i$ and $Z_j$ that are adjacent in the Markov network $\mathcal{M}_Z$ over $Z$, at most one of them is a function of  $\hat{Z}_k$ or $\hat{Z}_l$.
\end{enumerate}
\end{restatable}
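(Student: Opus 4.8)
The plan is to promote the pointwise identities of \cref{proposition:relation_markov_network} into genuine functional-dependence statements. Write $\mathbf H$ for the Jacobian of the change of variables $\hat Z\mapsto Z$ (equivalently of $g^{-1}\circ\hat g$), an everywhere-invertible $\mathcal C^1$ matrix field under A1 and the diffeomorphism assumptions, with $(i,k)$ entry $\partial Z_i/\partial\hat Z_k$; for each $i$ let $\widetilde S_i=\{k:\partial Z_i/\partial\hat Z_k\not\equiv 0\}$ be the \emph{generic support} of row $i$. In this language statement (a) asserts that each $\widetilde S_i$ is a clique of $\mathcal M_{\hat Z}$, and together with (a) statement (b) asserts that $\widetilde S_i\cup\widetilde S_j$ is a clique of $\mathcal M_{\hat Z}$ whenever $\{Z_i,Z_j\}\in\mathcal E(\mathcal M_Z)$. \cref{proposition:relation_markov_network} already gives, for every nonadjacent pair $\{k,l\}$ of $\mathcal M_{\hat Z}$, that $(\partial Z_i/\partial\hat Z_k)(\partial Z_i/\partial\hat Z_l)\equiv 0$ and $(\partial Z_i/\partial\hat Z_k)(\partial Z_j/\partial\hat Z_l)\equiv 0$ for the relevant indices—so the two supports are disjoint at every single point—and what remains is to exclude the case in which the zero pattern \emph{switches} over the sample space. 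The standard nonlinear-ICA move (continuity plus invertibility of the Jacobian, cf.\ \citep{zhengidentifiability}) does not close this gap here, since $\mathbf H$ need not have a permutation support pattern; the new ingredient is the edge-minimality of $\mathcal M_{\hat Z}$.

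First I would argue by contradiction. Suppose (a) fails: for some $\{k,l\}\notin\mathcal E(\mathcal M_{\hat Z})$ and some $i$, both $\partial Z_i/\partial\hat Z_k$ and $\partial Z_i/\partial\hat Z_l$ are not identically zero and hence (by the Proposition) vanish off disjoint nonempty open sets $U_k,U_l$. The goal is to build a competing model $(\hat g',\hat f',p_{\hat Z'},\hat\Theta')$ that still satisfies \eqref{eq:matched_distribution} in every domain but with $|\mathcal M_{\hat Z'}|<|\mathcal M_{\hat Z}|$, contradicting minimality. I would take $\hat g'=\hat g\circ\phi$, where $\phi$ is a $\mathcal C^2$-diffeomorphism of the latent space that equals the identity away from the switching locus, is triangular with respect to a topological order of the estimated DAG $\hat{\mathcal G}_{\hat Z}$ (so that the pushed-forward density family $p_{\hat Z'}$ again factorizes as a latent SEM), and inside the locus reroutes the dependence of $Z_i$ onto a single coordinate; replacing $\hat\Theta$ by its push-forward under $\phi^{-1}$ leaves $P_X$ untouched, so this is an admissible model. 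Using $\log p_{\hat Z'}=\log p_{\hat Z}\circ\phi+\log|\det D\phi|$ and Eq.~\eqref{eq:cross_de}, one checks that $\phi$ can be chosen so as not to couple coordinates that were previously conditionally independent, so that no edge of $\mathcal M_{\hat Z'}$ is created while the ``bridge'' around $\{k,l\}$ induced by the switch is destroyed. Statement (b) follows from the same template with $Z_i,Z_j$ adjacent in $\mathcal M_Z$ and \eqref{eq:relation_2} in place of \eqref{eq:relation_1}. Since the edge count is a nonnegative integer, iterating removes every switch.

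The main obstacle is the construction and control of $\phi$. One must simultaneously ensure: (i) that $\phi$ is a genuine global $\mathcal C^2$-diffeomorphism, well-behaved across the common boundary of $U_k$ and $U_l$, where the first partials of $Z_i$ in the $k$- and $l$-directions degenerate together—here A1 (strict positivity and $\mathcal C^2$-smoothness of $p_Z$, inherited by $p_{\hat Z}$) is what keeps the relevant log-density derivatives finite and manipulable; (ii) that $\phi$ is triangular/structured enough that the pushed-forward model is still of the assumed form and that its log-Jacobian term introduces no new cross second derivatives; and (iii) the bookkeeping that $|\mathcal M_{\hat Z'}|$ strictly drops, i.e.\ that every cross second log-derivative forced nonzero in the new model was already nonzero in the old one. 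Assumption A2 is what makes ``number of edges'' a well-posed quantity to minimize in the first place: the sufficient-change condition forces the vanishing pattern of the entries of $w(Z,u)$ to be stable across domains, so edges of the Markov networks cannot be created or erased merely by varying the domain index.
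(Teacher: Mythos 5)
Your reduction of (a) and (b) to ``the zero pattern of the Jacobian cannot switch across the sample space'' matches the paper's reading of the problem, but the way you propose to exclude switching has a genuine gap, and it is not how the paper argues. Your entire contradiction rests on constructing a competing model $\hat g'=\hat g\circ\phi$ that still achieves Eq.~\eqref{eq:matched_distribution} in every domain yet has strictly fewer Markov-network edges. That construction is never carried out: you only list the properties $\phi$ would need (global $\mathcal{C}^2$-diffeomorphism, ``triangular with respect to a topological order of the estimated DAG''---note no estimated DAG is even defined at this stage, only a Markov network---no new cross second derivatives of $\log p_{\hat Z'}$ in any domain, and a strict drop in edge count). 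There is no argument that a switching zero pattern of $\partial Z/\partial\hat Z$ actually corresponds to a removable edge of $\mathcal{M}_{\hat Z}$, nor that the log-Jacobian term $\log|\det D\phi|$ can be controlled simultaneously for all $\theta^{(u)}$ so that conditional independencies are preserved. Since this is precisely the hard part, the proof as proposed does not go through. Your closing claim about A2 is also off: A2 is a linear-independence (sufficient-change) condition used to isolate the coefficient equations in \cref{proposition:relation_markov_network}; it is not what makes the edge count well posed.

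The paper closes the gap by a different and much more elementary mechanism, in which edge-minimality is used only once and indirectly: it yields \cref{theorem:identifiabiltiy_markov_network}, i.e.\ a permutation $\pi$ under which $\mathcal{M}_{\hat Z_\pi}$ is identical to $\mathcal{M}_Z$ and the diagonal entries $\partial Z_i/\partial\hat Z_{\pi(i)}$ are nonzero. Assuming a switch, one considers the scalar function
\begin{equation*}
\phi(\hat Z)=\sum_{j\in N_{Z_i}\cup\{i\}}\Bigl(\tfrac{\partial Z_j}{\partial\hat Z_{\pi(i)}}\Bigr)^{2}-\sum_{l\in S_{Z_i}}\sum_{j\in N_{Z_i}\cup\{i\}}\Bigl(\tfrac{\partial Z_j}{\partial\hat Z_{\pi(l)}}\Bigr)^{2},
\end{equation*}
shows via \cref{proposition:relation_markov_network} that it is strictly positive at one sample point and strictly negative at another, and invokes the Intermediate Value Theorem to get a point where it vanishes; at that point \cref{proposition:relation_markov_network} forces an entire $(|S_{Z_i}|+1)\times(|N_{Z_i}|+1)$ block of the Jacobian to be zero, so by \cref{lemma:zero_submatrix} the Jacobian is singular there, contradicting that $\hat Z\mapsto Z$ is a diffeomorphism. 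So the contradiction is with invertibility, not with minimality, and no auxiliary model needs to be built. If you want to salvage your route, you would have to supply the explicit $\phi$ and verify distribution matching and edge reduction in all domains, which is substantially harder than the IVT argument and is not obviously possible.
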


The proof can be found in Appendix \ref{sec:proof:partial_disentanglement}. The above result sheds light on how each pair of the estimated latent variables $\hat{Z}_k$ and $\hat{Z}_l$ that are not adjacent in Markov network $\mathcal{M}_{\hat{Z}}$ relate to the true latent causal variables $Z$, thus providing information for further disentanglement. Furthermore, note that a trivial solution would be a complete graph over $\hat{Z}$ without any constraint on the estimating process. Apart from providing information for disentanglement, we show below that incorporating sparsity constraint on the recovered Markov network also helps avoid this trivial solution and recover the underlying Markov network up to isomorphism. The proof is given in Appendix \ref{sec:proof:identifiabiltiy_markov_network}.
\begin{restatable}[Identifiability of latent Markov network]{theorem}{ThmIdentifiabilityMarkovNetwork}\label{theorem:identifiabiltiy_markov_network}
Let the observations be sampled from the data generating process in Eq. (\ref{eq:data_generating_process}), and $\mathcal{M}_Z$ be the Markov network over $Z$. Suppose that Assumptions A1 and A2 from Theorem 1 hold. Suppose also that we learn $(\hat{g}, \hat{f},p_{\hat{Z}},\hat{\Theta})$ to achieve Eq. (\ref{eq:matched_distribution})  {with
the minimal number of edges of the Markov network $\mathcal{M}_{\hat{Z}}$ over $\hat{Z}$}. Then, the recovered latent Markov network $\mathcal{M}_{\hat{Z}}$ is isomorphic to the true latent Markov network $\mathcal{M}_{Z}$.
\end{restatable}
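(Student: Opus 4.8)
The plan is to push the structural information from \cref{theorem:partial_disentanglement} into a purely combinatorial statement relating $\mathcal{M}_Z$ and $\mathcal{M}_{\hat Z}$, and then close the argument using the minimality hypothesis. Write $h \coloneqq \hat g^{-1}\circ g$, a $\mathcal{C}^2$-diffeomorphism of $\mathbb{R}^n$ with $\hat Z = h(Z)$ (this follows from the exact distribution matching in Eq.~\eqref{eq:matched_distribution} in the same way as in the proof of \cref{proposition:relation_markov_network}). For each $i\in[n]$ let $C_i\subseteq[n]$ be the set of indices $k$ such that $Z_i$ genuinely depends on $\hat Z_k$, i.e.\ $\partial Z_i/\partial\hat Z_k\not\equiv 0$; equivalently, $C_i$ is the support of row $i$ of the Jacobian of the map $\hat Z\mapsto Z$. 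Note that \cref{theorem:partial_disentanglement} is precisely what makes $C_i$ a well-defined, sample-independent object rather than a sample-varying zero pattern.

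I would first record two simple ingredients. (i) The true model $(g,f,p_Z,\Theta)$ trivially realizes Eq.~\eqref{eq:matched_distribution}, and its latent Markov network is $\mathcal{M}_Z$; hence the minimum achievable number of edges is at most $|\mathcal{M}_Z|$, so $|\mathcal{M}_{\hat Z}|\le|\mathcal{M}_Z|$. (ii) Since $h^{-1}$ is a diffeomorphism, $\det J_{h^{-1}}(\hat z)\neq 0$ for all $\hat z$, so this determinant is not the zero function; in its Leibniz expansion as a signed sum over permutations, at least one permutation $\sigma\in S_n$ must contribute a term that does not vanish identically, which forces $\partial Z_i/\partial\hat Z_{\sigma(i)}\not\equiv 0$, i.e.\ $\sigma(i)\in C_i$, for every $i$. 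Relabeling the estimated latent variables by $\sigma$ only relabels the vertices of $\mathcal{M}_{\hat Z}$, so we may assume without loss of generality that $i\in C_i$ for every $i$, i.e.\ $Z_i$ is a function of $\hat Z_i$.

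The heart of the proof is then a short application of \cref{theorem:partial_disentanglement}(b). Let $\{Z_i,Z_j\}\in\mathcal{E}(\mathcal{M}_Z)$ be any edge and suppose toward a contradiction that $\hat Z_i$ and $\hat Z_j$ are \emph{not} adjacent in $\mathcal{M}_{\hat Z}$. Applying \cref{theorem:partial_disentanglement}(b) to this non-adjacent pair $(\hat Z_i,\hat Z_j)$ and the $\mathcal{M}_Z$-adjacent pair $(Z_i,Z_j)$, at most one of $Z_i,Z_j$ can be a function of $\hat Z_i$ or $\hat Z_j$; but $i\in C_i$ makes $Z_i$ a function of $\hat Z_i$ and $j\in C_j$ makes $Z_j$ a function of $\hat Z_j$ --- a contradiction. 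Hence $\{i,j\}\in\mathcal{E}(\mathcal{M}_{\hat Z})$, so $\mathcal{E}(\mathcal{M}_Z)\subseteq\mathcal{E}(\mathcal{M}_{\hat Z})$ in the relabeled indexing, giving $|\mathcal{M}_Z|\le|\mathcal{M}_{\hat Z}|$. Combined with ingredient (i), $|\mathcal{M}_Z|=|\mathcal{M}_{\hat Z}|$, so the inclusion of edge sets is an equality and the relabeling permutation $\sigma$ is a graph isomorphism $\mathcal{M}_Z\cong\mathcal{M}_{\hat Z}$.

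I expect the main obstacle to be ingredient (ii): one needs $\det J_{h^{-1}}$ being nowhere zero to produce a \emph{single} permutation $\sigma$ with $\sigma(i)\in C_i$ simultaneously for all $i$ (a Hall-type / perfect-matching statement about the bipartite support pattern of the Jacobian), rather than only pointwise matchings that might differ from sample to sample --- which is exactly why the sample-independence of the $C_i$ coming from \cref{theorem:partial_disentanglement} is essential. A secondary subtlety is that \cref{theorem:partial_disentanglement}(b) only constrains pairs that are non-adjacent in $\mathcal{M}_{\hat Z}$, so the contradiction must be set up around exactly such a hypothetical non-edge $\{i,j\}$; everything else is bookkeeping. (Part (a) of \cref{theorem:partial_disentanglement} additionally shows each $C_i$ is a clique in $\mathcal{M}_{\hat Z}$, consistent with the above but not needed for this statement.)
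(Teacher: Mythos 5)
Your overall skeleton is the same as the paper's: find a permutation that makes the ``diagonal'' derivatives nontrivial, show that every edge of $\mathcal{M}_Z$ must then appear (after relabeling) in $\mathcal{M}_{\hat Z}$, and close with the minimality argument (the true model is itself a feasible solution of Eq.~\eqref{eq:matched_distribution}, so $|\mathcal{M}_{\hat Z}|\le|\mathcal{M}_Z|$, forcing equality and hence isomorphism). The one genuine problem is the step where you invoke \cref{theorem:partial_disentanglement}(b) to get the contradiction. In this paper that theorem is not available at this stage: its proof (Appendix~\ref{sec:proof:partial_disentanglement}) explicitly uses \cref{theorem:identifiabiltiy_markov_network} to obtain a permutation under which $\mathcal{M}_{\hat Z}$ and $\mathcal{M}_Z$ coincide, so deriving \cref{theorem:identifiabiltiy_markov_network} from \cref{theorem:partial_disentanglement} would make the two results circular. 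The paper's proof instead works only with \cref{proposition:relation_markov_network}, which is established independently and holds pointwise.

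The good news is that your own ingredient (ii) already contains the repair. The Leibniz-expansion argument (equivalently, \cref{lemma:nonzero_diagonal_entries} applied to the Jacobian at a fixed value $\hat z$) gives more than ``$\sigma(i)\in C_i$ for every $i$'': it yields a single point $\hat z$ and a single permutation $\sigma$ with $\frac{\partial Z_i}{\partial \hat Z_{\sigma(i)}}\big|_{\hat Z=\hat z}\neq 0$ simultaneously for all $i$. With this pointwise statement you do not need the sample-independent supports $C_i$ (and hence \cref{theorem:partial_disentanglement}) at all: if $\{Z_i,Z_j\}\in\mathcal{E}(\mathcal{M}_Z)$ while $\hat Z_{\sigma(i)}$ and $\hat Z_{\sigma(j)}$ were non-adjacent in $\mathcal{M}_{\hat Z}$, then \cref{proposition:relation_markov_network}(b) evaluated at that same point forces $\frac{\partial Z_i}{\partial \hat Z_{\sigma(i)}}\big|_{\hat Z=\hat z}\,\frac{\partial Z_j}{\partial \hat Z_{\sigma(j)}}\big|_{\hat Z=\hat z}=0$, contradicting the nonzero diagonal. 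This is exactly the paper's argument, and the remainder of your proof (the edge-count comparison against the true model and the isomorphism conclusion) goes through verbatim. So the route is essentially the paper's; the only change needed is to replace the appeal to \cref{theorem:partial_disentanglement} by this pointwise use of \cref{proposition:relation_markov_network}, which also makes your closing worry about a Hall-type matching unnecessary.
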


In addition to recovering the underlying Markov network $\mathcal{M}_Z$, we show that the sparsity constraint on the recovered Markov network $\mathcal{M}_{\hat{Z}}$ also allows us to recover the underlying latent causal variables $Z$ up to specific, relatively minor indeterminacies. In the result, the following variable set, termed {\it intimate neighbor set}, plays an important role:
\begin{multline} \nonumber
{\Psi}_{Z_i}: = \{ Z_j \,|\,  Z_j,j\neq i, \textrm{ is adjacent to }Z_i\textrm{ and}\\ \textrm{all other neighbors of }Z_i\textrm{ in }\mathcal{M}_Z \}.
\end{multline}
For example, according to the Markov network implied by $\mathcal{G}_Z$ in Figure \ref{fig:setting}, ${\Psi}_{Z_1} = \{Z_2\}$, ${\Psi}_{Z_2} = \varnothing$, where $\varnothing$ denotes the empty set, ${\Psi}_{Z_3} = \{Z_2, Z_4\}$, ${\Psi}_{Z_4} = \varnothing$, and ${\Psi}_{Z_5} = \{Z_4\}$.  As another example, according to the Markov network in Figure \ref{fig:example}(b), which is implied by the DAG in Figure \ref{fig:example}(a), we have ${\Psi}_{Z_i} = \varnothing$ for $i=1,2, 3,5,6$ and ${\Psi}_{Z_4} = \{Z_3, Z_6\}$.

\begin{restatable}[Identifiability of latent causal variables]{theorem}{ThmIdentifiabilityCausalVariables}\label{theorem:identifiabiltiy_causal_variables}
Let the observations be sampled from the data generating process in Eq. (\ref{eq:data_generating_process}). Suppose that Assumptions A1 and A2 from Theorem 1 hold. Suppose also that we learn $(\hat{g}, \hat{f},p_{\hat{Z}},\hat{\Theta})$ to achieve Eq. (\ref{eq:matched_distribution})  {with
the minimal number of edges of the Markov network $\mathcal{M}_{\hat{Z}}$ over $\hat{Z}$}. Then, there exists a permutation $\pi$ of the estimated latent variables, denoted as $\hat{Z}_{\pi}$, such that each $\hat{Z}_{\pi(i)}$ is solely a function of $Z_i$ and a subset of $\Psi_{Z_i}$.
\end{restatable}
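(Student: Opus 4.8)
The plan is to translate the statement into a sparsity property of the Jacobian of the diffeomorphism relating $Z$ and $\hat{Z}$, and then to exploit a transitivity property of the intimate-neighbour relation. Since the distributions are matched as in \eqref{eq:matched_distribution} and $g,\hat{g}$ are $\mathcal{C}^2$-diffeomorphisms onto $\mathcal{X}$, the composition $h:=\hat{g}^{-1}\circ g$ is a $\mathcal{C}^2$-diffeomorphism of $\mathbb{R}^n$ with $\hat{Z}=h(Z)$; write $J:=\partial Z/\partial\hat{Z}$ for the (everywhere invertible) Jacobian of $h^{-1}$, so that $J^{-1}=\partial\hat{Z}/\partial Z$. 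For each $i$ let $S_i:=\{\,a:\partial Z_i/\partial\hat{Z}_a\not\equiv 0\,\}$ be the support of row $i$ of $J$. By \cref{theorem:partial_disentanglement}, whose hypotheses are exactly those assumed here, $S_i$ is a clique of $\mathcal{M}_{\hat{Z}}$ (part (a)), and $S_i\cup S_j$ is a clique of $\mathcal{M}_{\hat{Z}}$ whenever $\{Z_i,Z_j\}\in\mathcal{E}(\mathcal{M}_Z)$ (by combining parts (a) and (b)).

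First I would extract the permutation $\pi$. Because $\det J$ does not vanish (as $h$ is a diffeomorphism), its Leibniz expansion must contain a permutation $\pi$ all of whose factors $J_{i,\pi(i)}$ are $\not\equiv 0$, i.e.\ $\pi(i)\in S_i$ for every $i$. For each edge $\{Z_i,Z_j\}\in\mathcal{E}(\mathcal{M}_Z)$ we then have $\pi(i),\pi(j)\in S_i\cup S_j$, a clique of $\mathcal{M}_{\hat{Z}}$, so $\{\hat{Z}_{\pi(i)},\hat{Z}_{\pi(j)}\}\in\mathcal{E}(\mathcal{M}_{\hat{Z}})$; hence $\pi$ maps $\mathcal{E}(\mathcal{M}_Z)$ injectively into $\mathcal{E}(\mathcal{M}_{\hat{Z}})$, and since $|\mathcal{M}_{\hat{Z}}|=|\mathcal{M}_Z|$ by \cref{theorem:identifiabiltiy_markov_network} this map is onto, i.e.\ $\pi$ is a graph isomorphism $\mathcal{M}_Z\to\mathcal{M}_{\hat{Z}}$. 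Relabelling $\hat{Z}$ by $\pi^{-1}$, I may from now on assume $\pi=\mathrm{id}$, $\mathcal{M}_{\hat{Z}}=\mathcal{M}_Z=:\mathcal{M}$, and $i\in S_i$ for all $i$.

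Next is the combinatorial core. Write $B_i:=\{i\}\cup\{\,l:Z_l\in\Psi_{Z_i}\,\}$. I claim $S_i\subseteq B_i$: if $a\in S_i$ with $a\neq i$ then $Z_a$ is a neighbour of $Z_i$ in $\mathcal{M}$ (as $S_i$ is a clique through $i$), and for any neighbour $Z_y$ of $Z_i$ with $y\neq a$ the edge $\{Z_i,Z_y\}$ makes $S_i\cup S_y$ a clique containing both $a$ and $y$, so $Z_a$ and $Z_y$ are adjacent; thus $Z_a$ is adjacent to all other neighbours of $Z_i$, i.e.\ $Z_a\in\Psi_{Z_i}$. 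Hence $J$ is supported on the $(0,1)$-pattern $P$ of the relation ``$a\in B_i$''. I would then verify this relation is reflexive (immediate) and transitive: if $Z_j\in\Psi_{Z_i}$ and $Z_k\in\Psi_{Z_j}$ with $i,j,k$ distinct, then $Z_i\in N(Z_j)$ gives $Z_k$ adjacent to $Z_i$, and every neighbour $Z_y$ of $Z_i$ other than $Z_k$ satisfies $Z_y\in N(Z_j)$ (either because $Z_j\in\Psi_{Z_i}$, or because $Z_y=Z_j$), whence $Z_k$ is adjacent to $Z_y$ (because $Z_k\in\Psi_{Z_j}$); so $Z_k\in\Psi_{Z_i}$, i.e.\ $k\in B_i$ (the cases with coincidences among $i,j,k$ are trivial).

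Finally I would invoke the standard fact that a reflexive, transitive $(0,1)$-pattern becomes, after relabelling the common index set by a topological order of the equivalence classes of this preorder, block upper-triangular with all-ones diagonal blocks, and that the inverse of an invertible matrix with this pattern again has this pattern. Applied to $J$, whose support lies in $P$, this shows $J^{-1}=\partial\hat{Z}/\partial Z$ is also supported in $P$, i.e.\ $\partial\hat{Z}_i/\partial Z_j\not\equiv 0$ only if $j\in B_i$; undoing the relabelling, $\hat{Z}_{\pi(i)}$ is solely a function of $Z_i$ and a subset of $\Psi_{Z_i}$, which is the claim. I expect the main obstacle to be precisely this combinatorial core — proving $S_i\subseteq B_i$, the transitivity of the pattern, and its closure under inversion — since this is exactly where the minimality of the recovered Markov network (entering via \cref{theorem:partial_disentanglement}) gets converted into dependence on intimate neighbours only; the upgrade from the pointwise identities of \cref{proposition:relation_markov_network} to the sample-independent clique statements is delegated to \cref{theorem:partial_disentanglement}.
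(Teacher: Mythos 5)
Your argument up to the pattern statement is essentially sound and runs parallel to the paper's own machinery: the clique statements for the row supports $S_i$ and $S_i\cup S_j$ are exactly what \cref{theorem:partial_disentanglement} provides, your edge-counting step reproduces the proof of \cref{theorem:identifiabiltiy_markov_network}, the inclusion $S_i\subseteq B_i$ is the ``simpler case'' inside \cref{proposition:weaker_identifiabiltiy_causal_variables}, and the closure of a reflexive--transitive zero pattern under matrix inversion is precisely \cref{prop:subset_support} (proved there via Cayley--Hamilton; note your block-triangular justification only yields the coarse triangular shape, so you would still need the algebra/Cayley--Hamilton argument to preserve the finer pattern inside the triangle, though the fact you invoke is true).

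The genuine gap is in the last sentence. What you have actually proved is that $\partial\hat{Z}_{\pi(i)}/\partial Z_j\equiv 0$ unless $j\in B_i$, i.e.\ each $\hat{Z}_{\pi(i)}$ is solely a function of \emph{a subset of} $\{Z_i\}\cup\Psi_{Z_i}$ --- this is the weaker statement, \cref{proposition:weaker_identifiabiltiy_causal_variables}, not \cref{theorem:identifiabiltiy_causal_variables}, whose extra content is that $\hat{Z}_{\pi(i)}$ really is a function of $Z_i$. Your permutation does not deliver this: you chose $\pi$ so that the diagonal of $J=\partial Z/\partial\hat{Z}_{\pi}$ is nonvanishing, but for $n\ge 3$ a nonzero diagonal of $J$ does not force a nonzero diagonal of $J^{-1}$ (e.g.\ $J=\bigl(\begin{smallmatrix}1&1&0\\0&1&1\\1&1&1\end{smallmatrix}\bigr)$ is invertible with unit diagonal yet $(J^{-1})_{11}=0$), and nothing in the pattern $P$ excludes this inside an equivalence class of your preorder, where entries are unconstrained. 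Closing the gap requires re-selecting the permutation from $\partial\hat{Z}/\partial Z$ (so its diagonal is nonzero) and then showing that the $\Psi$-containment survives this change of permutation; that is exactly the paper's cycle-decomposition argument, which uses \cref{lemma:subet_neighbors,lemma:intimate_neighbors} to show $\{Z_i\}\cup N_{Z_i}$, hence $\{Z_i\}\cup\Psi_{Z_i}$, is constant along each cycle of the permutation mismatch. Without that step (or an equivalent one), your proof establishes \cref{proposition:weaker_identifiabiltiy_causal_variables} but not the stated theorem.
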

Note that the term `solely' indicates that $\hat{Z}_{\pi(i)}$ is not a function of other variables that are not specified in the theorem above. The proof is given in Appendix \ref{sec:proof:identifiabiltiy_causal_variables}. Roughly speaking, the proof leverages \cref{theorem:partial_disentanglement,theorem:identifiabiltiy_markov_network} to reason about the relationships among the true latent variables and the recovered ones, which imply that certain entries on the Jacobian matrix $\frac{\partial Z}{\partial \hat{Z}}$ must be zero. We show that these entries remain zero in the powers of $\frac{\partial Z}{\partial \hat{Z}}$, indicating that the same entries remain zero in $\big(\frac{\partial Z}{\partial \hat{Z}}\big)^{-1}$ (because the inverse $\big(\frac{\partial Z}{\partial \hat{Z}}\big)^{-1}$ can be written as a linear combination of the powers of $\frac{\partial Z}{\partial \hat{Z}}$) and thus $\frac{\partial \hat{Z}}{\partial Z}$, from which the identifiability result can be derived.

It is worth noting that in many cases, \cref{theorem:identifiabiltiy_causal_variables} already enables us to recover some of the latent variables up to a component-wise transformation. 
\begin{remark}\label{remark:component_wise_identifiability}
No matter how many neighbors each latent causal variable $Z_i$ has, as long as each of its neighbors is not adjacent to at least one other neighbor in the Markov network $\mathcal{M}_Z$, then $Z_i$ can be recovered up to a component-wise transformation. 
\end{remark}
Even if the above case does not hold, \cref{theorem:identifiabiltiy_causal_variables} still shows how the estimated latent variables relate to the underlying causal variables in a specific, nontrivial way. Two examples are provided below.

\begin{figure}[t]
\centering
	\begin{subfigure}{0.23\textwidth}
	{\hspace{0cm}\begin{tikzpicture}[scale=.75, line width=0.5pt, inner sep=0.2mm, shorten >=.1pt, shorten <=.1pt]
		\draw (0, 0) node(2) [circle, draw]  {{\footnotesize\,${Z}_3$\,}};
		\draw (-1.4, 0) node(1)[circle, draw]  {{\footnotesize\,${Z}_2$\,}};
  \draw (-1.4, 1.2) node(8)[circle, draw]  {{\footnotesize\,${Z}_5$\,}};
		\draw (1.4, 0) node(3)[circle, draw]  {{\footnotesize\,${Z}_4$\,}};
		\draw (-2.8, 0) node(5)[circle, draw]  {{\footnotesize\,$Z_1$\,}};
		\draw (0, 1.2) node(9)[circle, draw]  {{\footnotesize\,$Z_6$\,}};
		\draw[-arcsq] (5) -- (1); 
		\draw[-arcsq] (1) -- (2); 
		\draw[-arcsq] (2) -- (3);
  \draw[-arcsq] (5) -- (8);
  \draw[-arcsq] (8) -- (9);
  \draw[-arcsq] (9) -- (3);
		\end{tikzpicture}}
  \caption{$\mathcal{G}_Z$,  the DAG over true latent variables $Z_i$.}
  \end{subfigure}
  ~
  \begin{subfigure}{0.23\textwidth}
	{\hspace{0cm}\begin{tikzpicture}[scale=.75, line width=0.5pt, inner sep=0.2mm, shorten >=.1pt, shorten <=.1pt]
		\draw (0, 0) node(2) [circle, draw]  {{\footnotesize\,${Z}_3$\,}};
		\draw (-1.4, 0) node(1)[circle, draw]  {{\footnotesize\,${Z}_2$\,}};
  \draw (-1.4, 1.2) node(8)[circle, draw]  {{\footnotesize\,${Z}_5$\,}};
		\draw (1.4, 0) node(3)[circle, draw]  {{\footnotesize\,${Z}_4$\,}};
		\draw (-2.8, 0) node(5)[circle, draw]  {{\footnotesize\,$Z_1$\,}};
		\draw (0, 1.2) node(9)[circle, draw]  {{\footnotesize\,$Z_6$\,}};
		\draw[-] (5) -- (1); 
		\draw[-] (1) -- (2); 
		\draw[-] (2) -- (3);
  \draw[-] (5) -- (8);
  \draw[-] (8) -- (9);
  \draw[-] (9) -- (3);
  \draw[-] (2) -- (9);
		\end{tikzpicture}}
  \caption{The corresponding Markov network $\mathcal{M}_Z$.}
  \end{subfigure}
  \vspace{-0.1em}
	\caption{Illustrative example 2.}
	\vspace{-0.2em}
	\label{fig:example}
\end{figure}
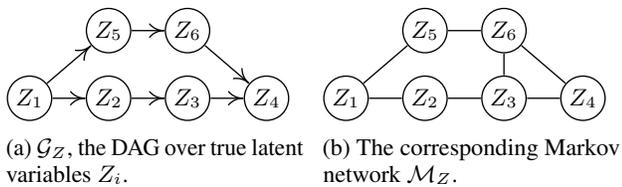

\begin{example}
Consider the Markov network $\mathcal{M}_Z$ corresponding to the DAG $\mathcal{G}_Z$ over $Z$ in Figure \ref{fig:setting}. By \cref{theorem:identifiabiltiy_causal_variables} and suitable permutation of estimated latent variables $\hat{Z}$, we have: (a) $\hat{Z}_{\pi(1)}$ is solely a function of a subset of $\{Z_1,Z_2\}$, (b) $\hat{Z}_{\pi(2)}$ is solely a function of $Z_2$, (c) $\hat{Z}_{\pi(3)}$ is solely a function of a subset of $\{Z_2,Z_3,Z_4\}$, (d) $\hat{Z}_{\pi(4)}$ is solely a function of $Z_4$, and (e) $\hat{Z}_{\pi(5)}$ is solely a function of a subset of $\{Z_4,Z_5\}$.
In this example, the latent causal variables $Z_2$ and $Z_4$ can be recovered up to component-wise transformation, while variables $Z_1$, $Z_3$, and $Z_5$ can be identified up to mixtures with certain neighbors in the Markov network.
\end{example}
\begin{example}
One may think that generally speaking, the more complex $\mathcal{G}_Z$, the more indeterminacies we have in the estimated latent variables (in the sense that each estimated latent variable receives contributions from more latent variables). In fact, this may not be the case.  For instance, consider the underlying latent causal graph $\mathcal{G}_Z$ in Figure \ref{fig:example}(a), which involves more variables and more edges and whose Markov network is shown in Figure \ref{fig:example}(b). For every variable $Z_i$ that is not the sink node, it has $\Psi_{Z_i} = \varnothing$ and thus can be recovered up to a component-wise transformation. \looseness=-1
\end{example}

\vspace{-0.25em}
\textbf{Permutation of recovered latent variables.} \ \
Theorems~\ref{theorem:identifiabiltiy_markov_network} and \ref{theorem:identifiabiltiy_causal_variables} involve certain permutation of the estimated latent variables $\hat{Z}$. Such an indeterminacy is common in the literature of causal discovery and representation learning tasks involving latent variables. In our case,  {since the function $v \coloneqq g^{-1}\circ \hat{g}$ where $Z = v(\hat{Z})$ is invertible, there exists a permutation of the latent variables such that the corresponding Jacobian matrix $J_v$ has nonzero diagonal entries (see \cref{lemma:nonzero_diagonal_entries} in Appendix \ref{app:proof:nonzero_diagonal_entries})}; such a permutation is what Theorems \ref{theorem:identifiabiltiy_markov_network} and \ref{theorem:identifiabiltiy_causal_variables} refer to. 

\vspace{-0.1em}
\textbf{Connection with nonlinear ICA.} \ \
It is worth noting that nonlinear ICA (with auxiliary variables) may be viewed as a special case of our result in this section. Specifically, if the true latent causal DAG $\mathcal{G}_Z$ is an empty graph, then the latent causal variables are independent, which reduce to the nonlinear ICA setting \citep{hyvarinen2019nonlinear}.

\vspace{-0.1em}
Furthermore, since traditional nonlinear ICA always has a valid solution (to produce nonlinear independent components) \cite{Hyv99}, one may wonder whether, in our setting, it is possible to always find nonlinear components as functions of $X$ that are independent in each domain, as produced by recent methods for nonlinear ICA with auxiliary variables \cite{hyvarinen2019nonlinear}. As a corollary of \cref{theorem:identifiabiltiy_markov_network}, we show that the answer is no--there do not exist nonlinear components that are independent across domains if the true latent causal DAG $\mathcal{G}_Z$ is not an empty graph. The proof is provided in Appendix \ref{sec:proof:identifiabiltiy_ICA}.
\begin{restatable}[Impossibility of finding independent components]{corollary}{ThmIdentifiabilityICA}\label{theorem:identifiabiltiy_ICA}
Let the observations be sampled from the data generating process in Eq. (\ref{eq:data_generating_process}). Suppose that Assumptions A1 and A2 from Theorem 1, as well as Assumptions \ref{assump:adjacency_faith} and \ref{assump:sucf}, hold, and that the true latent causal DAG $\mathcal{G}_Z$ is not an empty graph. Suppose also that we learn $(\hat{g}, \hat{f},p_{\hat{Z}},\hat{\Theta})$ with the components of $\hat{Z}$ being independent in each domain. Then, $(\hat{g}, \hat{f},p_{\hat{Z}},\hat{\Theta})$ cannot achieve Eq. (\ref{eq:matched_distribution}).
\end{restatable}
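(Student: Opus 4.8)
The plan is to argue by contradiction. Suppose that some estimated model $(\hat{g}, \hat{f}, p_{\hat{Z}}, \hat{\Theta})$ whose components $\hat{Z}_1,\dots,\hat{Z}_n$ are mutually independent in each domain nonetheless achieves Eq.~\eqref{eq:matched_distribution}. The first step is to translate the independence assumption into a statement about $\mathcal{M}_{\hat{Z}}$: joint independence of $\hat{Z}$ in every domain implies $\hat{Z}_i \independent \hat{Z}_j \mid \hat{Z}_{[n]\setminus\{i,j\}}$ for all $i\neq j$ (equivalently, by Eq.~\eqref{eq:cross_de}, every cross second derivative of $\log p_{\hat{Z}}$ vanishes), so the Markov network $\mathcal{M}_{\hat{Z}}$ is edgeless, i.e.\ $|\mathcal{M}_{\hat{Z}}| = 0$.

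Second, since $0$ is the smallest possible number of edges of any undirected graph on $n$ vertices, the hypothesized model is in particular one that achieves Eq.~\eqref{eq:matched_distribution} \emph{with the minimal number of edges} in $\mathcal{M}_{\hat{Z}}$. Hence the hypotheses of \cref{theorem:identifiabiltiy_markov_network} are satisfied, and its conclusion yields that $\mathcal{M}_{\hat{Z}}$ is isomorphic to $\mathcal{M}_{Z}$. Combined with the first step, this forces $\mathcal{M}_{Z}$ itself to be edgeless.

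Third, I would contradict this using the faithfulness-type assumptions together with the hypothesis that $\mathcal{G}_Z$ is not an empty graph. Since $\mathcal{G}_Z$ contains at least one directed edge $Z_j \to Z_i$, its moralized graph contains the undirected edge $\{Z_i, Z_j\}$; under Assumptions~\ref{assump:adjacency_faith} and~\ref{assump:sucf}, \cref{proposition:moral_graph} guarantees that this edge is retained in $\mathcal{M}_{Z}$ (intuitively, adjacency faithfulness forbids a parent and its child from being rendered conditionally independent given the remaining variables). Thus $\mathcal{M}_{Z}$ has at least one edge, contradicting the second step. Therefore no model with components independent in each domain can satisfy Eq.~\eqref{eq:matched_distribution}, which is the claim.

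The main obstacle, and really the only delicate point, is the justification in the second step that the ``minimal number of edges'' requirement of \cref{theorem:identifiabiltiy_markov_network} is met without any separate optimization argument; this is clean here precisely because the empty graph is the global minimizer of edge count, so the hypothesized model is automatically a minimal-edge solution. A secondary point to spell out carefully is which direction of \cref{proposition:moral_graph} is invoked: we only need that \emph{some} edge of the moralized DAG survives in $\mathcal{M}_{Z}$, and the parent--child edges (covered by adjacency faithfulness) suffice for this, so the non-emptiness of $\mathcal{G}_Z$ is exactly the hypothesis that drives the contradiction.
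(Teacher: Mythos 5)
Your proposal is correct and follows essentially the same route as the paper: translate per-domain independence into an edgeless $\mathcal{M}_{\hat{Z}}$, conclude that $\mathcal{M}_{Z}$ must then be edgeless, and contradict this via \cref{proposition:moral_graph} and the non-emptiness of $\mathcal{G}_Z$. The only cosmetic difference is that the paper invokes the one-directional \cref{lemma:super_graph} from the proof of \cref{theorem:identifiabiltiy_markov_network} (which needs no minimality condition at all), whereas you invoke the full isomorphism statement of \cref{theorem:identifiabiltiy_markov_network} after correctly observing that an edgeless $\mathcal{M}_{\hat{Z}}$ automatically satisfies the minimal-edge hypothesis; both are valid.
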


\vspace{-0.5em}
\subsection{From Latent Markov Network to Latent Causal DAG}\label{sec:recover_dag}
\vspace{-0.1em}

Now we have identified the Markov network up to an isomorphism, which characterizes conditional independence relations in the distribution. To build the connection between Markov network or conditional independence relations and causal structures, prior theory relies on the Markov and faithfulness assumptions. However, in real-world scenarios, the faithfulness assumption could be violated due to various reasons including path cancellations \citep{zhang2008detection,uhler2013geometry}.\looseness=-1

\vspace{-0.1em}
Since our goal is to generalize the identifiability theory as much as possible to fit practical applications, we introduce two relaxations of the faithfulness assumptions.

\begin{assumption}[Single adjacency-faithfulness (SAF)]
\label{assump:adjacency_faith}
Given a DAG $\mathcal{G}_Z$ and distribution $P_{Z;\theta}$ over the variable set $Z$, if two variables $Z_i$ and $Z_j$ are adjacent in $\mathcal{G}_Z$, then $Z_i \notindependent Z_j\mid Z_{[n] \setminus \{i,k\}}$.
\end{assumption}

\begin{assumption}[Single unshielded-collider-faithfulness (SUCF) \citep{Ng2021reliable}]
\label{assump:sucf}
Given a  latent causal graph $\mathcal{G}_Z$ and distribution $P_{Z;\theta}$ over the variable set $Z$, let $Z_i\rightarrow Z_j \leftarrow Z_k$ be any unshielded collider in $\mathcal{G}_Z$,
then $Z_i \notindependent Z_k \mid Z_{[n] \setminus \{i,k\}}$.
\end{assumption}

\looseness = -1
We propose SAF as a relaxtion of the Adjacency-faithfulness \citep{ramsey2012adjacency}. The SUCF assumption is first introduced by \citet{Ng2021reliable}, which is strictly weaker than Orientation-faithfulness \citep{ramsey2012adjacency}. Thus, both of them are strictly weaker than the faithfulness assumption, since the combination of Adjacency-faithfulness and Orientation-faithfulness is weaker than the faithfulness assumption \citep{zhang2008detection}.

Interestingly, not only they are weaker variants of faithfulness, but we also prove that they are actually \text{necessary and sufficient conditions}, thus the weakest possible ones, to bridge conditional independence relations and causal structures. Specifically, we show that the recovered Markov network (e.g., in \cref{theorem:identifiabiltiy_markov_network}) is exactly the moralized graph of the true causal DAG if and only if the proposed variants of faithfulness hold. The proofs of \cref{lem:moral_sub} and \cref{proposition:moral_graph} are shown in Appendix \ref{sec:proof_thm:moral_sub}.

\begin{restatable}{lemma}{LemMoralSub}\label{lem:moral_sub}
Given a latent causal graph $\mathcal{G}_Z$ and distribution $P_{Z;\theta}$ with its Markov Network $\mathcal{M}_Z$, under Markov assumption, the undirected graph defined by $\mathcal{M}_Z$ is a subgraph of the moralized graph of the true causal DAG $G$.
\end{restatable}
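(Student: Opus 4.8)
\emph{Proof strategy.} The plan is to prove the contrapositive: for any $i\neq j$, if $\{Z_i,Z_j\}$ is \emph{not} an edge of the moralized graph of $\mathcal{G}_Z$, then $Z_i\independent Z_j\mid Z_{[n]\setminus\{i,j\}}$, so that $\{Z_i,Z_j\}\notin\mathcal{E}(\mathcal{M}_Z)$. First I would recall that in the moralized graph of $\mathcal{G}_Z$ two vertices are adjacent if and only if they are adjacent in $\mathcal{G}_Z$ or they have a common child; hence the hypothesis ``$\{Z_i,Z_j\}$ is not an edge of the moral graph'' is exactly the statement that $Z_i$ and $Z_j$ are non-adjacent in $\mathcal{G}_Z$ \emph{and} have no common child.

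Since the latent SEM forces $P_{Z;\theta}$ to factorize according to $\mathcal{G}_Z$ (the Markov assumption recalled in Section~2), $P_{Z;\theta}$ satisfies the global Markov property relative to $\mathcal{G}_Z$, so it suffices to show that $Z_i$ and $Z_j$ are d-separated by $Z_{[n]\setminus\{i,j\}}$. Consider any path $p$ between $Z_i$ and $Z_j$ in $\mathcal{G}_Z$. Every non-endpoint vertex of $p$ lies in $Z_{[n]\setminus\{i,j\}}$, i.e.\ in the conditioning set, so any non-collider occurring as a non-endpoint vertex of $p$ blocks $p$. Therefore a d-connecting path can have only colliders as non-endpoint vertices; but any path with two or more non-endpoint vertices must contain a non-collider among them, since two consecutive colliders would orient the edge between them in both directions. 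Hence the only candidate d-connecting paths are the edge $Z_i\to Z_j$ or $Z_j\to Z_i$ (requiring $Z_i,Z_j$ adjacent in $\mathcal{G}_Z$) and the length-two path $Z_i\to Z_k\leftarrow Z_j$, whose collider $Z_k$ is in the conditioning set and which thus requires $Z_i,Z_j$ to share the child $Z_k$. Under the hypothesis neither configuration exists, so every path between $Z_i$ and $Z_j$ is blocked, giving d-separation and hence $Z_i\independent Z_j\mid Z_{[n]\setminus\{i,j\}}$.

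I expect no serious obstacle; this is a standard moralization fact, and the only points needing care are the two background ingredients: (i) that the Markov assumption in the factorization sense implies the global Markov property, so that conditional independences may be read off d-separation (positivity from A1 is not even needed for this direction); and (ii) the collider clause of the d-separation criterion---because the conditioning set contains every vertex except $Z_i$ and $Z_j$, every collider and every descendant of a collider is automatically included, which is precisely why the length-two collider path is active and why nothing further about descendants of colliders has to be checked. An equivalent route, avoiding d-separation altogether, is to observe that the neighborhood of $Z_i$ in the moral graph equals its Markov blanket $\mathrm{MB}(Z_i)$; then $Z_j\notin\mathrm{MB}(Z_i)$, together with the local Markov property and the weak-union axiom of conditional independence, yields $Z_i\independent Z_j\mid Z_{[n]\setminus\{i,j\}}$. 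The substantive content of this subsection is the converse inclusion, which is handled via the faithfulness relaxations SAF and SUCF in \cref{proposition:moral_graph}.
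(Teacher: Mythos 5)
Your proposal is correct and follows essentially the same route as the paper's proof: non-adjacency in the moral graph means non-adjacency in $\mathcal{G}_Z$ and no common child, hence $Z_i$ and $Z_j$ are d-separated given $Z_{[n]\setminus\{i,j\}}$, and the Markov assumption then gives the conditional independence that excludes the edge from $\mathcal{M}_Z$. The only difference is that you spell out the path-blocking argument for the d-separation step (which the paper simply asserts), and your elaboration is accurate.
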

\begin{restatable}[Moralized graph and Markov network]{proposition}{ThmMoralGraph}\label{proposition:moral_graph}
Given a causal DAG $\mathcal{G}_Z$ and distribution $P_{Z;\theta}$ with its Markov Network $\mathcal{M}_Z$, under Markov assumption, the undirected graph defined by $\mathcal{M}_Z$ is the moralized graph of the true causal DAG $\mathcal{G}_Z$ if and only if the SAF and SUCF assumptions are satisfied.
\end{restatable}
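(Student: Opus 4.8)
The plan is to prove the two implications separately, after observing that \cref{lem:moral_sub} already supplies one of the two edge-inclusions needed: under the Markov assumption, every edge of $\mathcal{M}_Z$ is an edge of the moralized graph of $\mathcal{G}_Z$, which I will write $\mathcal{M}(\mathcal{G}_Z)$. Hence the entire content of the proposition is the equivalence between (SAF and SUCF) and the reverse inclusion $\mathcal{E}(\mathcal{M}(\mathcal{G}_Z)) \subseteq \mathcal{E}(\mathcal{M}_Z)$. Throughout I will use the defining property of the Markov network, namely $\{Z_i,Z_j\}\in\mathcal{E}(\mathcal{M}_Z)$ iff $Z_i \notindependent Z_j \mid Z_{[n]\setminus\{i,j\}}$, together with the standard description of moralization: $\{Z_i,Z_j\}$ is an edge of $\mathcal{M}(\mathcal{G}_Z)$ iff either $Z_i$ and $Z_j$ are adjacent in $\mathcal{G}_Z$, or they have a common child in $\mathcal{G}_Z$.

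For the ``if'' direction, assume SAF and SUCF, and take an arbitrary edge $\{Z_i,Z_j\}$ of $\mathcal{M}(\mathcal{G}_Z)$. If $Z_i$ and $Z_j$ are adjacent in $\mathcal{G}_Z$, then SAF gives $Z_i \notindependent Z_j \mid Z_{[n]\setminus\{i,j\}}$, so the edge lies in $\mathcal{M}_Z$. Otherwise $Z_i$ and $Z_j$ are non-adjacent in $\mathcal{G}_Z$ but share a common child $Z_\ell$; then $Z_i \rightarrow Z_\ell \leftarrow Z_j$ is an unshielded collider, and SUCF gives $Z_i \notindependent Z_j \mid Z_{[n]\setminus\{i,j\}}$, so again the edge lies in $\mathcal{M}_Z$. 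This establishes $\mathcal{E}(\mathcal{M}(\mathcal{G}_Z)) \subseteq \mathcal{E}(\mathcal{M}_Z)$, and combined with \cref{lem:moral_sub} the two graphs coincide.

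For the ``only if'' direction, assume $\mathcal{M}_Z = \mathcal{M}(\mathcal{G}_Z)$. To verify SAF, suppose $Z_i$ and $Z_j$ are adjacent in $\mathcal{G}_Z$; then $\{Z_i,Z_j\}$ is an edge of $\mathcal{M}(\mathcal{G}_Z) = \mathcal{M}_Z$, hence $Z_i \notindependent Z_j \mid Z_{[n]\setminus\{i,j\}}$, which is exactly SAF. To verify SUCF, let $Z_i \rightarrow Z_\ell \leftarrow Z_k$ be an unshielded collider in $\mathcal{G}_Z$; moralization adds the edge $\{Z_i,Z_k\}$, so $\{Z_i,Z_k\}$ is an edge of $\mathcal{M}(\mathcal{G}_Z) = \mathcal{M}_Z$, hence $Z_i \notindependent Z_k \mid Z_{[n]\setminus\{i,k\}}$, which is exactly SUCF.

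The genuinely substantive step is deferred to \cref{lem:moral_sub}: the inclusion $\mathcal{M}_Z \subseteq \mathcal{M}(\mathcal{G}_Z)$ is where the Markov property does the work, through the fact that non-adjacency in the moralized graph forces $Z_i \independent Z_j \mid Z_{[n]\setminus\{i,j\}}$ (a $d$-separation / local-Markov consequence read off the moral graph). Given that lemma, the remaining obstacle in this proposition is only a careful case split on the two kinds of moral edges, together with the bookkeeping point that co-parents which happen to already be adjacent in $\mathcal{G}_Z$ are handled by the adjacency/SAF case rather than the collider/SUCF case; no further probabilistic argument is required.
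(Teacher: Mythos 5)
Your proposal is correct and follows essentially the same route as the paper: both directions hinge on \cref{lem:moral_sub} for the inclusion $\mathcal{E}(\mathcal{M}_Z)\subseteq\mathcal{E}$ of the moralized graph, and on the same case split between edges arising from adjacency in $\mathcal{G}_Z$ (handled by SAF) and edges added between non-adjacent co-parents (handled by SUCF). The only difference is presentational—you argue both implications directly, while the paper phrases them contrapositively (assuming the graphs differ, or assuming SAF/SUCF fails)—which is not a substantive deviation.
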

It is worth noting that the connection between conditional independence relations and causal structures has been developed by \citep{loh2014high, Ng2021reliable} in the linear case by leveraging the properties of the inverse covariance matrix; our results here focus on the nonparametric case and thus being able to serve the considered general settings for identifiability. Also note that the necessary and sufficient conditions may also be of independent interest for other causal discovery tasks exploring conditional independence relations in the nonparametric case.

\textbf{Discussion on additional assumptions.} \ \ We investigated how the sparsity constraint on the recovered graph over latent variables and sufficient change conditions on causal influences can be used to recover the latent variables and causal graph up to certain indeterminacies. Our framework is connected with previous ones in a spectrum of related studies \citep{varici2023score, ahuja2023interventional, buchholz2022function,squires2023linear,brehmer2022weakly,von2023nonparametric,brehmer2022weakly,von2023nonparametric,zheng2023generalizing,zhang2023identifiability}. For instance, the connection between conditional independence and cross-derivatives of the log density in both linear and nonlinear cases means our theorems directly apply to linear SEMs. Furthermore, our results do not require the mixing function to be sufficiently nonlinear, allowing them to encompass linear mixing processes as well.

At the same time, we may be able to leverage possible parametric constraints on the data generating process (or functions) or specific types of interventions. For instance, if we know that the changes happen to the linear causal mechanisms with Gaussian noises, this constraint can readily help reduce the search space and improve the identifiability. Moreover, since we only require the changing distribution, any type of interventions will be covered since any change to the conditional distribution is allowed. Given the additional information illustrated by experimental interventions (e.g., single-node interventions), alternative identifiability that might be particularly useful in certain tasks can be established. We hope this work can provide a helpful, bigger picture of causal representation learning in the general setting and further illustrates the necessity and connections of the different assumptions formulated in this line of works.

\section{Change Encoding Network for Representation Learning}\label{sec:method}
Thanks to the identifiability result, we now present two different practical implementations to recover the latent variables and their causal relations from observations from multiple domains. 
We build our method on the variational autoencoder (VAE) framework and can be easily extended to other models, such as normalizing flows. 

We learn a
deep latent generative model (decoder) $p(X|Z;\hat{\theta}^{(u)})$ and a variational approximation (encoder) $q(Z|X, u)$ of its true posterior $p(Z|X;\theta^{(u)})$ since the true posterior is usually intractable. To learn the model,  we minimize the lower bound of the log-likelihood as 
\begin{align*}
    \log \ &p(X; \hat{\theta}^{(u)}) \\
    & = \log \int p(X|Z;\hat{\theta}^{(u)}) p(Z;\hat{\theta}^{(u)}) dZ \\ \nonumber
    &=\log \int \frac{q(Z|X, u)}{q(Z|X, u)}  p(X|Z;\hat{\theta}^{(u)}) p(Z;\hat{\theta}^{(u)}) dZ \\ \nonumber
    &\geq -\text{KL}(q(Z|X,u)|| p(Z;\hat{\theta}^{(u)})) +\mathbb{E}_q [\log p(X|Z;\hat{\theta}^{(u)})] \\ \nonumber
    & = -\mathcal{L}_\text{ELBO}
    \label{eq:elbo}
\end{align*}
For the posterior $q(Z|X, u)$, we assume that it is a multivariate Gaussian or a Laplacian distribution, where the mean and variance are generated by the neural network encoder. As for $q(X|Z)$, we assume that it is a multivariate Gaussian and the mean is the output of the decoder and the variance is a pre-defined value.

In practice, we can parameterize $p(X|Z;\hat{\theta}^{(u)})$ as the decoder which takes as input the latent representation $Z$ and $q(Z|X,u)$ as an encoder which outputs the mean and scale of the posterior distribution.  
An essential difference between VAE \citep{kingma2013auto} and iVAE \citep{khemakhem2020variational} is that our method allows the components of $Z$ to be causally dependent and we are able to learn the components and causal relationships. And 
the key is the prior distribution $P(Z;\hat{\theta}^{(u)})$. Now we present two different implementations to capture the changes with a properly defined prior distribution.

\subsection{Nonparametric Implementation of the Prior Distribution}
To recover the relationships and latent variables $Z$, we build the normalizing flow to mimic the inverse of the latent SEM $Z_i=f_i(\textrm{PA}(Z_i), \epsilon_i)$ in Eq. (\ref{eq:data_generating_process}). We first assume a causal ordering as $\hat{Z}_1, \dots, \hat{Z}_n$. 
Then, for each component $\hat{Z}_i$, we consider the previous components $\{\hat{Z}_1,\dots, \hat{Z}_{i-1} \}$ as potential parents of $\hat{Z}_i$ and we can select the true parents with the adjacency matrix $\hat{A}$, where $\hat{A}_{i,j}$ denotes that component $\hat{Z}_j$ contributes in the generation of $\hat{Z}_i$. If $\hat{A}_{i,j}=0$, it means that $\hat{Z}_j$ will not contribute to the generation of $\hat{Z}_i$. 
 Since $\theta^{(u)}$ governs the changes across domains, we use the observed domain index $u$ to discover the changes.
Then, we use the selected parents $\{\hat{A}_{i,1} \hat{Z}_1,\dots, \hat{A}_{i, i-1} \hat{Z}_{i-1}\}$ and the domain label $u$ to generate parameters of normalizing flow and apply the flow transformation on $\hat{Z}_i$ to turn it into $\hat{\epsilon}_i$. Specifically, we have 
\begin{align*}
    \hat{\epsilon}_i, \log\text{det}_i = \text{Flow}(\hat{Z}_i; \text{NN}(\{\hat{A}_{i,j}\hat{Z}_j\}_{j=1}^{i-1}, u)),
\end{align*}
where $\log\text{det}_i$ is the log determinant of the conditional flow transformation on $\hat{Z}_i$ and NN represents a neural network.

To compute the prior distribution, we make an assumption on the noise term $\epsilon$ that it follows an independent prior distribution $p(\epsilon)$, such as a standard isotropic Gaussian or a Laplacian. 
Then according to the change-of-variable formula, the prior distribution of the dependent latents can be written as 
\begin{align*}
    \log p(\hat{Z};\theta^{(u)}) = \sum_{i=1}^n(\log p(\hat{\epsilon}_i)+\log \text{det}_i).
\end{align*}
Intuitively, to minimize the KL divergence loss between $p(Z;\hat{\theta}^{(u)})$ and $q(Z|X, u)$, the network has to learn the correct structure and the underlying latent variables; otherwise, it can be difficult to transform the dependent latent variables $\hat{Z}$ to a factorized prior distribution, e.g., $\mathcal{N}(0, I)$.

\subsection{Parametric Implementation of the Prior Distribution}
We can make parametric assumption on the latent causal process and facilitate the learning of true causal structure and components. Here, we consider the linear SEM and more complex SEMs can be generalized. Specifically, we assume that the true generation process of the latent $Z$ is linear and only consists of scaling and shifting mechanisms:
\begin{align*}
Z=A(C^{(u)}Z)+S^{(u)}\epsilon + B^{(u)},
\end{align*}
where $A\in {[0,1]}^{n\times n}$ is a causal adjacency matrix which can be permuted to be strictly lower-triangular, $C^{(u)} \in \mathbb{R}^{n\times n}$ and $S^{(u)}\in \mathbb{R}^{n\times 1}$ are underlying domain-specific scaling matrix and vector for domain $u$, respectively, $B^{(u)} \in \mathbb{R}^{n\times 1}$ is the underlying domain-specific shift vector, and $\epsilon$ is the independent noise. 

To estimate the latent variables $Z$, the causal structure $A$, and capture the changes across domains, we introduce the learnable scaling $\hat{C} \in \mathbb{R}^{n\times n}, \hat{S} \in \mathbb{R}^{n \times 1}$and bias parameters $\hat{B} \in \mathbb{R}^{n\times 1}$ and pre-define a causal ordering as $\hat{Z}_1, \hat{Z}_2, \dots, \hat{Z}_n$.
Then we have the matrix form as 
\begin{align*}
    \hat{\epsilon} = (\hat{Z} - \hat{B}^{(u)} - \hat{A} \hat{C}^{(u)} \hat{Z})/\hat{S}^{(u)}.
\end{align*}
Note that the determinant of the strictly lower triangular matrix $\hat{C}$ is $0$.  Given a prior distribution of the noise term $p(\hat{\epsilon})$, and according to the change-of-variable rule, we then have the prior distribution for $\hat{Z}$ in parametric case as 
\begin{align*}
    \log p(\hat{Z};\hat{\theta}^{(u)}) = \sum_{i=1}^n (\log p(\hat{\epsilon}_i) -  \log |\hat{S}^{(u)}_i|).
\end{align*}

\subsection{Full Objective}
After we have properly defined the needed distributions $p(X|Z;\hat{\theta}^{(u)}), q(Z|X,u), p(Z;\hat{\theta}^{(u)})$, we can train our model to minimize the loss function $\mathcal{L}_\text{ELBO}$. However, without any further constraint, the powerful network may choose to use the fully connected causal graph during training. In other words, all lower-triangular elements of the estimated graph $\hat{A}$ is non-zero, which implies that each component $\hat{Z}_i$ is caused by all previous $i-1$ components. To exclude such unwanted solutions and encourage the model to learn the true causal structure among components of $Z$, we apply the $\ell_1$ regularization on $\hat{A}$, i.e.,
\begin{align*}
    \mathcal{L}_\text{sparsity} = \|\hat{A}\|_1.
\end{align*}
It is worth noting that the sparsity regularization term above is an approximation of the sparsity constraint on the edges of the estimated Markov network specified in \cref{theorem:identifiabiltiy_markov_network,theorem:identifiabiltiy_causal_variables}, since it is not straightforward to impose the latter constraint in a differentiable end-to-end training process. A more sophisticated alternative is to impose sparsity constraint on $(I-\hat{A})^T\Omega^{-1} (I-\hat{A})$ where $\Omega$ is a randomly sampled positive diagonal matrix. Note that this corresponds to the formula of precision matrix whose nonzero entries represent the moral graph under certain conditions~\citep{loh2014high} and we leave it for future investigation.

Finally, the full training objective is 
\begin{align*}
    \mathcal{L}_\text{full} = \mathcal{L}_\text{ELBO} + \mathcal{L}_\text{sparsity}.
\end{align*}
After the model converges, the output of the encoder $\hat{Z}$ is our recovered latents from the observations in multiple domains and 
the revealed causal structure is in $\hat{A}$ which encapsulates the causal relationships across the components.

\begin{figure*}[ht]
    \centering
   \begin{tabular}{cccc}
        \includegraphics[scale=0.13]{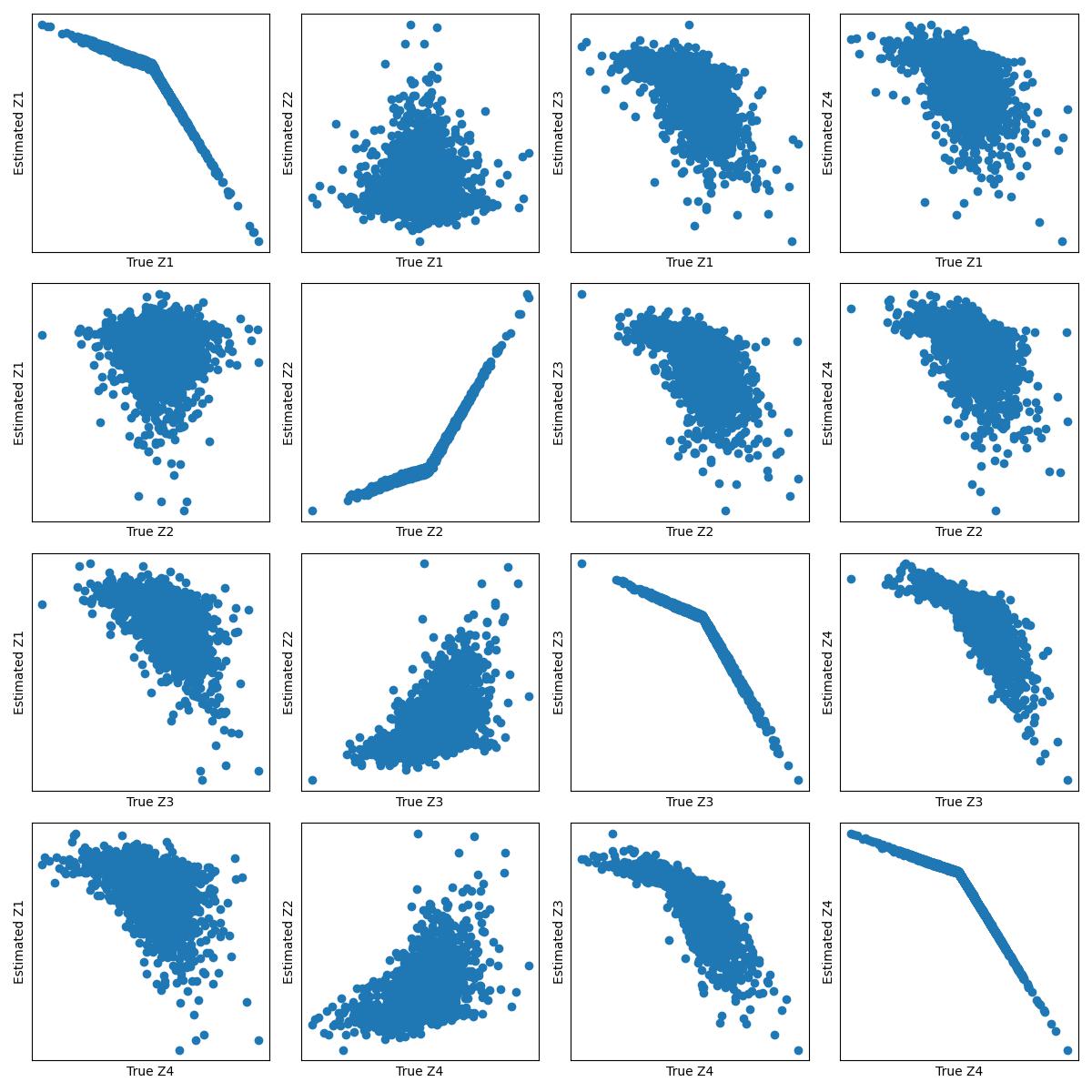}&
        \includegraphics[scale=0.13]{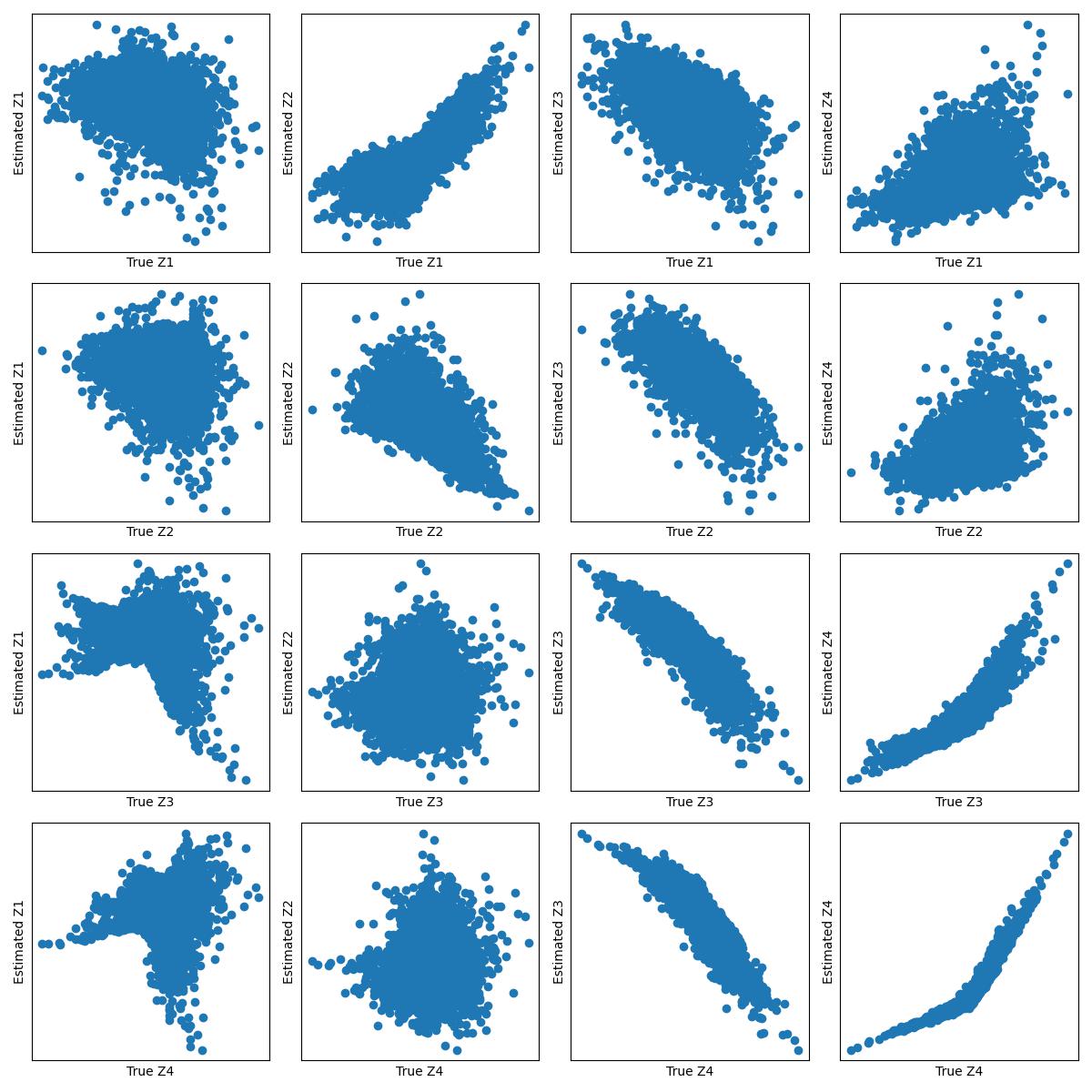} &
        \includegraphics[scale=0.13]{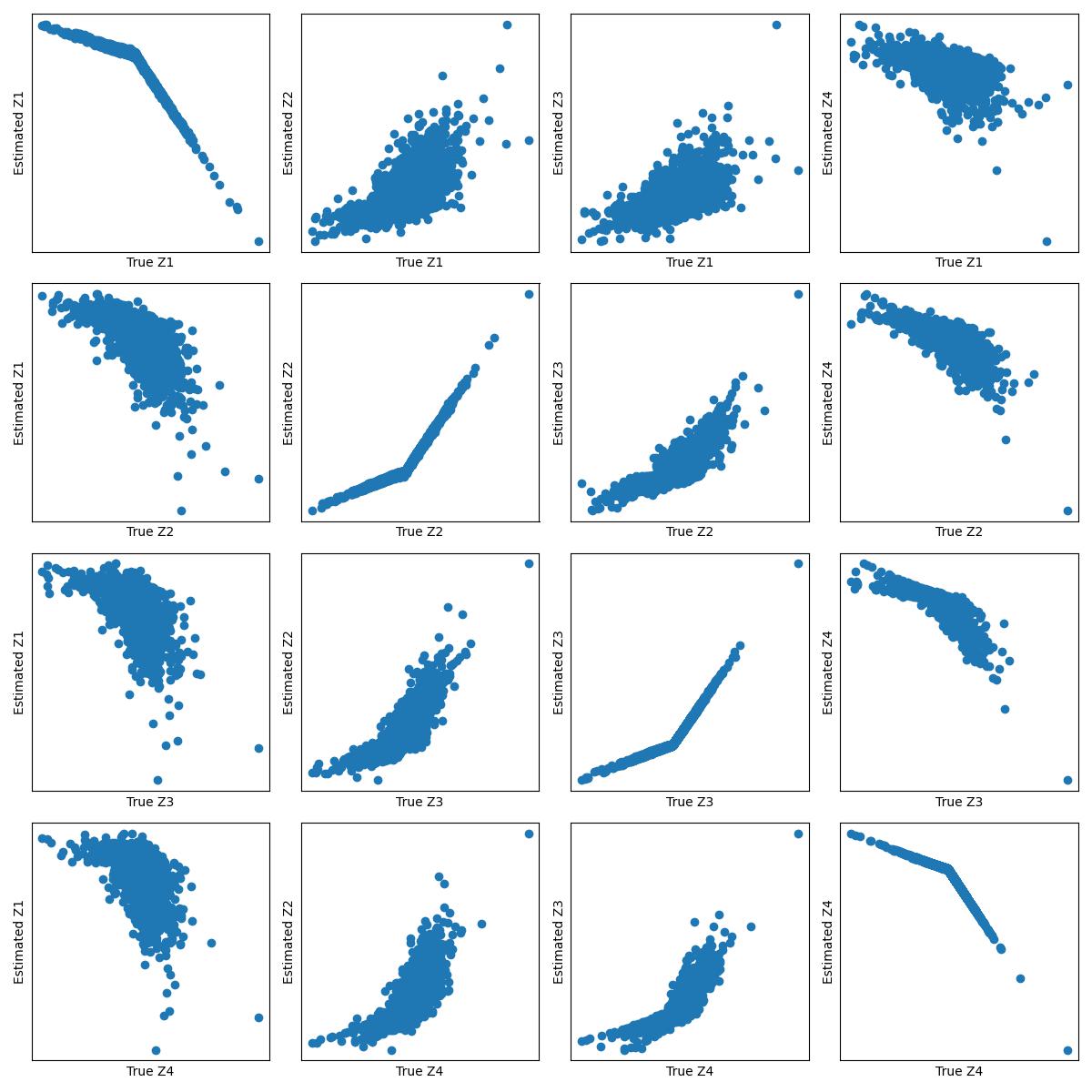} &
        \includegraphics[scale=0.13]{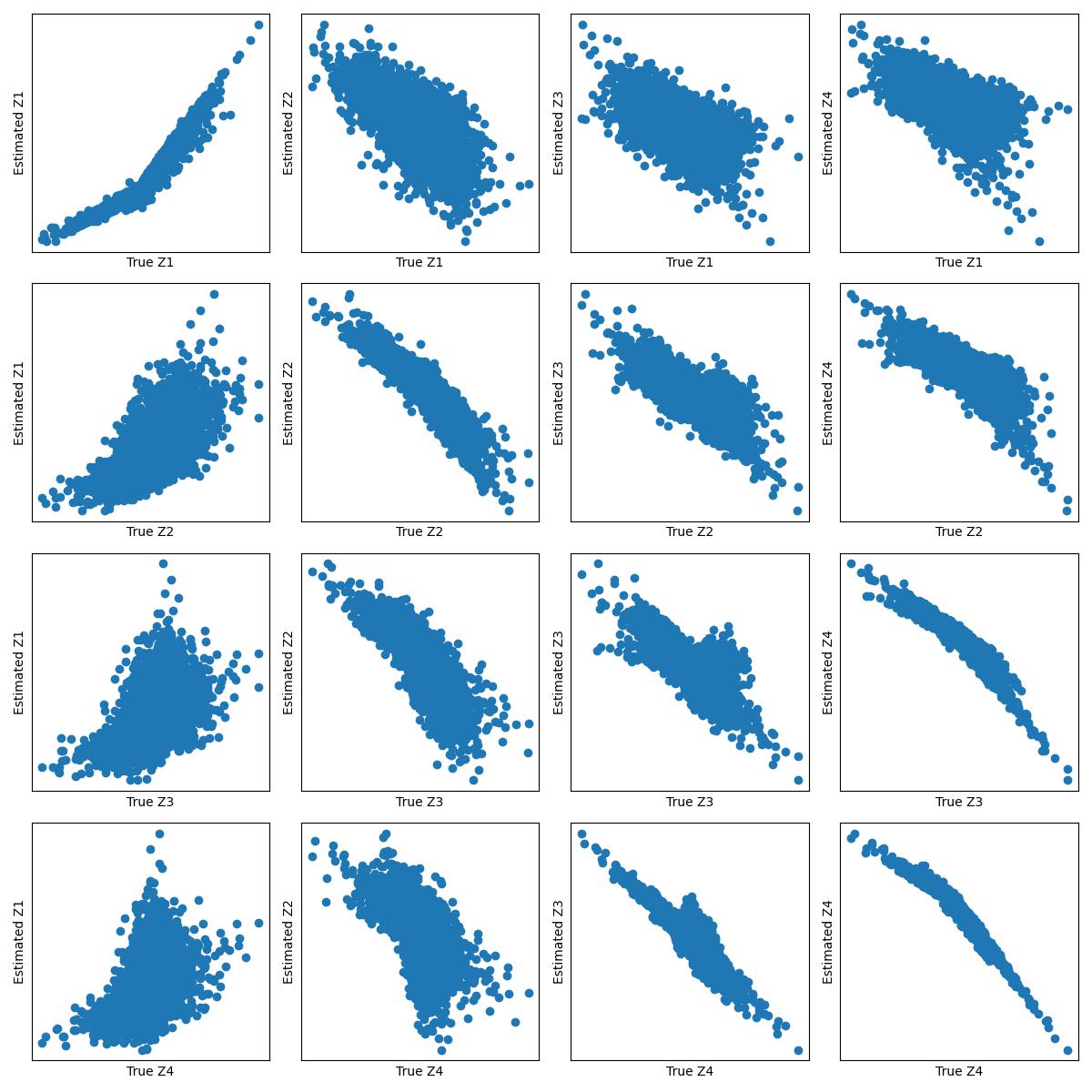}\\
   \end{tabular}
    \caption{Recovered latent variables v.s. the true latent variables with Non-Parametric Approach. (a) Y-structure with Laplace noise. (b) Y-structure with Gaussian noise. (c) Chain structure with Laplace noise. (d) Chain structure with Gaussian noise. In each sub-figure, $i$-th row and $j$-th column depcits the relationship between the estimated $\hat{Z}_i$ and the true components $Z_j$.}
    \label{fig:exp_nonlin}
\end{figure*}

\begin{figure*}[ht]
    \centering
   \begin{tabular}{cccc}
        \includegraphics[scale=0.13]{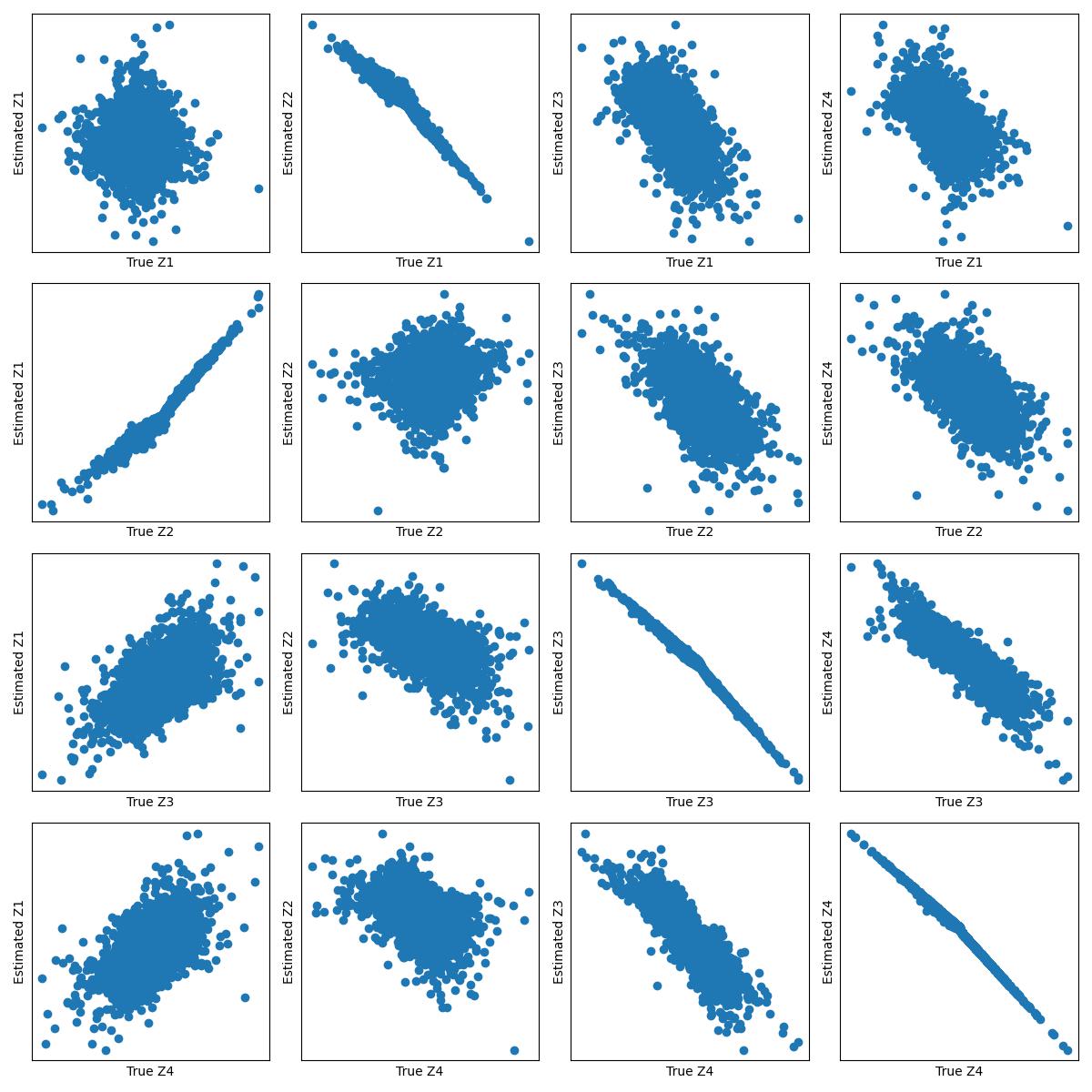}&
        \includegraphics[scale=0.13]{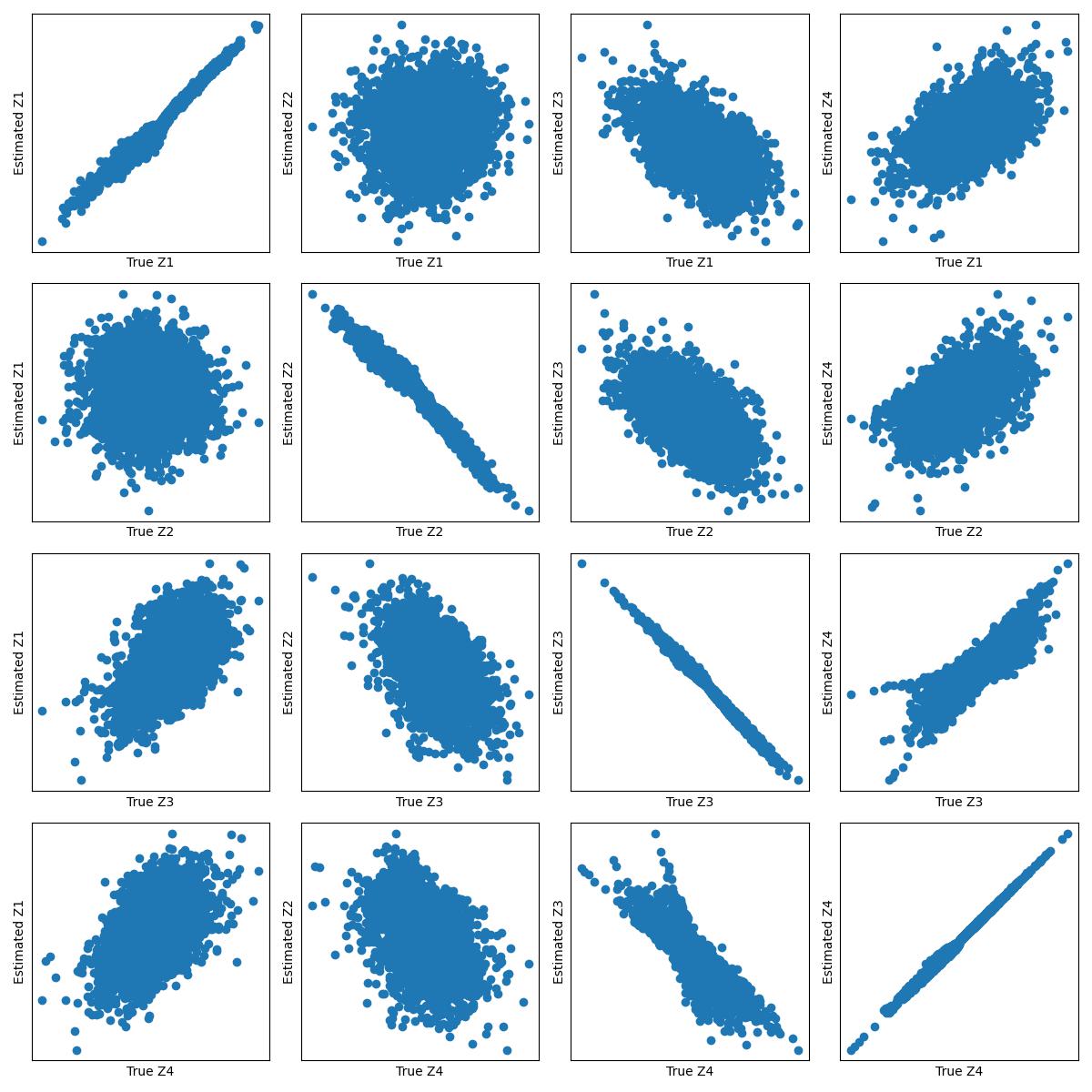} &
        \includegraphics[scale=0.13]{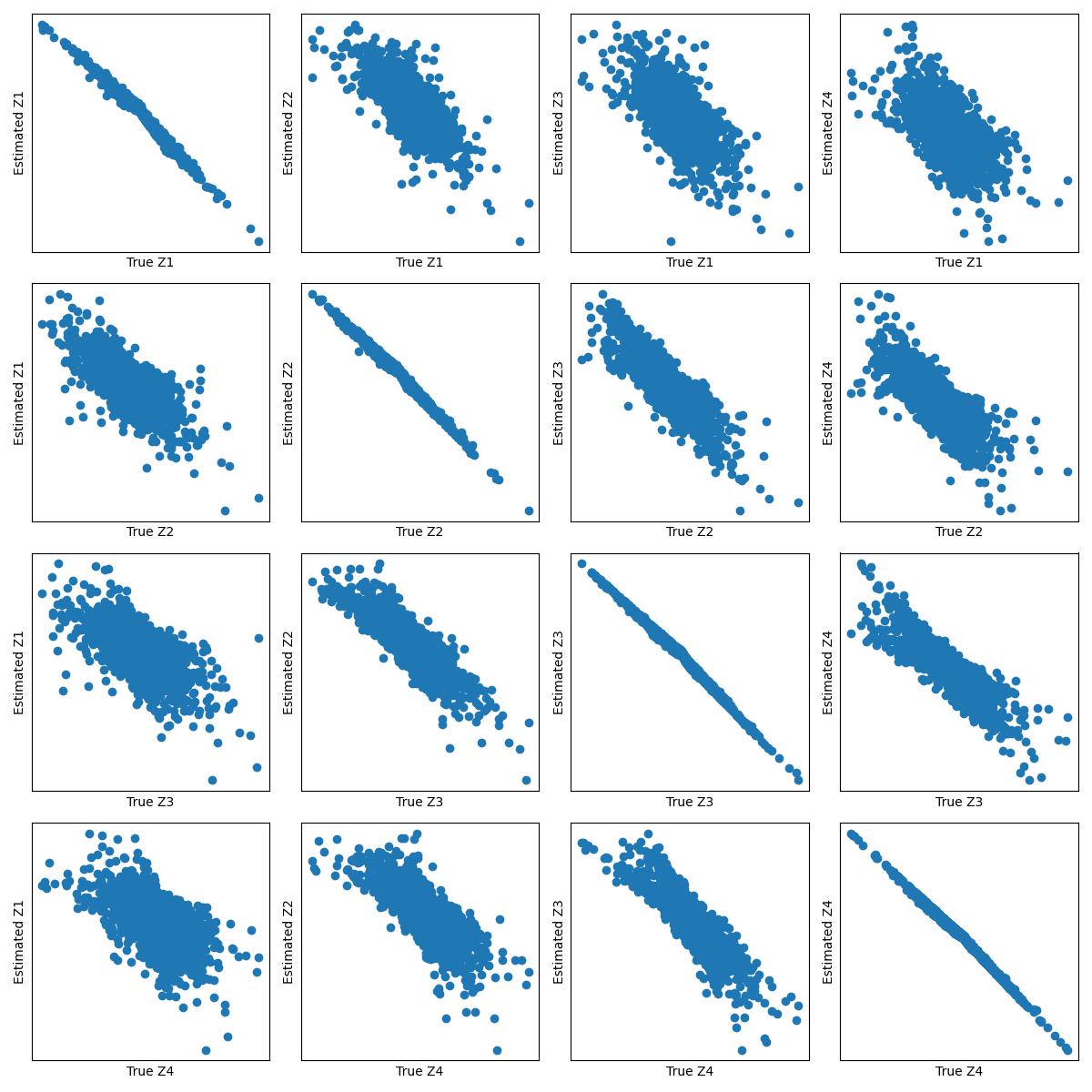} &
        \includegraphics[scale=0.13]{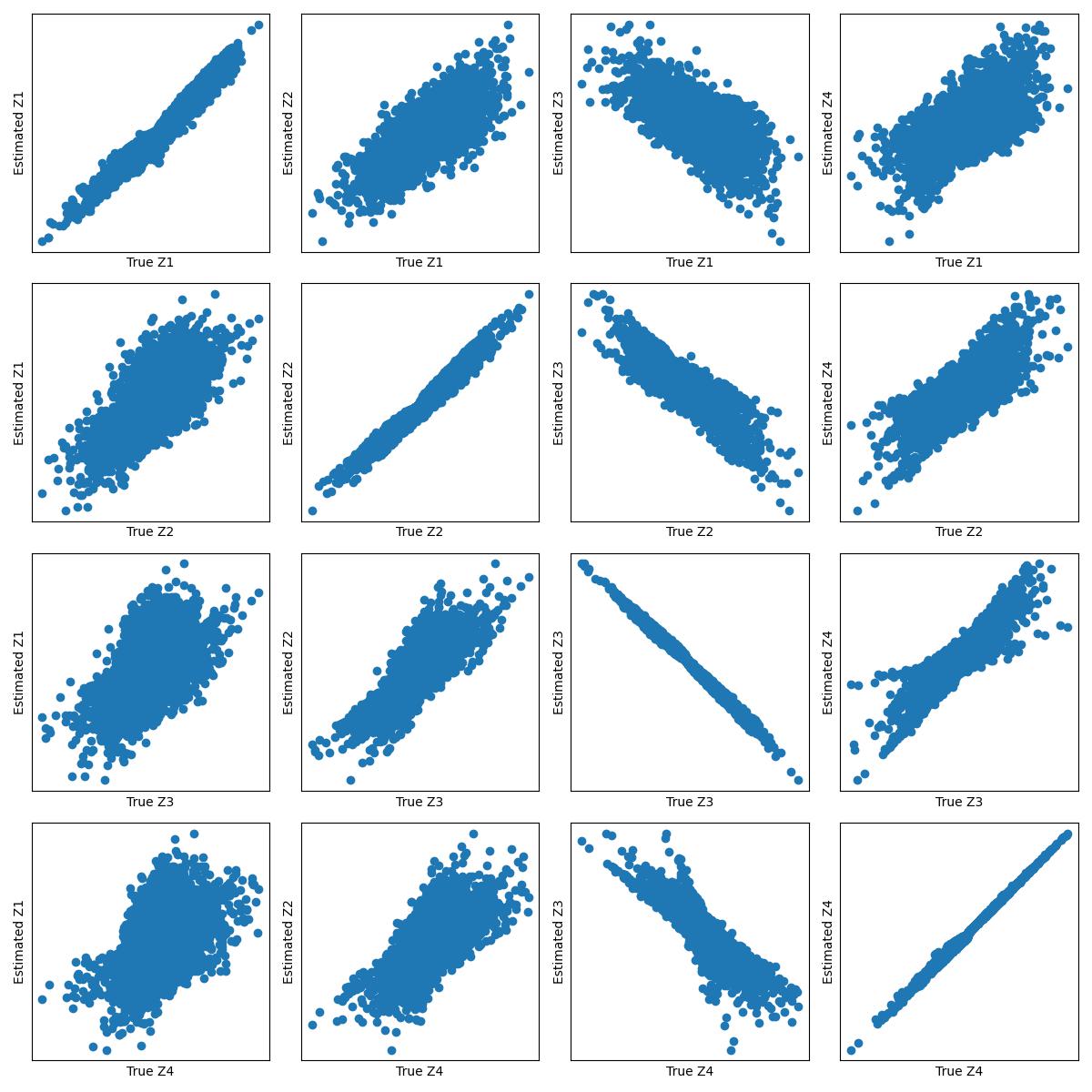}\\
   \end{tabular}
    \caption{Recovered latent variables v.s. the true latent variables with Linear Parameterization Approach. The $X$-axis denotes the components of true latent variables $Z$ and the $Y$-axis represent the components of estimated latent variables $\hat{Z}$.  (a) Y-structure with Laplace noise. (b) Y-structure with Gaussian noise. (c) Chain structure with Laplace noise. (d) Chain structure with Gaussian noise.\looseness=-1}
    \label{fig:exp_lin}
    \vspace{-0.3em}
\end{figure*}

\subsection{Simulations}

To verify our theory and the proposed implementations, we run experiments on the simulated data because the ground truth causal adjacency matrix and the latent variables across domains are available for simulated data. Consequently, we consider following common causal structures (i) Y-structure with 4 variables, $Z_1\rightarrow Z_3 \leftarrow Z_2, Z_3\rightarrow Z_4$ and (ii) chain structure $Z_1\rightarrow Z_2 \rightarrow Z_3 \rightarrow Z_4$. The noises are modulated with scaling random sampled from $\operatorname{Unif}[0.5, 2]$ and shifts are sampled from $\operatorname{Unif}[-2,2]$. The scaling on the $Z$ are also randomly sampled from $\operatorname{Unif}[0.5, 2]$. In other words, the changes are modular. After generating $Z$, we feed the latent variables into multilayer perceptron (MLP) with orthogonal weights and LeakyReLU activations for invertibility.  Specifically, we sample orthogonal matrix as the weights of the MLP layers. Since orthogonal matrix and LeakyReLU are invertible, the MLP function is also invertible.

We present the results in Figures \ref{fig:exp_nonlin} and \ref{fig:exp_lin}. Each sub-figure consist of $4\times 4$ panels and penal on $i$-th row and $j$-th column denotes the relationship between the estimated component $\hat{Z}_i$ with the true latent $Z_{j}$. We can see that under most cases, our model learns a strong one-to-one correspondence from the estimated components and the true components. For instance, the first column in Figure \ref{fig:exp_nonlin} show that $\hat{Z}_1$ is strongly correlated with the true components $Z_1$ while it is nearly independent from the true $Z_2$.

From the estimated $\hat{A}$, we find that our method is able to recover the true causal structure. For instance, on the Y-structure with $Z_1\rightarrow Z_3 \leftarrow Z_2$ and $Z_3\rightarrow Z_4$, our estimated model only keep the components $\hat{A}_{1,3}, \hat{A}_{2,3}, \hat{A}_{3,4}$ nonzero with the proposed sparsity regularization. The estimated causal graph is consistent with the true Y-structure causal graph. 
We can also see that the latent causal structure is also recovered from Figures \ref{fig:exp_lin} and \ref{fig:exp_nonlin}. We observe that the learned $\hat{Z}_1$ is strongly correlated with the true $Z_2$ and is independent from the true $Z_1$, but correlated with the $\hat{Z}_3$ and $\hat{Z}_4$. These results align well with the true causal graph since $Z_2$ is independent from $Z_1$ while is the cause of $Z_3$ and $Z_4$. \looseness=-1

The experiments support our theoretical result that the components and structure are identifiable up to certain indeterminacies. As for the results in Figure \ref{fig:exp_nonlin}, we observe that our non-parametric method is still able to recover the true latent variables with Laplace noise. 

\section{Related Work}

Causal representation learning aims to unearth causal latent variables and their relations from observed data. Despite its significance, the identifiability of the hidden generating process is known to be impossible without additional constraints, especially with only observational data. In the linear, non-Gaussian case, \citet{Silva06} recover the Markov equivalence class, provided that each observed variable has a unique latent causal parent; \citet{xie2020generalized, cai2019triad} estimate the latent variables and their relations assuming at least twice measured variables as latent ones, which has been further extended to learn the latent hierarchical structure \citep{xie2022identification}. Moreover, \citet{adams2021identification} provide theoretical results on the graphical conditions for identification. In the linear, Gaussian case, \citet{huang2022latent} leverage rank deficiency of the observed sub-covariance matrix to estimate the latent hierarchical structure, while \citet{dong2023versatile} further extend the rank constraint to accommodate flexibly related latent and observed variables. In the discrete case, \citet{kivva2021learning} identify the latent causal graph up to Markov equivalence by assuming a mixture model where the observed children sets of any pair of latent variables are different.

Given the challenge of identifiability on purely observational data, a different line of research leverage experiments by assuming the accessibility of various types of interventional data. Based on the single-node perfect intervention, \citet{squires2023linear} leverage single-node interventions for the identifiability of linear causal model and linear mixing function; \citep{varici2023score} incorporate for nonlinear causal model and linear mixing function; \citep{varici2023score, buchholz2023learning, jiang2023learning} provide the identifiability of the nonparametric causal model and linear mixing function; \citep{ahuja2023interventional} further generalize the result to nonparametric causal model and polynomial mixing functions with additional constraints on the latent support; and \citep{brehmer2022weakly, von2023nonparametric, jiang2023learning} explore the nonparametric settings for both the causal model and mixing function. In addition to the single-node perfect interventions, \citet{brehmer2022weakly} introduced counterfactual pre- and post-intervention views; \citet{von2023nonparametric} assume two distinct, paired interventions per node for multivariate causal models; \citet{zhang2023identifiability} explore soft interventions on polynomial mixing functions; and \citet{jiang2023learning} places specific structural restrictions on the latent causal graph.

Our study lies in the line of leveraging only observational data, and provides identifiability results in the general nonparametric settings on \textit{both} the latent causal model and mixing function.  Unlike prior works with observational data, we do not have any parametric assumptions or graphical restrictions; Compared to those relying on interventional data, our results naturally benefit from the heterogeneity of observational data (e.g., multi-domain data, nonstationary time series) and avoid additional experiments for interventions.

\section{Conclusion and Discussions}
We establish a set of new identifiability results to reveal latent causal variables and latent structures in the general nonparametric settings. Specifically, with sparsity regularization during estimation and sufficient changes in the causal influences, we demonstrate that the revealed latent variables and structures are related to the underlying causal model in a specific, nontrivial way.
In contrast to recent works on the recovery of latent causal variables and structures, our results rely on purely observational data without graphical or parametric constraints.
Our results offer insight into unveiling the latent causal process in one of the most universal settings. Experiments in various settings have been conducted to validate the theory. As future work, we will explore the scenario where only a subset of the causal relations change, which could be a challenge as well as a chance, and show up to what extent the underlying causal variables can be recovered with potentially weaker assumptions.
\looseness=-1

\vspace{-0.1em}
\section*{Acknowledgements}

The authors would like to thank the anonymous reviewers for helpful comments and suggestions. The authors would also like to acknowledge the support from NSF Grant 2229881, the National Institutes of Health (NIH) under Contract R01HL159805, and grants from Apple Inc., KDDI Research Inc., Quris AI, and Florin Court Capital.

\vspace{-0.1em}
\section*{Impact Statement}
This paper presents work whose goal is to advance the field of Machine Learning. There are many potential societal consequences of our work, none which we feel must be specifically highlighted here.

\bibliography{ref}
\bibliographystyle{icml2024}

\newpage
\appendix
\onecolumn
\begin{center}
{\LARGE \bf 
Supplementary Material}
\end{center}

\section{Proofs of Useful Lemmas}
\subsection{Proof of \cref{lemma:nonzero_diagonal_entries}}\label{app:proof:nonzero_diagonal_entries}
The following lemma is a rather standard result in linear algebra \citep{strang2006linear,strang2016introduction}, which has also been used in existing works in causal representation learning, such as \citet{lachapelle2021disentanglement}. We provide the proof here for completeness.
\begin{lemma}
\label{lemma:nonzero_diagonal_entries}
For any invertible matrix $A$, there exists a permutation of its columns such that the diagonal entries of the resulting matrix are nonzero.
\end{lemma}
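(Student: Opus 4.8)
The plan is to invoke the Leibniz expansion of the determinant. Since $A$ is invertible, $\det(A)\neq 0$, and by the Leibniz formula $\det(A)=\sum_{\sigma\in S_n}\operatorname{sgn}(\sigma)\prod_{i=1}^{n}A_{i,\sigma(i)}$, where $S_n$ is the symmetric group on $[n]$. A sum that is nonzero must contain at least one nonzero summand, so there exists a permutation $\sigma\in S_n$ with $\prod_{i=1}^{n}A_{i,\sigma(i)}\neq 0$; equivalently, $A_{i,\sigma(i)}\neq 0$ for every $i\in[n]$.

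Next I would use $\sigma$ to reorder the columns. Let $B$ be the matrix obtained from $A$ by placing the $\sigma(i)$-th column of $A$ into position $i$, i.e.\ $B_{i,j}=A_{i,\sigma(j)}$ for all $i,j$. Then the $(i,i)$ entry of $B$ is $B_{i,i}=A_{i,\sigma(i)}$, which is nonzero by the previous step, so all diagonal entries of $B$ are nonzero. (Reordering columns corresponds to right-multiplication by the permutation matrix associated with $\sigma$, so $B$ also remains invertible, although this is not needed for the stated claim.)

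I do not anticipate a genuine obstacle here; the only point requiring a little care is the bookkeeping of whether $\sigma$ records the target or the source position of each column, which I would pin down explicitly as above so that the ``diagonal of the resulting matrix'' is literally $A_{i,\sigma(i)}$. An alternative argument avoiding determinants is to form the bipartite graph on row indices and column indices with an edge $(i,j)$ whenever $A_{i,j}\neq 0$, and verify Hall's marriage condition (if some set $S$ of rows had all its nonzero entries confined to fewer than $|S|$ columns, those rows would be linearly dependent, contradicting invertibility), yielding a perfect matching and hence the desired permutation; but the Leibniz-formula route is shorter, and I would present that one.
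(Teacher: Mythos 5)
Your proof is correct and uses the same key tool as the paper, namely the Leibniz expansion of the determinant; the paper merely phrases it contrapositively (assuming every column permutation leaves a zero diagonal entry and deducing $\det(A)=0$), whereas you argue directly that a nonzero determinant forces some permutation $\sigma$ with all $A_{i,\sigma(i)}\neq 0$. The explicit column-reordering bookkeeping and the Hall's-theorem aside are fine but not needed beyond what the paper does.
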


\begin{proof}
Suppose by contradiction that there exists at least a zero diagonal entry for every column permutation. By Leibniz formula, we have
\[
    \det(A) = \sum_{\sigma \in \mathcal{S}_{n}} \left(\operatorname{sgn}(\sigma) \prod_{i=1}^{n} A_{i,\sigma(i)}\right),
\]
where $\mathcal{S}_{n}$ denotes the set of $n$-permutations. Since there exists a zero diagonal entry for every permutation, we have
\[
    \prod_{i=1}^{n} A_{i,\sigma(i)} = 0, \quad \forall \sigma \in \mathcal{S}_{n},
\]
which implies $\det(A) = 0$ and that matrix $A$ is not invertible. This is a contradiciton with the assumption that $A$ is invertible.
\end{proof}

\subsection{Proof of \cref{lemma:zero_submatrix}}
\begin{lemma}\label{lemma:zero_submatrix}
Suppose matrix $A\in\mathbb{R}^{n\times n}$ contains a zero submatrix of order $(i+1)\times (n-i)$ for some $i \in [n-1]$. Then, matrix $A$ is not invertible.
\end{lemma}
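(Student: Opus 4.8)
The plan is to show directly that $\det(A)=0$ using the Leibniz expansion together with a pigeonhole argument on the support of each permutation. Write
\[
\det(A)=\sum_{\sigma\in\mathcal{S}_n}\operatorname{sgn}(\sigma)\prod_{k=1}^n A_{k,\sigma(k)},
\]
and let $R\subseteq[n]$ with $|R|=i+1$ and $C\subseteq[n]$ with $|C|=n-i$ be the sets of row and column indices of the given zero submatrix, so that $A_{r,c}=0$ for every $r\in R$ and $c\in C$. The goal will be to argue that every single term of this sum vanishes.

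The key step is the following observation: for any permutation $\sigma\in\mathcal{S}_n$, there exists $r\in R$ with $\sigma(r)\in C$. Indeed, $\sigma$ restricted to $R$ is injective, so $|\sigma(R)|=i+1$; on the other hand $|[n]\setminus C|=n-(n-i)=i<i+1$, so $\sigma(R)$ cannot be contained in $[n]\setminus C$, and hence $\sigma(R)\cap C\neq\varnothing$. Picking such an $r$, the factor $A_{r,\sigma(r)}$ lies in the zero submatrix and therefore equals $0$, which forces $\prod_{k=1}^n A_{k,\sigma(k)}=0$. Since this holds for every $\sigma$, all terms in the Leibniz sum are zero, so $\det(A)=0$ and $A$ is not invertible.

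There is no real obstacle here beyond making the counting precise: the whole argument is essentially the easy (necessity) half of the Frobenius–König theorem, and the inequality $i+1>i$ is what does all the work. One alternative route, if a determinant-free argument were preferred, would be to permute rows and columns so the zero block sits in the lower-left corner and then observe that the top $i+1$ rows, having nonzero entries only in the first $i$ columns, span a subspace of dimension at most $i<i+1$, hence are linearly dependent — but the Leibniz argument above is the cleanest and dovetails with the determinant-based reasoning already used for \cref{lemma:nonzero_diagonal_entries}.
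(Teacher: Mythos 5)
Your proof is correct, but it takes a different route from the paper. You argue via the Leibniz expansion: for any permutation $\sigma$, injectivity on the $i+1$ rows $R$ of the zero block together with $|[n]\setminus C| = i < i+1$ forces some $r\in R$ with $\sigma(r)\in C$, so every term of $\det(A)$ contains a factor from the zero block and vanishes; this counting is sound and is indeed the easy half of the Frobenius--K\"onig theorem. The paper instead gives a one-line rank argument directly on the columns: the $n-i$ columns of the zero block have nonzero entries in at most $n-i-1$ rows, hence span a subspace of dimension less than $n-i$ and are linearly dependent, so $A$ is singular. Your ``alternative route'' at the end is essentially the paper's argument in transposed form (rows versus columns), so you have in effect both proofs. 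The trade-off: your Leibniz argument stylistically matches the determinant computation used for \cref{lemma:nonzero_diagonal_entries}, while the paper's dimension-counting argument is shorter, determinant-free, and avoids any permutation bookkeeping; either suffices for how \cref{lemma:zero_submatrix} is used in the proof of \cref{theorem:partial_disentanglement}.
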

\begin{proof}
By the given condition, there exist $n-i$ columns in matrix $A$ of which $i+1$ rows are zero, i.e., at most $n-i-1$ rows are not zero. This implies that the $n-i$ column vectors span a subspace of dimension less than $n-i$, which thus are linearly dependent. Therefore, matrix $A$ cannot be invertible.
\end{proof}

\subsection{Proofs of \cref{lemma:subet_neighbors,lemma:intimate_neighbors}}
We provide the following lemmas that will be used to prove \cref{theorem:identifiabiltiy_causal_variables}.
\begin{lemma}\label{lemma:subet_neighbors}
Given Markov network $\mathcal{M}_Z$ over variables $Z$,  let $N_{Z_i}$ and $\Psi_{Z_i}$ be the neighbors and intimate neighbors of $Z_i$ in $\mathcal{M}_Z$, respectively. Then, for each $i\neq j$,  we have $Z_i\in\Psi_{Z_j}$ if and only if $\{Z_j\}\cup N_{Z_j}\subseteq \{Z_i\}\cup N_{Z_i}$.
\end{lemma}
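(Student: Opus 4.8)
The plan is to prove the biconditional in \cref{lemma:subet_neighbors} by unfolding the definition of the intimate neighbor set, which states that $Z_i \in \Psi_{Z_j}$ iff $Z_i$ (with $i \ne j$) is adjacent to $Z_j$ and also adjacent to every other neighbor of $Z_j$ in $\mathcal{M}_Z$. I would first establish some convenient notation: for a vertex $Z_j$, write its closed neighborhood as $\bar N_{Z_j} := \{Z_j\} \cup N_{Z_j}$. The goal is then to show $Z_i \in \Psi_{Z_j} \iff \bar N_{Z_j} \subseteq \bar N_{Z_i}$.

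\textbf{Forward direction.} Suppose $Z_i \in \Psi_{Z_j}$. I would show $\bar N_{Z_j} \subseteq \bar N_{Z_i}$ by taking an arbitrary $Z_k \in \bar N_{Z_j}$ and splitting into cases: (i) if $Z_k = Z_j$, then since $Z_i$ is adjacent to $Z_j$ by definition of $\Psi_{Z_j}$, we have $Z_j \in N_{Z_i} \subseteq \bar N_{Z_i}$; (ii) if $Z_k = Z_i$, then trivially $Z_k \in \bar N_{Z_i}$; (iii) if $Z_k \in N_{Z_j}$ with $Z_k \ne Z_i$, then $Z_k$ is a neighbor of $Z_j$ other than $Z_i$, so by the defining property of the intimate neighbor set, $Z_i$ is adjacent to $Z_k$, giving $Z_k \in N_{Z_i} \subseteq \bar N_{Z_i}$. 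Hence $\bar N_{Z_j} \subseteq \bar N_{Z_i}$.

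\textbf{Reverse direction.} Suppose $\bar N_{Z_j} \subseteq \bar N_{Z_i}$ (with $i \ne j$). Since $Z_j \in \bar N_{Z_j} \subseteq \bar N_{Z_i}$ and $Z_j \ne Z_i$, we get $Z_j \in N_{Z_i}$, i.e., $Z_i$ and $Z_j$ are adjacent. Next, let $Z_k$ be any neighbor of $Z_j$ with $Z_k \ne Z_i$; I must show $Z_i$ is adjacent to $Z_k$. Since $Z_k \in N_{Z_j} \subseteq \bar N_{Z_j} \subseteq \bar N_{Z_i}$ and $Z_k \ne Z_i$, we conclude $Z_k \in N_{Z_i}$, i.e., $Z_i$ is adjacent to $Z_k$. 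Thus $Z_i$ is adjacent to $Z_j$ and to all other neighbors of $Z_j$, which is exactly the condition for $Z_i \in \Psi_{Z_j}$.

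This argument is essentially a direct set-theoretic manipulation, so I do not anticipate a genuine obstacle; the only subtlety to handle carefully is the bookkeeping around the vertices $Z_i$ and $Z_j$ themselves (they lie in the closed neighborhoods but not the open ones), and making sure the ``other neighbors'' clause in the definition of $\Psi_{Z_j}$ is matched precisely with the $Z_k \ne Z_i$ case split. I would state the closed-neighborhood notation explicitly at the outset to keep the case analysis clean.
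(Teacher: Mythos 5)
Your proof is correct and follows essentially the same route as the paper's: both directions are handled by direct set-theoretic case analysis on the closed neighborhoods, matching the definition of $\Psi_{Z_j}$ against the inclusion $\{Z_j\}\cup N_{Z_j}\subseteq \{Z_i\}\cup N_{Z_i}$. The only cosmetic difference is that the paper proves the ``inclusion implies $Z_i\in\Psi_{Z_j}$'' direction by contraposition, whereas you argue it directly; the content is the same.
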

\begin{proof}
We prove both directions as follows.

\textbf{Sufficient condition.} \ \
We proceed by contraposition. Suppose $Z_i\not\in\Psi_{Z_j}$. We consider the following two cases:
\begin{itemize}
\item Suppose $Z_i\not\in N_{Z_j}$. This implies $Z_j\not\in N_{Z_i}$ and thus $\{Z_j\}\not\subseteq \{Z_i\}\cup N_{Z_i}$. Therefore, $\{Z_j\}\cup N_{Z_j}\not\subseteq \{Z_i\}\cup N_{Z_i}$.
\item Suppose $Z_i\in N_{Z_j}$ and that there exists $Z_k\in N_{Z_j}$ such that $Z_k\not\in N_{Z_i}$ and $Z_k\neq Z_i$. Clearly, we have $Z_k\in \{Z_j\}\cup N_{Z_j}$ and $Z_k\not\in \{Z_i\}\cup N_{Z_i}$, which implies $\{Z_j\}\cup N_{Z_j}\not\subseteq \{Z_i\}\cup N_{Z_i}$.
\end{itemize}
Therefore, we have shown that if $Z_i\not\in\Psi_{Z_j}$, then $\{Z_j\}\cup N_{Z_j}\not\subseteq \{Z_i\}\cup N_{Z_i}$.

\textbf{Necessary condition.} \ \
For each $Z_k\in \{Z_j\}\cup N_{Z_j}$, we aim to show
\begin{equation}\label{eq:proof:subset_neighbors}
Z_k\in \{Z_i\}\cup N_{Z_i}.
\end{equation}
We consider the following two cases:
\begin{itemize}
\item Suppose $Z_k=Z_j$. Since $Z_i\in\Psi_{Z_j}$, we have $Z_j\in N_{Z_i}$, which implies $Z_k=Z_j\in N_{Z_i}\subsetneq \{Z_i\}\cup N_{Z_i}$.
\item Suppose $Z_k\in N_{Z_j}$. Clearly, Eq.~\eqref{eq:proof:subset_neighbors} is true if $Z_k=Z_i$. Now suppose $Z_k\in N_{Z_j}\setminus\{Z_i\}$. Since $Z_i\in\Psi_{Z_j}$, we have $Z_k\in N_{Z_i}$, which implies Eq.~\eqref{eq:proof:subset_neighbors}.
\end{itemize}
In each of the cases above, Eq.~\eqref{eq:proof:subset_neighbors} is true.
\end{proof}
\begin{lemma}\label{lemma:intimate_neighbors}
Given Markov network $\mathcal{M}_Z$ over variables $Z$,  let $N_{Z_i}$ and $\Psi_{Z_i}$ be the neighbors and intimate neighbors of $Z_i$ in $\mathcal{M}_Z$, respectively. Suppose $i\neq j$ and $\{Z_i\}\cup N_{Z_i}= \{Z_j\}\cup N_{Z_j}$. Then, we have  $\{Z_i\}\cup \Psi_{Z_i}= \{Z_j\}\cup \Psi_{Z_j}$.
\end{lemma}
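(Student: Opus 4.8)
The plan is to reduce everything to \cref{lemma:subet_neighbors}, which characterizes intimate-neighbor membership $Z_a \in \Psi_{Z_b}$ (for $a \neq b$) purely as an inclusion between closed neighborhoods, namely $\{Z_b\} \cup N_{Z_b} \subseteq \{Z_a\} \cup N_{Z_a}$. Since the hypothesis here is the \emph{equality} $\{Z_i\} \cup N_{Z_i} = \{Z_j\} \cup N_{Z_j}$, both inclusions between these closed neighborhoods hold, and this should make the two intimate-neighbor sets interchangeable up to swapping $Z_i$ and $Z_j$.

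First I would record two immediate consequences of the hypothesis. Because $i \neq j$ and $Z_j \in \{Z_j\} \cup N_{Z_j} = \{Z_i\} \cup N_{Z_i}$, we get $Z_j \in N_{Z_i}$, and symmetrically $Z_i \in N_{Z_j}$; in particular $Z_i$ and $Z_j$ are adjacent. Then, applying \cref{lemma:subet_neighbors} to the pair $(i,j)$ together with $\{Z_j\} \cup N_{Z_j} \subseteq \{Z_i\} \cup N_{Z_i}$ yields $Z_i \in \Psi_{Z_j}$, and applying it to $(j,i)$ yields $Z_j \in \Psi_{Z_i}$.

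The core step is to show that for every $k \notin \{i,j\}$ we have $Z_k \in \Psi_{Z_i}$ if and only if $Z_k \in \Psi_{Z_j}$. By \cref{lemma:subet_neighbors}, $Z_k \in \Psi_{Z_i}$ is equivalent to $\{Z_i\} \cup N_{Z_i} \subseteq \{Z_k\} \cup N_{Z_k}$, while $Z_k \in \Psi_{Z_j}$ is equivalent to $\{Z_j\} \cup N_{Z_j} \subseteq \{Z_k\} \cup N_{Z_k}$ (the two applications are legitimate since $k \neq i$ and $k \neq j$ respectively); since the hypothesis identifies the two left-hand sides, these conditions coincide. Consequently $\Psi_{Z_i} \setminus \{Z_j\} = \Psi_{Z_j} \setminus \{Z_i\}$.

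Finally I would assemble the conclusion using $Z_j \in \Psi_{Z_i}$, $Z_i \in \Psi_{Z_j}$, and the previous equality:
\[
\{Z_i\} \cup \Psi_{Z_i} = \{Z_i, Z_j\} \cup \big(\Psi_{Z_i} \setminus \{Z_j\}\big) = \{Z_i, Z_j\} \cup \big(\Psi_{Z_j} \setminus \{Z_i\}\big) = \{Z_j\} \cup \Psi_{Z_j},
\]
as desired. The only point requiring care is the case split on whether $k \in \{i,j\}$: membership of $Z_i$ in $\Psi_{Z_j}$ and of $Z_j$ in $\Psi_{Z_i}$ handles the boundary indices, while the symmetric use of \cref{lemma:subet_neighbors} handles the rest. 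I do not anticipate any deeper obstacle once \cref{lemma:subet_neighbors} is available.
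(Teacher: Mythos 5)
Your proposal is correct and follows essentially the same route as the paper: both reduce the claim to \cref{lemma:subet_neighbors}'s closed-neighborhood characterization of intimate neighbors, handle the boundary memberships $Z_i\in\Psi_{Z_j}$ and $Z_j\in\Psi_{Z_i}$ separately, and derive the remaining memberships from the hypothesis $\{Z_i\}\cup N_{Z_i}=\{Z_j\}\cup N_{Z_j}$. The only cosmetic difference is that you prove the pointwise equivalence $Z_k\in\Psi_{Z_i}\iff Z_k\in\Psi_{Z_j}$ for $k\notin\{i,j\}$ and assemble, while the paper verifies the two set inclusions via a case split; the content is the same.
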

\begin{proof}
By the given condition and \cref{lemma:subet_neighbors}, we have $Z_i\in \Psi_{Z_j}$ and $Z_j\in \Psi_{Z_i}$. We first prove $\{Z_j\}\cup \Psi_{Z_j}\subseteq\{Z_i\}\cup \Psi_{Z_i}$. That is, for each $Z_k\in \{Z_j\}\cup \Psi_{Z_j}$, we aim to show
\begin{equation}\label{eq:proof:intimate_neighbors}
Z_k\in \{Z_i\}\cup \Psi_{Z_i}.
\end{equation}
We consider the following two cases:
\begin{itemize}
\item Suppose $Z_k=Z_j$. Since $Z_j\in \Psi_{Z_i}$, we have $Z_k=Z_j\in \Psi_{Z_i}\subsetneq \{Z_i\}\cup \Psi_{Z_i}$.
\item Suppose $Z_k\in \Psi_{Z_j}$. By \cref{lemma:subet_neighbors}, we have $\{Z_j\}\cup N_{Z_j}\subseteq \{Z_k\}\cup N_{Z_k}$, which implies $\{Z_i\}\cup N_{Z_i}\subseteq \{Z_k\}\cup N_{Z_k}$ by the given condition. Applying \cref{lemma:subet_neighbors} again indicates $Z_k\in \Psi_{Z_i}\subsetneq \{Z_i\}\cup \Psi_{Z_i}$.
\end{itemize}
In each of the cases above, Eq.~\eqref{eq:proof:intimate_neighbors} is true, which indicates $\{Z_j\}\cup \Psi_{Z_j}\subseteq\{Z_i\}\cup \Psi_{Z_i}$. Similar reasoning can be straightforwardly applied to prove $\{Z_i\}\cup \Psi_{Z_i}\subseteq\{Z_j\}\cup \Psi_{Z_j}$. Therefore, we have $\{Z_i\}\cup \Psi_{Z_i}= \{Z_j\}\cup \Psi_{Z_j}$.
\end{proof}

\section{Proof of \cref{proposition:relation_markov_network}}\label{sec:proof:relation_markov_network}

\begin{repproposition}{proposition:relation_markov_network}
Let the observations be sampled from the data generating process in Eq. (\ref{eq:data_generating_process}), and $\mathcal{M}_Z$ be the Markov network over $Z$. Suppose the following assumptions hold:
\begin{itemize}
\item A1 (Smooth and positive density): The probability density function of latent causal variables, i.e., $p_Z$, is twice continuously differentiable and positive in $\mathbb{R}^n$.
\item A2 (Sufficient changes): For each value of $Z$, there exist $2n+|\mathcal{M}_Z|+1$ values of $\theta$, i.e., $\theta^{(u)}$ with $u=0,\dots,2n+|\mathcal{M}_Z|$, such that the vectors $w(Z, {u})-w(z,0)$ with $u=1,\dots,2n+|\mathcal{M}_Z|$ are linearly independent, where vector $w(Z, {u})$ is defined as follows:
\[
w(Z, {u})=\left(\frac{\partial \log p(Z;\theta^{(u)})}{\partial  Z_i}\right)_{i\in[n]}\oplus\left(\frac{\partial^2 \log p(Z;\theta^{(u)})}{\partial  Z_i^2}\right)_{i\in[n]}\oplus \left(\frac{\partial^2 \log p(Z;\theta^{(u)})}{\partial  Z_i \partial  Z_j}\right)_{\{Z_i,Z_j\}\in\mathcal{E}(\mathcal{M}_Z),\,i<j}.
\]
\end{itemize}
Suppose that we learn $(\hat{g}, \hat{f},p_{\hat{Z}},\hat{\Theta})$ to achieve Eq. (\ref{eq:matched_distribution}). Then, for every pair of estimated latent variables $\hat{Z}_k$ and $\hat{Z}_l$ that are {\bf not adjacent in the Markov network} $\mathcal{M}_{\hat{Z}}$ over $\hat{Z}$, we have the following statements:
\begin{enumerate}[label=(\alph*)]
\item For each true latent causal variable $Z_i$, we have
\[\frac{\partial  Z_i}{\partial \hat{Z}_k}\frac{\partial  Z_i}{\partial \hat{Z}_l}=0.\]
\item For each pair of true latent causal variables $Z_i$ and $Z_j$ that are adjacent in the Markov network $\mathcal{M}_Z$, we have 
\[\frac{\partial  Z_i}{\partial \hat{Z}_k}\frac{\partial  Z_j}{\partial \hat{Z}_l}=0.\]
\end{enumerate}
\end{repproposition}

\begin{proof}
Denote by $\operatorname{vol} A$ the volume of matrix $A$, which is the product of its singular values. Note that $\operatorname{vol} A=\sqrt{\det AA^T}$ when $A$ is of full row rank. In the change-of-variable formula, when the Jacobian is a rectangular matrix, the absolute determinant of the Jacobian can be replaced with the matrix volume~\citep{benisrael1999change,gemici2016normalizing,khemakhem2020variational}.\looseness=-1

Since $X=g(Z)$ and $\hat{X}=\hat{g}(\hat{Z})$, by  Eq.~(\ref{eq:matched_distribution}) and the change-of-variable formula, we have
\[
p_{\hat{X}}=p_X\;\implies\; p_{\hat{g}(\hat{Z})}=p_{g(Z)} \;\implies\; p_{g^{-1}\circ\hat{g}(\hat{Z})}\operatorname{vol} J_{g^{-1}}=p_{Z}\operatorname{vol} J_{g^{-1}}\;\implies\; p_{v(\hat{Z})}=p_{Z},
\]
where $J_{g^{-1}}$ is the Jacobian matrix of $g^{-1}$ and $v\coloneqq  g^{-1}\circ\hat{g}$ is a composition of diffeomorphisms (and hence also a diffeomorphism). Let $J_v$ be the Jacobian matrix of $v$. The change-of-variable formula implies
\begin{flalign}
p(\hat{Z};\hat{\theta})|\det J_{v^{-1}}| &= p(Z;\theta)\nonumber \\
\log p(\hat{Z};\hat{\theta}) &= \log p(Z;\theta) + \log|\det J_v|.\label{Eq:ZtoZtile}
\end{flalign}

Suppose $\hat{Z}_k$ and $\hat{Z}_l$ are conditionally independent given $\hat{Z}_{[n]\setminus\{k,l\}}$ i.e., they are not adjacent in the Markov network over $\hat{Z}$. For each $\hat{\theta}$, by \citet{lin1997factorizing}, we have
\begin{equation}\label{eq:cross_de_proof}\frac{\partial^2\log p(\hat{Z};\hat{\theta})}{\partial \hat{Z}_k \partial \hat{Z}_l} = 0.
\end{equation}
To see what it implies, we find the first-order derivative of Eq. \eqref{Eq:ZtoZtile}:
\[
\frac{\partial\log p(\hat{Z};\hat{\theta})}{\partial \hat{Z}_k} = \sum_{i=1}^n \frac{\partial \log p({Z};\theta)}{\partial  Z_i}\frac{\partial Z_i}{\partial \hat{Z}_k} + \frac{\partial\log|\det J_v|}{\partial \hat{Z}_k}.
\]
Let
\[\eta(\theta) \coloneqq \log p(Z;\theta),\quad \eta'_i(\theta) \coloneqq \frac{\partial \log p(Z;\theta)}{\partial  Z_i},\quad \eta''_{ij}(\theta) \coloneqq \frac{\partial^2 \log p({Z};\theta)}{\partial Z_i \partial Z_j},\quad h'_{i,l} \coloneqq \frac{\partial  Z_i}{\partial \hat{Z}_l},\quad \text{and}\quad h''_{i,kl} \coloneqq \frac{\partial^2 Z_i}{\partial \hat{Z}_k \partial \hat{Z}_l}.\]
We then derive the second-order derivative w.r.t. $\hat{Z}_k$ and $\hat{Z}_l$ and apply Eq. \eqref{eq:cross_de_proof}:
\begin{flalign} \nonumber
0 = &\sum_{j=1}^n \sum_{i=1}^n \frac{\partial^2 \log p({Z};\theta)}{\partial Z_i \partial Z_j} \frac{\partial  Z_j}{\partial \hat{Z}_l} \frac{\partial Z_i}{\partial \hat{Z}_k} + 
\sum_{i=1}^n \frac{\partial \log p({Z};\theta)}{\partial  Z_i}\frac{\partial^2 Z_i}{\partial \hat{Z}_k \partial \hat{Z}_l} 
+ \frac{\partial^2\log|\det J_v|}{\partial \hat{Z}_k \partial \hat{Z}_l} \\  \nonumber
=& \sum_{i=1}^n \frac{\partial^2 \log p({Z};\theta)}{\partial Z_i^2} \frac{\partial  Z_i}{\partial \hat{Z}_l} \frac{\partial Z_i}{\partial \hat{Z}_k} +  \sum_{j=1}^n \sum_{\substack{i:\{Z_j,Z_i\}\in \mathcal{E}(\mathcal{M}_Z)}} \frac{\partial^2 \log p({Z};\theta)}{\partial Z_i \partial Z_j} \frac{\partial  Z_j}{\partial \hat{Z}_l} \frac{\partial Z_i}{\partial \hat{Z}_k} \\
& \qquad +\sum_{i=1}^n \frac{\partial \log p({Z};\theta)}{\partial  Z_i}\frac{\partial^2 Z_i}{\partial \hat{Z}_k \partial \hat{Z}_l} + \frac{\partial^2\log|\det J_v|}{\partial \hat{Z}_k \partial \hat{Z}_l} \\ \label{Eq:2nd}
=& \sum_{i=1}^n \eta''_{ii}(\theta) h'_{i,l} h'_{i,k} + \sum_{j=1}^n \sum_{i:\{Z_j,Z_i\}\in \mathcal{E}(\mathcal{M}_Z)} \eta''_{ij}(\theta) h'_{j,l} h'_{i,k} + \sum_{i=1}^n \eta'_i(\theta) h''_{i,kl} + \frac{\partial^2\log|\det J_v|}{\partial \hat{Z}_k \partial \hat{Z}_l}.
\end{flalign}
Recall that $\mathcal{E}(\mathcal{M}_Z)$ denotes the set of edges in the Markov network over $Z$. In the equation above, we made use of the fact that if $Z_i$ and $Z_j$ are not adjacent in the Markov network, then $\frac{\partial^2 \log p(Z;\theta)}{\partial Z_i \partial Z_j} = 0$ by \citet{lin1997factorizing}.

By Assumption A2, consider the $2n+|\mathcal{M}_Z|+1$ values of $\theta$, i.e., $\theta^{(u)}$ with $u=0,\dots,2n+|\mathcal{M}_Z|$, such that Eq. (\ref{Eq:2nd}) hold. Then, we have $2n+|\mathcal{M}_Z|+1$ such equations. Subtracting each equation corresponding to $\theta^{(u)}, u=1,\dots,2n+|\mathcal{M}_Z|$ with the equation corresponding to $\theta^{(0)}$ results in $2n+|\mathcal{M}_Z|$ equations:
\begin{equation}\label{eq:initial_diff}
0=\sum_{i=1}^n (\eta''_{ii}(\theta^{(u)}) - \eta''_{ii}(\theta^{(0)})) h'_{i,l} h'_{i,k} + \sum_{j=1}^n \sum_{i:\{Z_j,Z_i\}\in \mathcal{E}(\mathcal{M}_Z)} (\eta''_{ij} (\theta^{(u)}) - \eta''_{ij} (\theta^{(0)})) h'_{j,l} h'_{i,k}  + \sum_{i=1}^n (\eta'_i(\theta^{(u)}) - \eta'_i(\theta^{(0)}) ) h''_{i,kl},
\end{equation}
where $u=1,\dots,2n+|\mathcal{M}_Z|$. Since $p_Z$ is twice continuously differentiable, we have
\[
\eta''_{ij} (\theta^{(u)}) - \eta''_{ij} (\theta^{(0)})=\eta''_{ji} (\theta^{(u)}) - \eta''_{ji} (\theta^{(0)}),
\]
and therefore Eq. \eqref{eq:initial_diff} can be written as
\vspace{-0.1em}
\begin{flalign*}
0=& \sum_{i=1}^n (\eta''_{ii}(\theta^{(u)}) - \eta''_{ii}(\theta^{(0)})) h'_{i,l} h'_{i,k}  + \sum_{\substack{i,j: \\i < j,\\ \{Z_i,Z_j\}\in \mathcal{E}(\mathcal{M}_Z)}} (\eta''_{ij} (\theta^{(u)}) - \eta''_{ij} (\theta^{(0)})) (h'_{j,l} h'_{i,k}+h'_{i,l} h'_{j,k}) \\
& \qquad + \sum_{i=1}^n (\eta'_i(\theta^{(u)}) - \eta'_i(\theta^{(0)}) ) h''_{i,kl}.
\end{flalign*}
Consider the vectors formed by collecting the corresponding coefficients in the equation above where $u=1,\dots,2n+|\mathcal{M}_Z|$. By Assumption A2, these $2n+|\mathcal{M}_Z|$  vectors are linearly independent. Thus, for any $i$ and $j$ such that $\{Z_i,Z_j\} \in \mathcal{E}(\mathcal{M}_Z)$, we have the following equations:
\vspace{-0.1em}
\begin{flalign}
h'_{i,k} h'_{i,l} &= 0, \label{eq:c1} \\
h'_{i,k} h'_{j,l}+h'_{j,k} h'_{i,l} & = 0, \label{eq:c2} \\
h''_{i,kl} & = 0.\nonumber
\end{flalign}
It remains to show $h'_{i,k} h'_{j,l}=0$. Suppose by contradiction that
\begin{equation}\label{eq:not_zero_contradiction}
h'_{i,k} h'_{j,l}\neq 0,
\end{equation}
which implies $h'_{i,k}\neq 0$. By Eq.~\eqref{eq:c1}, we have $h'_{i,l}=0$, which, by plugging into Eq.~\eqref{eq:c2}, indicates $h'_{i,k} h'_{j,l}=0$. This is a contradiction with Eq.~\eqref{eq:not_zero_contradiction}. Thus, we must have $h'_{i,k} h'_{j,l}=0$.
\end{proof}

\vspace{-0.5em}
\section{Proof of \cref{theorem:identifiabiltiy_markov_network}}\label{sec:proof:identifiabiltiy_markov_network}
\ThmIdentifiabilityMarkovNetwork*

\vspace{-0.55em}
\begin{proof}
Let $v\coloneqq  g^{-1}\circ\hat{g}$, i.e., $Z=v(\hat{Z})$. Note that $v$ is a composition of diffeomorphisms, and hence also a diffeomorphism. Consider a specific value of $\hat{Z}$, say $\hat{z}$. Since $v$ is diffeomorphism, by \cref{lemma:nonzero_diagonal_entries}, there exists a permutation $\pi$ such that the diagonal entries of the corresponding Jacobian matrix (whose columns are permuted according to $\pi$) evaluated at $\hat{Z}=\hat{z}$ are nonzero, i.e.,
\begin{equation} \label{EqC4}
\frac{\partial  Z_i}{\partial \hat{Z}_{\pi(i)}}\bigg|_{\hat{Z}=\hat{z}} \neq 0, \quad i=1,\dots,n.
\end{equation}
Suppose that $Z_i$ and $Z_j$ are adjacent in the Markov network $\mathcal{M}_Z$ over $Z$, but $\hat{Z}_{\pi(i)}$ and $\hat{Z}_{\pi(j)}$ are not adjacent in the Markov network $\mathcal{M}_{\hat{Z}}$ over $\hat{Z}$. By \cref{proposition:relation_markov_network}, we have
\[
\frac{\partial  Z_i}{\partial \hat{Z}_{\pi(i)}}\bigg|_{\hat{Z}=\hat{z}} \frac{\partial  Z_j}{\partial \hat{Z}_{\pi(j)}}\bigg|_{\hat{Z}=\hat{z}} =0,
\]
which is clearly a contradiction with Eq. (\ref{EqC4}).

Thus, we have shown by contradiction the following lemma.
\begin{lemma}\label{lemma:super_graph}
If $Z_i$ and $Z_j$ are adjacent in the Markov network $\mathcal{M}_Z$ over $Z$, then $\hat{Z}_{\pi(i)}$ and $\hat{Z}_{\pi(j)}$ are adjacent in the Markov network $\mathcal{M}_{\hat{Z}}$ over $\hat{Z}$.
\end{lemma}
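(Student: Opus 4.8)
The plan is to argue by contradiction, exploiting the fact that the permutation $\pi$ is chosen precisely so that the Jacobian of $v \coloneqq g^{-1}\circ\hat{g}$ has a nonzero diagonal, whereas \cref{proposition:relation_markov_network} forces a product of two of those diagonal entries to vanish as soon as the associated $\hat{Z}$-pair is non-adjacent while the associated $Z$-pair is adjacent. First I would fix an arbitrary point $\hat{z}$ in the domain; since $v$ is a diffeomorphism its Jacobian at $\hat{z}$ is invertible, so \cref{lemma:nonzero_diagonal_entries} supplies a column permutation $\pi$ with $\left.\frac{\partial Z_i}{\partial \hat{Z}_{\pi(i)}}\right|_{\hat{Z}=\hat{z}}\neq 0$ for every $i$. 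This $\pi$ is exactly the permutation that the lemma (and \cref{theorem:identifiabiltiy_markov_network}) refers to.

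Next I would suppose, toward a contradiction, that $Z_i$ and $Z_j$ are adjacent in $\mathcal{M}_Z$ but $\hat{Z}_{\pi(i)}$ and $\hat{Z}_{\pi(j)}$ are \emph{not} adjacent in $\mathcal{M}_{\hat{Z}}$. Because $\pi$ is a bijection, $\pi(i)\neq\pi(j)$, so part~(b) of \cref{proposition:relation_markov_network} applies with $k=\pi(i)$ and $l=\pi(j)$: its hypotheses are precisely that $\hat{Z}_k,\hat{Z}_l$ are non-adjacent in $\mathcal{M}_{\hat{Z}}$ and that $Z_i,Z_j$ are adjacent in $\mathcal{M}_Z$. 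Hence $\frac{\partial Z_i}{\partial \hat{Z}_{\pi(i)}}\frac{\partial Z_j}{\partial \hat{Z}_{\pi(j)}}=0$ at every value of $\hat{Z}$, in particular at $\hat{z}$, which contradicts the two nonzero factors guaranteed above. Therefore $\hat{Z}_{\pi(i)}$ and $\hat{Z}_{\pi(j)}$ must be adjacent in $\mathcal{M}_{\hat{Z}}$, which is the assertion of the lemma.

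The argument is short, and there is no genuinely hard step internal to the lemma---the substantive content has already been packaged into \cref{proposition:relation_markov_network}. The only points that need care are that \cref{proposition:relation_markov_network} holds pointwise (so it may legitimately be evaluated at the same $\hat{z}$ used to construct $\pi$), and that the index bookkeeping is right, i.e.\ the pair adjacent in $\mathcal{M}_Z$ is $(i,j)$ and it maps under $\pi$ to $(\pi(i),\pi(j))$ rather than to some unrelated pair of estimated variables. Finally, I would flag that this lemma delivers only one inclusion: $\pi$ embeds $\mathcal{E}(\mathcal{M}_Z)$ injectively into $\mathcal{E}(\mathcal{M}_{\hat{Z}})$, so $|\mathcal{M}_Z|\le|\mathcal{M}_{\hat{Z}}|$. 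To finish \cref{theorem:identifiabiltiy_markov_network} one combines this with the minimality of $|\mathcal{M}_{\hat{Z}}|$ among all models achieving Eq.~(\ref{eq:matched_distribution})---together with the observation that the true model $(g,f,p_Z,\Theta)$ is itself such a model, with exactly $|\mathcal{M}_Z|$ edges---to get $|\mathcal{M}_Z|=|\mathcal{M}_{\hat{Z}}|$; the injective edge map is then a bijection, and since $\pi$ is already a vertex bijection it is a graph isomorphism $\mathcal{M}_Z\cong\mathcal{M}_{\hat{Z}}$.
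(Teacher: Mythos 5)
Your proof is correct and follows essentially the same route as the paper: fix a point $\hat{z}$, obtain the permutation $\pi$ with nonzero Jacobian diagonal via \cref{lemma:nonzero_diagonal_entries}, and derive a contradiction with part~(b) of \cref{proposition:relation_markov_network} applied at $k=\pi(i)$, $l=\pi(j)$. Your closing remarks on combining the resulting inequality $|\mathcal{M}_Z|\le|\mathcal{M}_{\hat{Z}}|$ with the sparsity (minimal-edge) constraint also match how the paper completes \cref{theorem:identifiabiltiy_markov_network}.
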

\vspace{-0.2em}
The lemma above indicates 
\begin{equation}\label{eq:denser_markov_network}
|\mathcal{M}_{\hat{Z}}|\geq |\mathcal{M}_Z|.
\end{equation}
Also, note that the true model $(g, f,p_{Z},\Theta)$ is one of the solutions that achieves Eq. (\ref{eq:matched_distribution}). Since the recovered latent Markov network $\mathcal{M}_{\hat{Z}}$ has the minimal number of edges among the solutions  that achieve Eq. \eqref{eq:matched_distribution}, we have $|\mathcal{M}_{\hat{Z}}|\leq |\mathcal{M}_Z|$, which, with Eq.~\eqref{eq:denser_markov_network}, implies $|\mathcal{M}_{\hat{Z}}|=|\mathcal{M}_Z|$.

By \cref{lemma:super_graph} and $|\mathcal{M}_{\hat{Z}}|=|\mathcal{M}_Z|$, we conclude that $Z_i$ and $Z_j$ are adjacent in $\mathcal{M}_Z$ if and only if $\hat{Z}_{\pi(i)}$ and $\hat{Z}_{\pi(j)}$ are adjacent in $\mathcal{M}_{\hat{Z}}$. That is, $\mathcal{M}_Z$ and $\mathcal{M}_{\hat{Z}}$ are isomorphic.
\end{proof}

\section{Proof of \cref{theorem:partial_disentanglement}}\label{sec:proof:partial_disentanglement}
\ThmPartialDisentanglement*
\begin{proof}
We first prove Statement (a). By \cref{proposition:relation_markov_network}, for every value of $Z$, we have
\[
\frac{\partial  Z_i}{\partial \hat{Z}_k}\frac{\partial  Z_i}{\partial \hat{Z}_l}=0.
\]
Therefore, it suffices to prove that if $\frac{\partial  Z_i}{\partial \hat{Z}_{k}}\neq 0$ for some value of $\hat{Z}$, then $\frac{\partial  Z_i}{\partial \hat{Z}_{l}}= 0$ for all values of $\hat{Z}$. That is, these nonzero entries cannot switch positions.

By \cref{theorem:identifiabiltiy_markov_network}, there exists a permutation $\pi$ of the estimated variables, denoted as $\hat{Z}_{\pi}$, such that the Markov network $\mathcal{M}_{\hat{Z}_{\pi}}$ is identical to $\mathcal{M}_Z$.\footnote{The Markov networks $\mathcal{M}_Z$ and $\mathcal{M}_{\hat{Z}_{\pi}}$ are identical in the sense that $Z_i$ and $Z_j$ are adjacent in $\mathcal{M}_Z$ if and only if $\hat{Z}_{\pi(i)}$ and $\hat{Z}_{\pi(j)}$ are adjacent in $\mathcal{M}_{\hat{Z}_{\pi}}$.} Let $\hat{Z}_{\pi(i)}$ and $\hat{Z}_{\pi(k)}$ be two estimated latent variables that are not adjacent in the Markov network $\mathcal{M}_{\hat{Z}_{\pi}}$. Now consider variable $Z_i$. Suppose by contradiction that the nonzero entries switch positions, i.e., there exist two values of $\hat{Z}$, say $\hat{z}^{(1)}$ and $\hat{z}^{(2)}$, such that
\begin{equation}\label{eq:support_switch_places_1}
\frac{\partial  Z_i}{\partial \hat{Z}_{\pi(i)}}\bigg|_{\hat{Z}=\hat{z}^{(1)}}\neq0
\end{equation}
and
\begin{equation}\label{eq:support_switch_places_2}
\frac{\partial  Z_i}{\partial \hat{Z}_{\pi(k)}}\bigg|_{\hat{Z}=\hat{z}^{(2)}}\neq0,
\end{equation}

Let $N_{Z_i}$ be a set containing the indices of the neighbors of $Z_i$ in $\mathcal{M}_Z$, and $N_{\hat{Z}_{\pi(i)}}$ be a set containing the indices of the neighbors of $Z_{\pi(i)}$ in $\mathcal{M}_{\hat{Z}_{\pi}}$. Similarly, let $S_{Z_i}$ be a set containing the indices of the variables that are not adjacent to $Z_i$ in $\mathcal{M}_Z$, and $S_{\hat{Z}_{\pi(i)}}$ be a set containing the indices of the variables that are not adjacent to of $Z_{\pi(i)}$ in $\mathcal{M}_{\hat{Z}_{\pi}}$. By definition, we have\looseness=-1
\begin{equation}\label{eq:proof_indices_neighbors}
N_{Z_i}\cup S_{Z_i}\cup \{i\}=[n],
\end{equation}
which are pairwise disjoint.

Since $\mathcal{M}_{\hat{Z}_{\pi}}$ and $\mathcal{M}_Z$ are identical, we have $N_{Z_i}=N_{\hat{Z}_{\pi(i)}}$ and $S_{Z_i}=S_{\hat{Z}_{\pi(i)}}$. Now define the following function
\[
\phi(\hat{Z})=\sum_{j\in N_{Z_i}\cup\{i\}}\left(\frac{\partial  Z_j}{\partial \hat{Z}_{\pi(i)}}\right)^2-\sum_{l\in S_{Z_i}}\sum_{j\in N_{Z_i}\cup \{i\}}\left(\frac{\partial  Z_j}{\partial \hat{Z}_{\pi(l)}}\right)^2.
\]
Plugging in $\hat{Z}=\hat{z}^{(1)}$, for $l\in S_{Z_i}=S_{\hat{Z}_{\pi(i)}}$ and $j\in N_{Z_i}\cup \{i\}$, \cref{proposition:relation_markov_network} implies
\[
\frac{\partial  Z_i}{\partial \hat{Z}_{\pi(i)}}\bigg|_{\hat{Z}=\hat{z}^{(1)}}\frac{\partial  Z_j}{\partial \hat{Z}_{\pi(l)}}\bigg|_{\hat{Z}=\hat{z}^{(1)}}=0,
\]
which, with Eq. \eqref{eq:support_switch_places_1}, indicates
\[
\frac{\partial  Z_j}{\partial \hat{Z}_{\pi(l)}}\bigg|_{\hat{Z}=\hat{z}^{(1)}}=0.
\]
Substituting the above equation and  Eq. \eqref{eq:support_switch_places_1} into function $\phi$,
we have
\begin{equation}\label{eq:f_larger_than_zero}
\phi(\hat{Z})|_{\hat{Z}=\hat{z}^{(1)}}=\sum_{j\in N_{Z_i}\cup\{i\}}\left(\frac{\partial  Z_j}{\partial \hat{Z}_{\pi(i)}}\bigg|_{\hat{Z}=\hat{z}^{(1)}}\right)^2\geq \left(\frac{\partial  Z_i}{\partial \hat{Z}_{\pi(i)}}\bigg|_{\hat{Z}=\hat{z}^{(1)}}\right)^2>0.
\end{equation}
Now plug in $\hat{Z}=\hat{z}^{(2)}$. For $j\in N_{Z_i}\cup \{i\}$, \cref{proposition:relation_markov_network} implies
\[
\frac{\partial  Z_j}{\partial \hat{Z}_{\pi(i)}}\bigg|_{\hat{Z}=\hat{z}^{(2)}}\frac{\partial  Z_i}{\partial \hat{Z}_{\pi(k)}}\bigg|_{\hat{Z}=\hat{z}^{(2)}}=0,
\]
which, with Eq. \eqref{eq:support_switch_places_2}, indicates
\[
\frac{\partial  Z_j}{\partial \hat{Z}_{\pi(i)}}\bigg|_{\hat{Z}=\hat{z}^{(2)}}=0.
\]
Substituting the above equation and  Eq. \eqref{eq:support_switch_places_2} into function $\phi$,
we have
\begin{equation}\label{eq:f_smaller_than_zero}
\phi(\hat{Z})|_{\hat{Z}=\hat{z}^{(2)}}=-\sum_{l\in S_{Z_i}}\sum_{j\in N_{Z_i}\cup \{i\}}\left(\frac{\partial  Z_j}{\partial \hat{Z}_{\pi(l)}}\bigg|_{\hat{Z}=\hat{z}^{(2)}}\right)^2\leq -\left(\frac{\partial  Z_i}{\partial \hat{Z}_{\pi(k)}}\bigg|_{\hat{Z}=\hat{z}^{(2)}}\right)^2<0.
\end{equation}
Since function $\phi$ is continuous (because all the partial derivatives involved are continuous) and its domain is a connected set, by applying Intermediate Value Theorem with Eqs. \eqref{eq:f_larger_than_zero} and \eqref{eq:f_smaller_than_zero}, there exists a value of $\hat{Z}$ in the domain, say $\hat{z}^{(3)}$, such that
\[
\phi(\hat{Z})|_{\hat{Z}=\hat{z}^{(3)}}=0,
\]
which, by plugging the definition of function $\phi$, implies
\[
\sum_{j\in N_{Z_i}\cup\{i\}}\left(\frac{\partial  Z_j}{\partial \hat{Z}_{\pi(i)}}\bigg|_{\hat{Z}=\hat{z}^{(3)}}\right)^2=\sum_{l\in S_{Z_i}}\sum_{j\in N_{Z_i}\cup \{i\}}\left(\frac{\partial  Z_j}{\partial \hat{Z}_{\pi(l)}}\bigg|_{\hat{Z}=\hat{z}^{(3)}}\right)^2.
\]
Note that if any of the terms in the summation on the left hand side (LHS) is nonzero, then, by \cref{proposition:relation_markov_network}, all terms in the summation on the right hand side (RHS) must be zero; in this case, LHS is nonzero but RHS equals zero, which is a contradiction. Similarly, if any of the terms in the summation on the RHS is nonzero, then, by \cref{proposition:relation_markov_network}, all terms in the summation on the LHS must be zero; in this case, RHS is nonzero but LHS equals zero, which is a contradiction. This implies that all terms in the summation on both LHS and RHS must be zero, i.e.,
\[
\frac{\partial  Z_j}{\partial \hat{Z}_{\pi(l)}}\bigg|_{\hat{Z}=\hat{z}^{(3)}}=0 \quad\textrm{for}\quad j\in N_{Z_i}\cup\{i\},l\in S_{Z_i}\cup\{i\}.
\]
Since $|S_{Z_i}\cup\{i\}|=n-|N_{Z_i}|$ by Eq. \eqref{eq:proof_indices_neighbors}, \cref{lemma:zero_submatrix} indicates that the matrix $\frac{\partial  Z}{\partial \hat{Z}_\pi}\big|_{\hat{Z}=\hat{z}^{(3)}}$ is not invertible. Thus, the (Jacobian) matrix $\frac{\partial  Z}{\partial \hat{Z}}\big|_{\hat{Z}=\hat{z}^{(3)}}$ is also not invertible, which is a contradiction because the mapping from $\hat{Z}$ to $Z$ is a diffeomorphism (specifically a a composition of diffeomorphisms).

Therefore, we have just proved Statement (a) by contradiction. Similar reasoning can be straightforwardly applied to prove Statement (b) and is omitted here.
\end{proof}

\section{Proof of \cref{theorem:identifiabiltiy_causal_variables}}\label{sec:proof:identifiabiltiy_causal_variables}
We first state the following lemma that is used to prove \cref{theorem:identifiabiltiy_causal_variables}. The proof is a straightforward consequence of Cayley–Hamilton theorem and is omitted here.
\begin{lemma}\label{lemma:matrix_inverse_as_powers}
Let $A$ be an $n\times n$ invertible matrix. Then, it can be expressed as a linear combination of the powers of $A$, i.e., 
\[
A^{-1}=\sum_{k=0}^{n-1} c_k A^k
\]
for some appropriate choice of coefficients $c_0,c_1,\dots,c_{n-1}$.
\end{lemma}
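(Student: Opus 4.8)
The plan is to invoke the Cayley--Hamilton theorem, exactly as the paper anticipates, and then use invertibility to isolate $A^{-1}$. First I would write down the characteristic polynomial of $A$, namely
\[
\chi(\lambda)=\det(\lambda I-A)=\lambda^n+b_{n-1}\lambda^{n-1}+\dots+b_1\lambda+b_0,
\]
where the coefficients $b_0,\dots,b_{n-1}$ are fixed scalars determined by $A$. By the Cayley--Hamilton theorem, substituting the matrix $A$ for $\lambda$ yields the zero matrix:
\[
A^n+b_{n-1}A^{n-1}+\dots+b_1A+b_0I=0.
\]

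The crucial observation is that the constant term satisfies $b_0=\chi(0)=\det(-A)=(-1)^n\det(A)$. Since $A$ is invertible, $\det(A)\neq 0$, and hence $b_0\neq 0$. This is the one place where the invertibility hypothesis is used, and I expect it to be the only real content of the argument; everything else is algebraic rearrangement.

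With $b_0\neq 0$ secured, I would move the lowest-order term to the other side, $b_0I=-\bigl(A^n+b_{n-1}A^{n-1}+\dots+b_1A\bigr)$, left-multiply both sides by $A^{-1}$ (using that every term on the right now carries at least one factor of $A$, so the multiplication lowers each power by one and leaves only nonnegative powers), and divide through by $b_0$. This produces
\[
A^{-1}=-\frac{1}{b_0}\bigl(A^{n-1}+b_{n-1}A^{n-2}+\dots+b_2 A+b_1 I\bigr),
\]
which is precisely the claimed linear combination of $I=A^0,A,\dots,A^{n-1}$; reading off $c_k=-b_{k+1}/b_0$ for $k=0,\dots,n-1$ (with $b_n\coloneqq 1$) gives the coefficients. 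There is no genuine obstacle here beyond verifying $b_0\neq 0$; the remaining steps are routine manipulation, so the proof is short and self-contained.
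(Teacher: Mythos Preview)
Your proposal is correct and follows exactly the approach the paper indicates: the paper states that the result is a straightforward consequence of the Cayley--Hamilton theorem and omits the details, and your argument fills in precisely those details (nonvanishing of the constant term via $\det(A)\neq 0$, then rearranging). Nothing further is needed.
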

Now consider the Markov network $\mathcal{M}_Z$ over variables $Z$. With a slight abuse of notation, let $N_{Z_i}$ be the set of neighbors of $Z_i$ in $\mathcal{M}_Z$. The following result relates a matrix to its inverse, given that the matrix satisfies certain property defined by $\mathcal{M}_Z$.\looseness=-1
\begin{proposition}\label{prop:subset_support}
Consider Markov network $\mathcal{M}_{Z}$ over $Z$. Let $N_{Z_i}$ be the set of neighbors of $Z_i$ in $\mathcal{M}_Z$, and $A$ be an $n\times n$ invertible matrix. For each $i\neq j$ where $Z_j$ is not adjacent to some nodes in $\{Z_i\}\cup (N_{Z_i}\setminus \{Z_j\})$, suppose $A_{ij}=0$. Then, $A^{-1}_{ij}=0$.
\end{proposition}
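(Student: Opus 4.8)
The plan is to reduce the statement to the fact that the zero-pattern hypothesized for $A$ is preserved under matrix multiplication, and then transfer it to $A^{-1}$ via \cref{lemma:matrix_inverse_as_powers}. Write $N[Z_i] \coloneqq \{Z_i\}\cup N_{Z_i}$ for the closed neighborhood of $Z_i$ in $\mathcal{M}_Z$. First I would reformulate the hypothesis: for $i\neq j$, saying that $Z_j$ is adjacent to every node in $\{Z_i\}\cup(N_{Z_i}\setminus\{Z_j\})$ is exactly the statement $Z_j\in\Psi_{Z_i}$, so the hypothesis ``$A_{ij}=0$ whenever $Z_j$ is not adjacent to some node in $\{Z_i\}\cup(N_{Z_i}\setminus\{Z_j\})$'' says precisely that the support of $A$ is contained in $\mathcal{S}\coloneqq\{(i,j) : i=j \text{ or } Z_j\in\Psi_{Z_i}\}$. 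By \cref{lemma:subet_neighbors}, for $i\neq j$ we have $Z_j\in\Psi_{Z_i}\iff N[Z_i]\subseteq N[Z_j]$, and this equivalence holds trivially when $i=j$; hence $(i,j)\in\mathcal{S}\iff N[Z_i]\subseteq N[Z_j]$.

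Next I would observe that $\mathcal{S}$ is transitive: if $(i,k)\in\mathcal{S}$ and $(k,j)\in\mathcal{S}$, then $N[Z_i]\subseteq N[Z_k]\subseteq N[Z_j]$, so $(i,j)\in\mathcal{S}$. Transitivity immediately implies that the set of matrices whose support lies in $\mathcal{S}$ is closed under multiplication: if $(BC)_{ij}\neq 0$ then $B_{im}C_{mj}\neq 0$ for some $m$, forcing $(i,m)\in\mathcal{S}$ and $(m,j)\in\mathcal{S}$, hence $(i,j)\in\mathcal{S}$. Since also $(i,i)\in\mathcal{S}$ for every $i$, the identity matrix has support in $\mathcal{S}$, and a straightforward induction on $k$ shows that every power $A^k$ with $k\ge 0$ has support contained in $\mathcal{S}$.

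Finally, because $A$ is invertible, \cref{lemma:matrix_inverse_as_powers} writes $A^{-1}=\sum_{k=0}^{n-1} c_k A^k$ as a linear combination of matrices each supported in $\mathcal{S}$, so $A^{-1}$ is itself supported in $\mathcal{S}$. In particular $A^{-1}_{ij}=0$ for every pair $(i,j)\notin\mathcal{S}$, i.e., for every $i\neq j$ such that $Z_j$ is not adjacent to some node in $\{Z_i\}\cup(N_{Z_i}\setminus\{Z_j\})$, which is the desired conclusion. The only genuine content is the combinatorial reformulation in the first paragraph via \cref{lemma:subet_neighbors}; once the support condition is recast as a subset relation on closed neighborhoods, transitivity of $\subseteq$ together with the Cayley--Hamilton-based \cref{lemma:matrix_inverse_as_powers} finishes the argument with no estimates, so I do not expect a serious obstacle here.
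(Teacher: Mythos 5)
Your proof is correct, and it follows the same overall reduction as the paper's: show that the hypothesized zero pattern is preserved by all powers $A^k$, then transfer it to $A^{-1}$ via \cref{lemma:matrix_inverse_as_powers}. Where you differ is in how the closure step is established. The paper proves the inductive step by contradiction with a direct graph case analysis: assuming $A^{k+1}_{ij}\neq 0$, it picks $m$ with $A^k_{im}A_{mj}\neq 0$ and derives adjacency facts that contradict either ``$Z_j$ not adjacent to $Z_i$'' or ``$Z_j$ not adjacent to some $Z_l\in N_{Z_i}\setminus\{Z_j\}$'', splitting further into the subcases $m=l$ and $m\neq l$. You instead invoke \cref{lemma:subet_neighbors} to recast the allowed support $\mathcal{S}=\{(i,j): i=j \text{ or } Z_j\in\Psi_{Z_i}\}$ as the closed-neighborhood inclusion $N[Z_i]\subseteq N[Z_j]$, i.e., a reflexive and transitive relation, after which closure of the support pattern under matrix multiplication (and hence under taking powers and linear combinations) is immediate. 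Your route is arguably cleaner: it exposes the underlying structure (matrices supported on a preorder form an algebra containing the identity), and it handles uniformly the boundary cases $m=i$ and $m=j$, which the paper's statements (i) and (ii) implicitly assume away since for those values of $m$ the nonzero entry already contradicts the hypothesis on $A$ or the inductive hypothesis on $A^k$ directly. The paper's route keeps the proposition's proof self-contained graph reasoning without needing \cref{lemma:subet_neighbors} at this point (that lemma is used later, in the proof of \cref{theorem:identifiabiltiy_causal_variables}), at the cost of a more intricate case analysis. Both arguments are sound and yield the same statement.
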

\begin{proof}
By \cref{lemma:matrix_inverse_as_powers}, $A^{-1}$ can be expressed as linear combination of the powers of $A$. Therefore, it suffices to prove that each matrix power $A^k$ satisfies the following property: $A^{k}_{ij}=0$ for each $i\neq j$ where $Z_j$ is not adjacent to some nodes in $\{Z_i\}\cup (N_{Z_i}\setminus \{Z_j\})$. We proceed with mathematical induction on $k$. By definition, the property holds in the base case where $k=1$.

Now suppose that the property holds for $A^k$. We prove by contradiction that the property holds for $A^{k+1}$. Suppose by contradiction that $A^{k+1}_{ij}\neq 0$ for some $i\neq j$ where $Z_j$ is not adjacent to some nodes in $\{Z_i\}\cup (N_{Z_i}\setminus \{Z_j\})$. This implies that one of the following cases holds:
\vspace{-0.4em}
\begin{itemize}
\item Case (a): $Z_j$ is not adjacent to $Z_i$ in $\mathcal{M}_{Z}$.
\item Case (b): There exists $Z_l\in N_{Z_i}\setminus \{Z_j\}$ such that $Z_j$ and $Z_l$ are not adjacent in $\mathcal{M}_{Z}$.
\end{itemize}
Since $A^{k+1}_{ij}=\sum_{r=0}^{n}A^k_{ir}A_{rj}$, the assumption $A^{k+1}_{ij}\neq 0$ implies that there must exist $m$ such that $A^k_{im}A_{mj}\neq 0$, i.e., $A^k_{im}\neq 0$ and $A_{mj}\neq 0$. Since both $A^k$ and $A$ satisfy the property, this indicates (i) $Z_m$ is adjacent to $Z_i$ and all nodes in $N_{Z_i}\setminus\{Z_m\}$, and (ii) $Z_j$ is adjacent to $Z_m$ and all nodes in $N_{Z_m}\setminus\{Z_j\}$. We consider the following cases:
\vspace{-0.4em}
\begin{itemize}
\item Case of $m=l$: By (ii), $Z_j$ is adjacent to $Z_l$, which contradicts Case (b) above. Also, we know that $Z_l$ is adjacent to $Z_i$ by (i), which indicates that $Z_i$ is adjacent to $Z_j$, contradicting Case (a) above.
\item Case of $m\neq l$: By (i) and (ii), $Z_m$ is adjacent to $Z_i$ and $Z_j$ is adjacent to $Z_m$, implying that $Z_i$ and $Z_j$ are adjacent, which is contradictory with Case (a) above. Furthermore, since $Z_l$ is a neighbor of $Z_i$, we know that $Z_m$ and $Z_l$ are adjacent by (i). Also, by (ii), $Z_j$ is adjacent to $Z_l$, which contradicts Case (b) above.
\end{itemize}
\vspace{-0.2em}
In each of the cases above, there is a contradiction.
\end{proof}

Before proving \cref{theorem:identifiabiltiy_causal_variables}, we first establish a weaker form of the identifiability result below. This result may be of interest on its own, as it provides useful information for disentanglement.
\begin{proposition}\label{proposition:weaker_identifiabiltiy_causal_variables}
Let the observations be sampled from the data generating process in Eq. (\ref{eq:data_generating_process}). Suppose that Assumptions A1 and A2 from Theorem 1 hold. Suppose also that we learn $(\hat{g}, \hat{f},p_{\hat{Z}},\hat{\Theta})$ to achieve Eq. (\ref{eq:matched_distribution})  {with
the minimal number of edges of the Markov network $\mathcal{M}_{\hat{Z}}$ over $\hat{Z}$}. Then, there exists a permutation $\pi$ of the estimated latent variables, denoted as $\hat{Z}_{\pi}$, such that each $\hat{Z}_{\pi(i)}$ is solely a function of a subset of $\{Z_i\}\cup\Psi_{Z_i}$.
\end{proposition}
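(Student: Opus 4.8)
The plan is to pin down the zero pattern of the Jacobian of $v \coloneqq g^{-1}\circ\hat{g}$ (so that $Z = v(\hat{Z})$) and then transfer this pattern to the inverse Jacobian $\tfrac{\partial \hat{Z}}{\partial Z}$ via \cref{prop:subset_support}. First I would invoke \cref{theorem:identifiabiltiy_markov_network} (through its proof via \cref{lemma:nonzero_diagonal_entries}) to fix a permutation $\pi$ of the estimated latents for which (i) $Z_i$ and $Z_j$ are adjacent in $\mathcal{M}_Z$ if and only if $\hat{Z}_{\pi(i)}$ and $\hat{Z}_{\pi(j)}$ are adjacent in $\mathcal{M}_{\hat{Z}}$, and (ii) $\partial Z_i / \partial \hat{Z}_{\pi(i)}$ is not identically zero. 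By \cref{theorem:partial_disentanglement}, whether $Z_a$ is a function of $\hat{Z}_{\pi(j)}$ (i.e.\ $\partial Z_a/\partial \hat{Z}_{\pi(j)}\not\equiv 0$) is a fixed property that does not switch on and off across samples, so the row and column ``supports'' of the Jacobian used below are well defined.

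Next I would read off two structural facts. From \cref{theorem:partial_disentanglement}(a) with (i): if $Z_i,Z_j$ are non-adjacent in $\mathcal{M}_Z$ then no $Z_a$ is a function of both $\hat{Z}_{\pi(i)}$ and $\hat{Z}_{\pi(j)}$; hence the row support $R_a\coloneqq\{\,j : \partial Z_a/\partial\hat{Z}_{\pi(j)}\not\equiv 0\,\}$ induces a clique in $\mathcal{M}_Z$, and $a\in R_a$ by (ii). From \cref{theorem:partial_disentanglement}(b) with (i): if $\hat{Z}_{\pi(i)},\hat{Z}_{\pi(j)}$ are non-adjacent in $\mathcal{M}_{\hat{Z}}$ and $Z_a,Z_b$ are adjacent in $\mathcal{M}_Z$, then at most one of $Z_a,Z_b$ is a function of $\hat{Z}_{\pi(i)}$ or $\hat{Z}_{\pi(j)}$; hence, whenever $Z_j$ has a non-neighbor in $\mathcal{M}_Z$, the column support $C_j\coloneqq\{\,a : \partial Z_a/\partial\hat{Z}_{\pi(j)}\not\equiv 0\,\}$ induces an independent set in $\mathcal{M}_Z$, and $j\in C_j$ by (ii).

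Now I would combine the two. Suppose $\partial Z_a/\partial\hat{Z}_{\pi(j)}\not\equiv 0$ with $a\neq j$, and suppose for contradiction that $Z_j\notin\Psi_{Z_a}$. Since $a,j\in R_a$ and $R_a$ is a clique, $Z_a$ and $Z_j$ are adjacent in $\mathcal{M}_Z$; combined with $Z_j\notin\Psi_{Z_a}$, this forces the existence of a neighbor $Z_m$ of $Z_a$ in $\mathcal{M}_Z$, $Z_m\neq Z_j$, that is non-adjacent to $Z_j$. In particular $Z_j$ has a non-neighbor, so $C_j$ is an independent set; but $a,j\in C_j$ and $Z_a,Z_j$ are adjacent, a contradiction. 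Hence $Z_j\in\Psi_{Z_a}$, which is exactly the statement that $\partial Z_a/\partial\hat{Z}_{\pi(j)}\equiv 0$ for every $a\neq j$ with $Z_j\notin\Psi_{Z_a}$. Finally, writing $M$ for the column-permuted Jacobian with $M_{aj}=\partial Z_a/\partial\hat{Z}_{\pi(j)}$ (invertible at every point, since $v$ is a diffeomorphism), the vanishing just obtained is precisely the hypothesis of \cref{prop:subset_support}, since ``$Z_j$ is not adjacent to some node in $\{Z_a\}\cup(N_{Z_a}\setminus\{Z_j\})$'' is equivalent to $Z_j\notin\Psi_{Z_a}$ for $j\neq a$. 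Applying \cref{prop:subset_support} pointwise yields $(M^{-1})_{aj}=\partial\hat{Z}_{\pi(a)}/\partial Z_j\equiv 0$ whenever $a\neq j$ and $Z_j\notin\Psi_{Z_a}$, i.e.\ each $\hat{Z}_{\pi(a)}$ is solely a function of a subset of $\{Z_a\}\cup\Psi_{Z_a}$.

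I expect the crux to be the bookkeeping in the middle step: one has to use \emph{both} the row supports (cliques, from \cref{theorem:partial_disentanglement}(a)) and the column supports (independent sets, from \cref{theorem:partial_disentanglement}(b)) of the same Jacobian, glue them together through the nonzero-diagonal property (ii), and correctly align the abstract adjacency condition in \cref{prop:subset_support} with membership in the intimate neighbor set $\Psi_{Z_a}$. Once the zero pattern of $M$ is established, the passage to $M^{-1}$, and hence to $\tfrac{\partial\hat{Z}}{\partial Z}$, is immediate.
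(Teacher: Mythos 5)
Your proposal is correct and follows essentially the same route as the paper's proof: fix the permutation with matching Markov networks and nonzero diagonal from Theorem \ref{theorem:identifiabiltiy_markov_network}, use Theorem \ref{theorem:partial_disentanglement}(a) and (b) to show $\partial Z_i/\partial \hat{Z}_{\pi(j)}\equiv 0$ whenever $j\neq i$ and $Z_j\notin\Psi_{Z_i}$, and then transfer this zero pattern to $\partial\hat{Z}_\pi/\partial Z$ via \cref{prop:subset_support} and the Inverse Function Theorem. Your clique/independent-set packaging of the row and column supports is just a mild reformulation of the paper's two-case argument (non-adjacent $Z_j$, and adjacent $Z_j$ missing some neighbor $Z_k\in N_{Z_i}\setminus\{Z_j\}$), and your explicit observation that the hypothesis of \cref{prop:subset_support} is exactly "$Z_j\notin\Psi_{Z_i}$" matches what the paper uses implicitly.
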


\begin{proof}
We first prove a simpler case: there exists a permutation $\pi$ of the estimated latent variables, denoted as $\hat{Z}_\pi$, such that $Z_i$ is solely a function of $\hat{Z}_{\pi(i)}$ and a subset of $\{\hat{Z}_{\pi(r)} \,|\, Z_r \in{\Psi}_{Z_i}\}$.

By \cref{theorem:identifiabiltiy_markov_network} and its proof, there exists a permutation $\pi$ of the estimated variables, denoted as $\hat{Z}_{\pi}$, such that the Markov network $\mathcal{M}_{\hat{Z}_{\pi}}$ over $\hat{Z}_{\pi}$ is identical to $\mathcal{M}_Z$,\footnote{The Markov networks $\mathcal{M}_Z$ and $\mathcal{M}_{\hat{Z}_{\pi}}$ are identical in the sense that $Z_i$ and $Z_j$ are adjacent in $\mathcal{M}_Z$ if and only if $\hat{Z}_{\pi(i)}$ and $\hat{Z}_{\pi(j)}$ are adjacent in $\mathcal{M}_{\hat{Z}_{\pi}}$.} and that
\[
\frac{\partial  Z_i}{\partial \hat{Z}_{\pi(i)}} \neq 0, \quad i=1,\dots,n.
\]
Clearly, each variable $Z_i$ is a function of $\hat{Z}_{\pi(i)}$.

We first show that if $Z_j$ is not adjacent to $Z_i$ in $\mathcal{M}_{Z}$, then $Z_i$ cannot be a function of $\hat{Z}_{\pi(j)}$. Since $Z_i$ and $Z_j$ are not adjacent in $\mathcal{M}_{Z}$, we know that $\hat{Z}_{\pi(i)}$ and $\hat{Z}_{\pi(j)}$ are not adjacent in $\mathcal{M}_{{\hat{Z}_{\pi}}}$. By \cref{theorem:partial_disentanglement}, $Z_i$ is a function of at most one of $\hat{Z}_{\pi(i)}$ and $\hat{Z}_{\pi(j)}$, which implies that $Z_i$ cannot be a function of $\hat{Z}_{\pi(j)}$, because we have shown that $Z_i$ is a function of $\hat{Z}_{\pi(i)}$.

To refine further, now suppose that $Z_j$ is adjacent to $Z_i$, but not adjacent to some $Z_k\in N_{Z_i}\setminus\{Z_j\}$. Since $\mathcal{M}_{Z}$ and $\mathcal{M}_{{\hat{Z}_{\pi}}}$ are identical, $\hat{Z}_{\pi(j)}$ is also not adjacent to $\hat{Z}_{\pi(k)}$ in $\mathcal{M}_{\hat{Z}_{\pi}}$. Since $Z_i$ and $Z_k$ are adjacent in $\mathcal{M}_{Z}$, 
by \cref{theorem:partial_disentanglement}, at most one of them is a function of $\hat{Z}_{{\pi(j)}}$ or $\hat{Z}_{\pi(k)}$. This implies that $Z_i$ cannot be a function of $\hat{Z}_{\pi(j)}$, because we have shown that $Z_k$ is a function of $\hat{Z}_{\pi(k)}$.

Therefore, we have just shown that $Z_i$ is solely a function of $\hat{Z}_{\pi(i)}$ and a subset of $\{\hat{Z}_{\pi(r)} \,|\, Z_r \in{\Psi}_{Z_i}\}$. Now consider variable $Z_l\not\in \{Z_i\}\cup{\Psi}_{Z_i}$. Since $Z_i$ is not a function of $\hat{Z}_{\pi(l)}$, we have
\[
\left(\frac{\partial Z}{\partial \hat{Z}_{\pi}}\right)_{il}=\frac{\partial Z_i}{\partial \hat{Z}_{\pi(l)}}=0.
\]
By applying Proposition \ref{prop:subset_support} with matrix $\frac{\partial Z}{\partial \hat{Z}_{\pi}}$, we have
\[
\left(\frac{\partial Z}{\partial \hat{Z}_{\pi}}\right)^{-1}_{il}=0,
\]
which, by Inverse Function Theorem, implies
\[
\frac{\partial \hat{Z}_{\pi(i)}}{\partial Z_l}=\left(\frac{\partial \hat{Z}_\pi}{\partial Z}\right)_{il}=\left(\frac{\partial Z}{\partial \hat{Z}_{\pi}}\right)^{-1}_{il}=0.
\]
Since the above equation holds for all values of $Z$, we conclude that $\hat{Z}_{\pi(i)}$ cannot be a function of $Z_l$. 
\end{proof}

We are now ready to prove our main identifiability result of the latent causal variables. The only difference with \cref{proposition:weaker_identifiabiltiy_causal_variables} is that the following result shows that $\hat{Z}_{\pi(i)}$ must be a function of $Z_i$ for  permutation $\pi$.
\ThmIdentifiabilityCausalVariables*
\begin{proof}
By \cref{proposition:weaker_identifiabiltiy_causal_variables},  there exists a permutation $\sigma$ of the estimated latent variables, denoted as $\hat{Z}_{\sigma}$, such that each $\hat{Z}_{\sigma(i)}$ is solely a function of a subset of $\{Z_i\}\cup\Psi_{Z_i}$.

By \cref{lemma:nonzero_diagonal_entries}, there exists a permutation $\pi$ such that the diagonal entries of the Jacobian matrix $\frac{\partial \hat{Z}_\pi}{\partial Z}$ are nonzero. Considering the transformation from $\hat{Z}_{\sigma}$ to $\hat{Z}_{\pi}$, the variables are transformed according to $\pi\circ\sigma^{-1}$, which is also a permutation. Note that any permutation (on finitely many elements) can be decomposed into disjoint cyclic subpermutations \citep{ehrlich2013fundamental}. These subpermutations may include trivial ones, where a single element remains in its original position.

Consider variable $\hat{Z}_{\pi(i)}$. Suppose that it was involved in a trivial cyclic subpermutation (from $\hat{Z}_{\sigma}$ to $\hat{Z}_{\pi}$), i.e., $\pi(i)=\sigma(i)$. Since the diagonal entries of $\frac{\partial \hat{Z}_{\pi}}{\partial Z}$ are nonzero by definition, clearly $\hat{Z}_{\pi(i)}$ is a function of $Z_i$. Also, we have $\frac{\partial \hat{Z}_{\pi(i)}}{\partial Z}=\frac{\partial \hat{Z}_{\sigma(i)}}{\partial Z}$, which indicates that $\hat{Z}_{\pi(i)}$ is solely a function of $Z_i$ and a subset of $\Psi_{Z_i}$.

Now suppose that $\hat{Z}_{\pi(i)}$ was involved in a nontrivial cyclic subpermutation (from $\hat{Z}_{\sigma}$ to $\hat{Z}_{\pi}$). That is, there exists a sequence $j_1,\dots,j_k$ such that 
\begin{equation}\label{eq:proof_identifiability_permutation_indices}
\pi(j_1)=\sigma(i),\quad\pi(j_2)=\sigma(j_1),\quad \pi(j_3)=\sigma(j_2),\quad \dots,\quad \pi(j_k)=\sigma(j_{k-1}),\quad\pi(i)=\sigma(j_k).
\end{equation}
Since the diagonal entries of $\frac{\partial \hat{Z}_\pi}{\partial Z}$ are nonzero by definition, we have
\begin{equation}\label{eq:proof_identifiability_nonzero_derivative}
\frac{\partial\hat{Z}_{\sigma(i)}}{\partial Z_{j_1}}\neq 0,\quad \frac{\partial \hat{Z}_{\sigma(j_1)}}{\partial Z_{j_2}}\neq 0,\quad \frac{\partial \hat{Z}_{\sigma(j_2)}}{\partial Z_{j_3}}\neq 0,\quad\dots,\quad \frac{\partial \hat{Z}_{\sigma(j_{k-1})}}{\partial Z_{j_k}}\neq 0,\quad \frac{\partial \hat{Z}_{\sigma(j_k)}}{\partial Z_{i}}\neq 0,
\end{equation}
which indicate
\[
Z_{j_1}\in\Psi_{Z_i},\quad Z_{j_2}\in\Psi_{Z_{j_1}},\quad Z_{j_3}\in\Psi_{Z_{j_2}},\quad\dots,\quad Z_{j_k}\in\Psi_{Z_{j_{k-1}}},\quad Z_{i}\in\Psi_{Z_{j_k}}.
\]
By \cref{lemma:subet_neighbors} and the above equation, we have
\[
\{Z_i\}\cup N_{Z_i}
\,\,\subseteq\,\, \{Z_{j_1}\}\cup N_{Z_{j_1}}
\,\,\subseteq\,\, \{Z_{j_2}\}\cup N_{Z_{j_2}}
\,\,\subseteq\,\, \{Z_{j_3}\}\cup N_{Z_{j_3}}
\,\,\subseteq\,\, \dots
\,\,\subseteq\,\, \{Z_{j_k}\}\cup N_{Z_{j_k}}
\,\,\subseteq\,\, \{Z_i\}\cup N_{Z_i}
\]
and thus
\[
\{Z_i\}\cup N_{Z_i}
\,\,=\,\, \{Z_{j_1}\}\cup N_{Z_{j_1}}
\,\,=\,\, \{Z_{j_2}\}\cup N_{Z_{j_2}}
\,\,=\,\, \{Z_{j_3}\}\cup N_{Z_{j_3}}
\,\,=\,\, \dots
\,\,=\,\, \{Z_{j_k}\}\cup N_{Z_{j_k}}.
\]
Applying \cref{lemma:intimate_neighbors} with the above equation implies
\[
\{Z_i\}\cup \Psi_{Z_i}
\,\,=\,\, \{Z_{j_1}\}\cup \Psi_{Z_{j_1}}
\,\,=\,\, \{Z_{j_2}\}\cup \Psi_{Z_{j_2}}
\,\,=\,\, \{Z_{j_3}\}\cup \Psi_{Z_{j_3}}
\,\,=\,\, \dots
\,\,=\,\, \{Z_{j_k}\}\cup \Psi_{Z_{j_k}}.
\]
Recall that, by definition, $\hat{Z}_{\sigma(j_k)}$ is solely a function of a subset of $\{Z_{j_k}\}\cup\Psi_{Z_{j_k}}$. By the equation above, $\hat{Z}_{\sigma(j_k)}$ is solely a function of a subset of $\{Z_i\}\cup\Psi_{Z_i}$. Since we have $\pi(i)=\sigma(j_k)$ by Eq. \eqref{eq:proof_identifiability_permutation_indices}, $\hat{Z}_{\pi(i)}$ is solely a function of a subset of $\{Z_i\}\cup\Psi_{Z_i}$. Furthermore, we have $\frac{\partial\hat{Z}_{\pi(i)}}{\partial Z_i}=\frac{\partial\hat{Z}_{\sigma(j_k)}}{\partial Z_i}\neq 0$ by Eq. \eqref{eq:proof_identifiability_nonzero_derivative}, which implies that $\hat{Z}_{\pi(i)}$ is a function of $Z_i$. Therefore, $\hat{Z}_{\pi(i)}$ is solely a function of $Z_i$ and a subset of $\Psi_{Z_i}$.
\end{proof}

\section{Proof of \cref{theorem:identifiabiltiy_ICA}}\label{sec:proof:identifiabiltiy_ICA}
\ThmIdentifiabilityICA*
\begin{proof}
Suppose by contradiction that $(\hat{g}, \hat{f},p_{\hat{Z}},\hat{\Theta})$ achieves Eq. (\ref{eq:matched_distribution}). By assumption, the components of $\hat{Z}$ are independent in each domain, indicating that the Markov network $\mathcal{M}_{\hat{Z}}$ is an empty graph. By the same reasoning in the proof of \cref{theorem:identifiabiltiy_markov_network} (specifically \cref{lemma:super_graph}), there exists a permutation $\pi$ such that: if $Z_i$ and $Z_j$ are adjacent in $\mathcal{M}_Z$, then $\hat{Z}_{\pi(i)}$ and $\hat{Z}_{\pi(j)}$ are adjacent in $\mathcal{M}_{\hat{Z}}$. Since $\mathcal{M}_{\hat{Z}}$ is an empty graph, this implies that $\mathcal{M}_{Z}$ is also an empty graph. Under Assumptions~\ref{assump:adjacency_faith} and \ref{assump:sucf}, \cref{proposition:moral_graph} indicates that the undirected graph defined by $\mathcal{M}_Z$ is the moralized graph of $\mathcal{G}_Z$. Therefore, the moralized graph of $\mathcal{G}_Z$, and thus $\mathcal{G}_Z$ itself, are empty, contradicting the assumption that $\mathcal{G}_{Z}$ is not an empty graph.
\end{proof}

\section{Proof of \cref{lem:moral_sub} and \cref{proposition:moral_graph}}\label{sec:proof_thm:moral_sub}

\LemMoralSub*

\begin{proof}
Let $Z_j$ and $Z_k$, $j \neq k$ be two variables that are not adjacent in the moralized graph of $\mathcal{G}_Z$. Then it suffices to show that $\{Z_j,Z_k\} \notin \mathcal{E}(\mathcal{M}_Z)$. Because they are not adjacent in the moralized graph of $\mathcal{G}_Z$, they must not be adjacent in $\mathcal{G}_Z$ and must not share a common child in $\mathcal{G}_Z$. Thus, $Z_j$ and $Z_k$ are d-separated conditioning on $Z_{[n]\setminus\{j,k\}}$, which implies the conditional independence $Z_j\independent Z_k \mid Z_{[n]\setminus\{j,k\}}$ based on the Markov assumption on $(\mathcal{G}_Z, P_{Z;\theta})$. Then we have $\{Z_j,Z_k\} \notin \mathcal{E}(\mathcal{M}_Z)$.
\end{proof}

\ThmMoralGraph*

\begin{proof} We prove both directions as follows.

\textbf{Sufficient condition.} \ \ We prove it by contradiction. Suppose that the structure defined by $\mathcal{M}_Z$ is not equivalent to the moralized graph of $\mathcal{G}_Z$. Then, according to \cref{lem:moral_sub}, there exists a pair of variables $Z_j$ and $Z_k$, $j\neq k$ that are adjacent in the moralized graph but $\{Z_j,Z_k\} \notin \mathcal{E}(\mathcal{M}_Z)$. Thus, we have $Z_j\independent Z_k \mid Z_{[n]\setminus\{j, k\}}$. Then we consider the following two cases:
\begin{itemize}
    \item If variables $Z_j$ and $Z_k$ correspond to a pair of neighbors in $\mathcal{G}_Z$, then they are adjacent. Together with the conditional independence relation $Z_j\independent Z_k \mid Z_{[n]\setminus\{j, k\}}$, this implies that the SAF assumption is violated.
    \item If variables $Z_j$ and $Z_k$ correspond to a pair of non-adjacent spouses in $\mathcal{G}_Z$. Then they have an unshielded collider, indicating that the SUCF assumption is violated.
\end{itemize}

\textbf{Necessary condition.} \ \ We prove it by contradiction. Suppose SUCF or SAF is violated, we have the following two cases:
\begin{itemize}
    \item Suppose SUCF is violated, i.e., there exists an unshielded collider $Z_j \rightarrow Z_i \leftarrow Z_k$ in the DAG $\mathcal{G}_Z$ such that $Z_j\independent Z_k \mid Z_{[n]\setminus\{j, k\}}$. This conditional independence relation indicates that $\{Z_j,Z_k\} \notin \mathcal{E}(\mathcal{M}_Z)$. Since $Z_j$ and $Z_k$ are spouses, there exists an edge between them in the moralized graph of $\mathcal{G}_Z$, but is not contained in the structure defined by $\mathcal{M}_Z$, showing that they are not the same.
    \item Suppose SAF is violated, i.e., there exists a pair of neighbors $Z_j$ and $Z_k$ in the DAG $\mathcal{G}_Z$ such that $Z_j\independent Z_k \mid Z_{[n]\setminus\{j, k\}}$. This conditional independence relation indicates that $\{Z_j,Z_k\} \notin \mathcal{E}(\mathcal{M}_Z)$. Because $Z_j$ and $Z_k$ are adjacent in $\mathcal{G}_Z$, clearly they are also adjacent in the moralized graph of $\mathcal{G}_Z$. However, the edge between them is not contained in the structure defined by $\mathcal{M}_Z$, showing that they are not the same.
\end{itemize}
Thus, when SUCF or SAF is violated, the structure defined by $\mathcal{M}_Z$ is the moralized graph of the true DAG $\mathcal{G}_Z$.
\end{proof}

\end{document}